\DeclareMathOperator{\tr}{tr}
\DeclareMathOperator{\Var}{Var}
\DeclareMathOperator{\sign}{sign}
\DeclareMathOperator{\Lap}{Lap}
\newtheorem{thm}{Theorem}
\newtheorem{ass}{Assumption}
\newtheorem{defi}{Definition}
\newtheorem{lem}{Lemma}
\newtheorem{rmk}{Remark}
\newtheorem{prop}{Proposition}
\newcommand{\norm}[1]{\left\lVert#1\right\rVert}
\title{Learning with User-Level Local Differential Privacy}
\author{%
	Puning Zhao$^1$ \quad Li Shen$^2$ \quad Rongfei Fan$^3$ Qingming Li$^4$ \quad Huiwen Wu$^1$ \quad Jiafei Wu$^1$ \quad Zhe Liu$^1$ \\
	$^1$ Zhejiang Lab \quad $^2$ Sun Yat-Sen University \quad $^3$ Beijing Institute of Technology \quad $^4$ Zhejiang University\\
	\texttt{\{pnzhao,wujiafei,zhe.liu\}@zhejianglab.com},\\
	\texttt{mathshenli@gmail.com}, \texttt{fanrongfei@bit.edu.cn},
	\texttt{liqm@zju.edu.cn},
	\texttt{huiwen0820@outlook.com}
}
\begin{document}

\maketitle

\begin{abstract}
User-level privacy is important in distributed systems. Previous research primarily focuses on the central model, while the local models have received much less attention. Under the central model, user-level DP is strictly stronger than the item-level one. However, under the local model, the relationship between user-level and item-level LDP becomes more complex, thus the analysis is crucially different. In this paper, we first analyze the mean estimation problem and then apply it to stochastic optimization, classification, and regression. In particular, we propose adaptive strategies to achieve optimal performance at all privacy levels. Moreover, we also obtain information-theoretic lower bounds, which show that the proposed methods are minimax optimal up to logarithmic factors. Unlike the central DP model, where user-level DP always leads to slower convergence, our result shows that under the local model, the convergence rates are nearly the same between user-level and item-level cases for distributions with bounded support. For heavy-tailed distributions, the user-level rate is even faster than the item-level one.


\end{abstract}



\section{Introduction}

Differential privacy (DP) \cite{dwork2006calibrating} is one of the mainstream schemes for privacy protection. The traditional DP framework is item-level, which focuses on the privacy of each sample \cite{dwork2014algorithmic}. However, in many real-world scenarios such as federated learning \cite{kairouz2021advances,geyer2017differentially,mcmahan2018learning,wang2019beyond,wei2020federated,wei2021user,huang2023federated,fu2024differentially}, each user provides multiple samples, which need to be treated as a whole for privacy protection. Therefore, in recent years, user-level differential privacy has emerged and has received widespread attention from researchers \cite{liu2020learning,levy2021learning,ghazi2021user,ghazi2023user,liu2024user,acharya2023discrete}.

Existing research on user-level DP mainly focuses on central models \cite{liu2020learning,levy2021learning,ghazi2021user,ghazi2023user}. Much less effort has been made on local models. An exception is \cite{acharya2023discrete}, which analyzes the discrete distribution estimation problem under user-level local differential privacy (LDP). Nevertheless, user-level LDP has a much wider range of potential applications \cite{yang2020local}. Other statistical problems have been rarely explored. A challenge under the local model is to make the method suitable for all privacy levels. Under the central model, we can just let the noise scales as $1/\epsilon$, while the algorithm structure remains the same for different $\epsilon$. Nevertheless, under the local model, privatization takes place before aggregation. To achieve optimal performance, privacy mechanisms need to be tailored to each privacy level $\epsilon$. Moreover, it is also not straightforward to derive the information-theoretic lower bound, since under the local model, user-level LDP is not necessarily stronger than the item-level one, thus the item-level lower bounds do not directly lead to user-level counterparts. 

In this paper, we discuss a wide range of statistical tasks under user-level $\epsilon$-LDP. We analyze the mean estimation problem first, including one-dimensional and multi-dimensional cases. We then apply the mean estimation methods to other tasks, including stochastic optimization, classification, and regression. For each task, we provide algorithms and analyze the theoretical convergence rates. Moreover, we derive the information-theoretic lower bounds, which shows that the newly proposed methods are minimax rate optimal up to a logarithm factor. The results are shown in Table \ref{tab:comp}, in which the non-private term is omitted for simplicity. Under central DP, user-level DP is strictly stronger than item-level one, and thus always leads to a slower convergence rate \cite{levy2021learning}. On the contrary, under the local model, the same convergence rates are derived between user-level and item-level cases for distributions with bounded support. If the distribution is heavy-tailed, then perhaps surprisingly, the user-level rate is even faster, such as those shown in the second row in Table \ref{tab:comp}.

To achieve optimal performance at all privacy levels, we design an adaptive method tailored to each $\epsilon$. For example, for the $d$ dimensional mean estimation problem, with $\epsilon<1$, the dataset is divided into $d$ groups to estimate each component. The number of groups then decreases with increasing $\epsilon$. With very large $\epsilon$, there is only one group, so the error becomes close to the non-private case. With the algorithm varying among different $\epsilon$, it is natural to see phase transitions in the bounds in Table \ref{tab:comp}. For the establishment of minimax lower bounds, we revisit classical minimax theory \cite{tsybakov2009introduction} to derive tight bounds on the distances between privatized samples.

\begin{table}[t]
\small 
	\begin{center}
 	\caption{\small Comparison of performance under user-level and item-level LDP.}\label{tab:comp}
	\begin{tabular}{ccc}
	\hline
	Tasks & user-level & item-level\\
	& $n$ users, $m$ samples per user & $nm$ samples\\
	\hline
	Mean, bounded &$\tilde{O}\left(\frac{d}{nm(\epsilon^2\wedge \epsilon)}\right)$&$O\left(\frac{d}{nm(\epsilon^2\wedge \epsilon)}\right)$ \cite{feldman2021lossless,chen2020breaking,asi2022optimal}\\	
	Mean, heavy-tail &$\tilde{O}\left(\frac{d\ln m}{mn(\epsilon^2\wedge \epsilon)}\vee \left(\frac{d}{m^2n(\epsilon^2\wedge \epsilon)}\right)^{1-\frac{1}{p}}\right)$ & ${O}\left(\left(\frac{d}{nm(\epsilon^2\wedge \epsilon)}\right)^{1-\frac{1}{p}}\right)$ \cite{duchi2018minimax} \tablefootnote{For heavy-tailed distribution, \cite{duchi2018minimax} analyzed the one dimensional case. We generalize it to $d$ dimensions.}\\
    Stochastic optimization &$\tilde{O}\left(\sqrt{\frac{d}{nm(\epsilon^2\wedge \epsilon)}}\right)$ & $\tilde{O}\left(\sqrt{\frac{d}{nm(\epsilon^2\wedge \epsilon)}}\right)$ \cite{duchi2013local} \tablefootnote{\cite{duchi2013local} analyzed the case with $\epsilon\leq 1/4$. We generalize it to larger $\epsilon$.} \\
	Classification & $\tilde{O}\left((mn(\epsilon^2\wedge \epsilon))^{-\frac{\beta(1+\gamma)}{2(d+\beta)}}\right)$ &$O\left((mn(\epsilon^2\wedge \epsilon))^{-\frac{\beta(1+\gamma)}{2(d+\beta)}}\right)$ \cite{berrett2019classification}\\
	Regression &$\tilde{O}\left((mn(\epsilon^2\wedge \epsilon))^{-\frac{\beta}{d+\beta}} \right)$&$O\left((mn(\epsilon^2\wedge \epsilon))^{-\frac{\beta}{d+\beta}} \right)$ \cite{berrett2021strongly}\\
	\hline
	\end{tabular}
	\end{center}
 \vspace{-0.4cm}
\end{table}

The main contributions of this paper are summarized as follows.
\begin{itemize}
    \item For the mean estimation problem, we use a two-stage approach for $d=1$. With higher dimensionality, for $\ell_\infty$ support, our method divides users into groups, and the strategy of such grouping is tailored to the privacy level $\epsilon$. We then use Kashin's representation to obtain a tight result for $\ell_2$ support. 

    \item We apply the mean estimation to the stochastic optimization problem and derive a rate of $\tilde{O}(d/(nm(\epsilon^2\wedge\epsilon))$, matches the item-level bound in \cite{duchi2013local} under the same total sample size.

    \item For nonparametric classification and regression, we divide the support into grids and apply the Hadamard transform, which is shown to be optimal under user-level LDP. 
\end{itemize}
In general, the results show that the user-level LDP requirement is similar or sometimes even weaker than the item-level one, which is crucially different from the central model. 

\section{Related Work}

\textbf{Item-level DP.} We start with mean estimation, which is a basic but important statistical task since it serves as building blocks of stochastic optimization and deep learning \cite{abadi2016deep}, which requires estimating the mean of gradients. \cite{asi2020instance,bun2019average,huang2021instance,liu2021robust,hopkins2022efficient,vargaftik2021drive} studied mean estimation under central DP. For the local model, \cite{duchi2018minimax} introduces an order optimal mean estimation method, which is then improved in \cite{li2023robustness,feldman2021lossless}. Moreover, \cite{chen2020breaking} achieved optimal communication cost. \cite{bhowmick2018protection} proposed PrivUnit, which is then shown to be optimal in constants \cite{asi2022optimal}. \cite{asi2024fast} proposed ProjUnit, which reduces the communication complexity of PrivUnit. Mean estimation can be used in other problems. For example, in stochastic optimization, various methods have been proposed under central DP requirements \cite{chaudhuri2011differentially,bassily2014private,bassily2019private,feldman2020private,asi2021private,kamath2022improved}. Under local DP, \cite{duchi2013local} proposed a stochastic gradient method, which calculates the noisy gradient from each sample and then update the model. For nonparametric statistics, \cite{duchi2013local,duchi2018minimax} shows that the nonparametric density estimation under LDP has a convergence rate of $O((n\epsilon^2)^{-\beta/(d+\beta)})$ for small $\epsilon$, which is inevitably slower than the non-private rate $O(n^{-\beta/(d+2\beta)})$ \cite{tsybakov2009introduction}. \cite{berrett2019classification} and \cite{berrett2021strongly} extend the analysis to nonparametric classification and regression problems, respectively.

\textbf{User-level DP.} Under central model, \cite{geyer2017differentially} proposes a simple clipping method. \cite{levy2021learning} designs a two-stage approach for one-dimensional mean estimation, and then extends to higher dimension using the Hadamard transform. This method is then used in stochastic optimization problems \cite{bassily2023user,liu2024user}. \cite{zhao2024huberdp} designs a multi-dimensional Huber loss minimization approach \cite{zhao2024huber} for mean estimation under user-level DP that copes better with imbalanced users and heavy-tailed distributions. Additionally, some works are focusing on black-box conversion from item-level DP to user-level, such as \cite{ghazi2021user,bun2023stability,ghazi2023user}. Under the local model, \cite{acharya2023discrete} studies the discrete distribution estimation problem. 

To the best of our knowledge, our work is the first attempt to analyze learning with user-level local DP in general. We cover a wide range of statistical problems. Unlike the central DP, under the local model, the user-level privacy requirement is not necessarily stronger than the item-level one. Moreover, in user-level central DP, a single algorithm structure is enough. However, under the local model, to make the mean squared error optimal for all privacy levels, privacy mechanisms need to be adaptive to $\epsilon$. Therefore, the algorithm design and theoretical analysis are crucially different from the central user-level DP.

\section{Preliminaries}\label{sec:pre}


Suppose there are $n$ users, and each user has $m$ identical and independently distributed (i.i.d) samples, denoted as $\mathbf{X}_{ij}\in \mathcal{X}$, $i=1,\ldots, n$, $j=1,\ldots, m$. Let $\mathbf{X}_i=\{\mathbf{X}_{i1}, \ldots, \mathbf{X}_{im} \}$ be the set of all samples stored in user $i$. Due to privacy concerns, users are unwilling to upload $\mathbf{X}_i$ directly. Instead, there is a privacy mechanism that transforms $\mathbf{X}_1,\ldots, \mathbf{X}_n$ into $n$ random variables $\mathbf{Z}_1,\ldots, \mathbf{Z}_n\in \mathcal{Z}$ with $\mathbf{Z}_i=M_i(\mathbf{X}_i, \mathbf{Z}_1,\ldots, \mathbf{Z}_{i-1})$, in which $M_i:\mathcal{X}\times \mathcal{Z}^{i-1}\rightarrow\mathcal{Z}$ is a function with random output. The user-level LDP is defined as follows.
\begin{defi}\label{def:dp}
	Given a privacy parameter $\epsilon \geq 0$, the privacy mechanism $M_i$ is user-level $\epsilon$-LDP if for all $i$, all values of $\mathbf{z}_1,\ldots, \mathbf{z}_{i-1}$, all $\mathbf{x}, \mathbf{x}'\in \mathcal{X}^m$ and all $S\subseteq \mathcal{Z}$,
	\begin{eqnarray}
		\text{P}(\mathbf{Z}_i\in S|\mathbf{X}_i=\mathbf{x}, \mathbf{Z}_{1:i-1}=\mathbf{z}_{1:i-1})\leq e^\epsilon
		\text{P}(\mathbf{Z}_i\in S|\mathbf{X}_i=\mathbf{x}', \mathbf{Z}_{1:i-1}=\mathbf{z}_{1:i-1}),
		\label{eq:def}
	\end{eqnarray}
	in which $\mathbf{Z}_i=M_i(\mathbf{X}_i, \mathbf{Z}_1,\ldots, \mathbf{Z}_{i-1})$, $\mathbf{Z}_{1:i-1}=(\mathbf{Z}_1,\ldots, \mathbf{Z}_{i-1})$, $\mathbf{z}_{1:i-1}=(\mathbf{z}_1,\ldots, \mathbf{z}_{i-1})$.	
\end{defi}
Definition \ref{def:dp} requires that the distributions of $\mathbf{Z}_i$ should not change much even if the whole local dataset $\mathbf{X}_i=\{\mathbf{X}_{i1}, \ldots, \mathbf{X}_{im} \}$ is altered. From \eqref{eq:def}, even if the adversary can observe $\mathbf{Z}_i$, it can not infer the value of $\mathbf{X}_i$ exactly. Smaller $\epsilon$ indicates stronger privacy protection since it is harder to distinguish $\mathbf{X}_i$. The difference between item-level and user-level LDP is illustrated in Figure \ref{fig:compare}. In the item-level case, each sample is transformed into a privatized one, while in the user-level case, all samples of a user are combined to generate a privatized sample. For both item-level and user-level cases, at the final step, all privatized samples are aggregated to generate the output. One natural question is how the difficulty of achieving user-level LDP compares with the item-level counterparts. Regarding this question, we have the following statements.
\begin{wrapfigure}{r}{0.45\textwidth}
\vspace{-0.7cm}
	\begin{center}
		\includegraphics[width=1.0\linewidth]{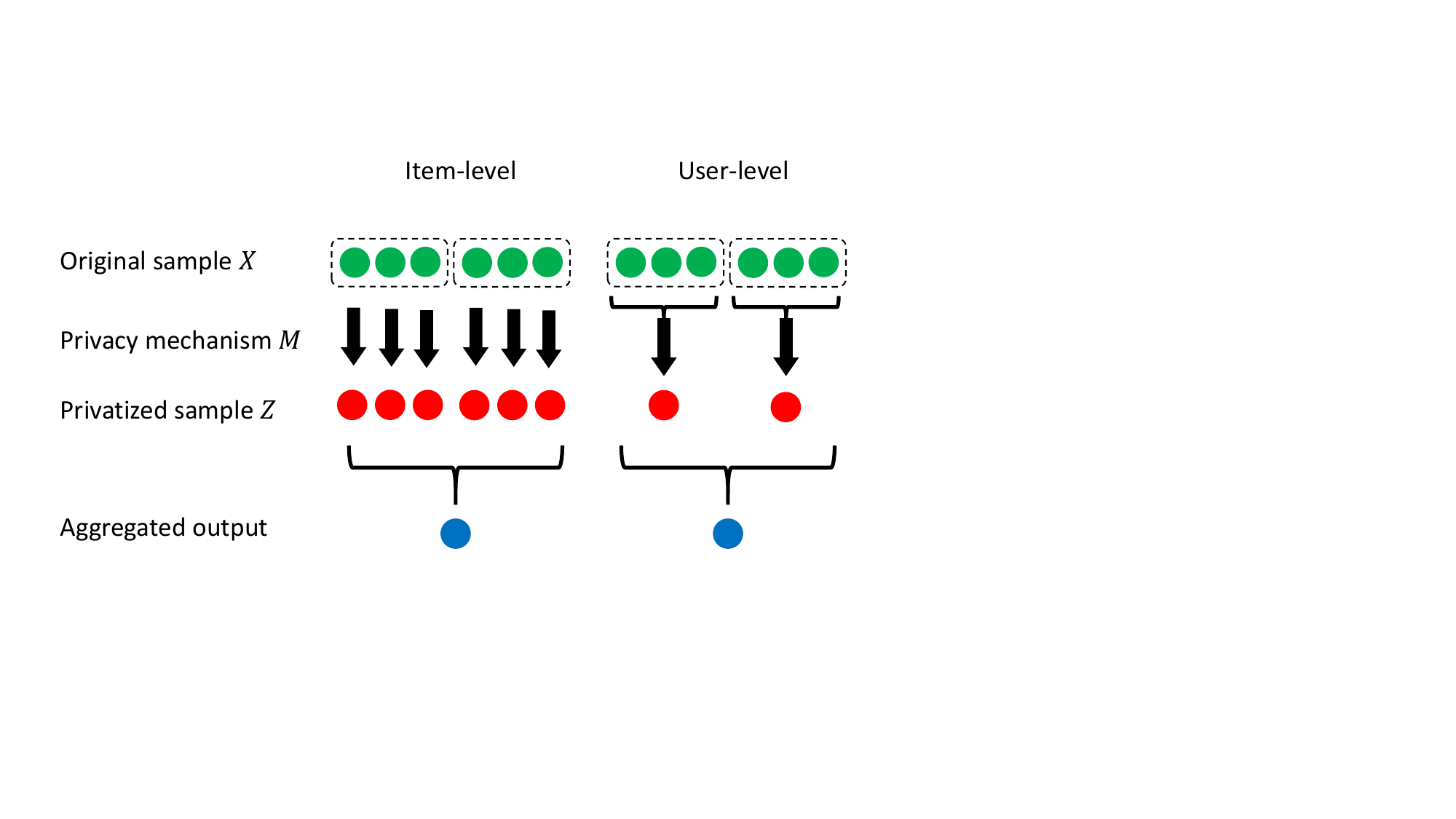}
  \vspace{-0.7cm}
		\caption{\small Comparison of item-level versus user-level LDP. Dashed rectangles represent users.}\label{fig:compare}
	\end{center}
 \vspace{-12mm}
\end{wrapfigure}
\begin{prop}\label{prop:facts}
	Based on Definition \ref{def:dp}, for any statistical problems, there are two basic facts:
	
	(1) If item-level $(\epsilon/m)$-LDP can be achieved with $nm$ samples, then user-level $\epsilon$-LDP can be achieved using $n$ users with $m$ samples per user;
	
	(2) If item-level $\epsilon$-LDP can be achieved with $n$ samples, then user-level $\epsilon$-LDP can be achieved using $n$ users with $m$ samples per user. 
\end{prop}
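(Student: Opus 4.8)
The plan is to prove both claims by explicit \emph{reductions}: given an item-level mechanism with the stated sample budget, I build a user-level $\epsilon$-LDP mechanism that reproduces the same joint distribution of released messages, so whatever accuracy guarantee held for the item-level procedure transfers verbatim to the user-level one. Nothing problem-specific is used; the argument is purely about the privacy definition plus basic composition.

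For (1), I would simulate the item-level $(\epsilon/m)$-LDP protocol on $nm$ items by relabelling the items as pairs $(i,j)$, $i\in\{1,\dots,n\}$, $j\in\{1,\dots,m\}$, and ordering them lexicographically, so all $m$ items of user $1$ come first, then those of user $2$, and so on. User $i$, who holds $\mathbf{X}_{i1},\dots,\mathbf{X}_{im}$ and has already received the earlier users' messages $\mathbf{Z}_1,\dots,\mathbf{Z}_{i-1}$ (which encode exactly the privatized outputs of items $(1,1),\dots,(i-1,m)$), runs the item-level per-item mechanisms for its own $m$ slots sequentially and sets $\mathbf{Z}_i$ to be the resulting $m$-tuple. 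Conditioned on $\mathbf{Z}_{1:i-1}$ and on the already-generated coordinates $\mathbf{Z}_{i,1:j-1}$, the map $\mathbf{x}_{ij}\mapsto \mathbf{Z}_{ij}$ is $(\epsilon/m)$-LDP; multiplying the $m$ conditional likelihood ratios (basic composition) gives, for any $\mathbf{x},\mathbf{x}'\in\mathcal{X}^m$ and any $S$, that the ratio in \eqref{eq:def} is at most $e^{m\cdot(\epsilon/m)}=e^{\epsilon}$, i.e.\ $M_i$ is user-level $\epsilon$-LDP. Since the $\mathbf{X}_{ij}$ are i.i.d., the joint law of the released messages equals that of the original $nm$-sample item-level protocol, so the server's final estimate inherits the same error.

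For (2), the reduction is even simpler: each user discards $\mathbf{X}_{i2},\dots,\mathbf{X}_{im}$ and applies the item-level $\epsilon$-LDP mechanism to $\mathbf{X}_{i1}$ alone, using the received prefix $\mathbf{Z}_{1:i-1}$ as the interaction transcript. Because $\mathbf{Z}_i$ depends on $\mathbf{X}_i$ only through $\mathbf{X}_{i1}$ and the underlying mechanism is $\epsilon$-LDP with respect to that single item, altering the whole block $\mathbf{X}_i$ changes only $\mathbf{X}_{i1}$ and hence perturbs $\text{P}(\mathbf{Z}_i\in S\mid\cdot)$ by a factor at most $e^{\epsilon}$, so $M_i$ is user-level $\epsilon$-LDP. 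The collection $\mathbf{X}_{11},\dots,\mathbf{X}_{n1}$ has exactly the distribution of $n$ item-level samples, so the accuracy guarantee of the item-level procedure with $n$ samples carries over unchanged.

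I do not expect a genuine obstacle, since both statements are essentially folklore reductions, but the one point deserving care is the composition step in (1): the item-level guarantee must be invoked \emph{conditionally} on the entire transcript generated so far (previous users' messages \emph{and} the user's own earlier coordinates), and the item ordering must be chosen so that user $i$'s slots form a contiguous block, which is precisely what lets the sequentially-interactive user-level protocol reproduce the sequentially-interactive item-level one. A secondary, purely technical point is to phrase the likelihood-ratio bound measure-theoretically (or simply cite basic composition for $\epsilon$-LDP) rather than assuming the relevant densities exist.
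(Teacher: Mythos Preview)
Your proposal is correct and matches the paper's own justification: the paper proves (1) by invoking group privacy (i.e., your composition-of-$m$-likelihood-ratios argument) and (2) by having each user pick one sample and apply the item-level $\epsilon$-LDP mechanism to it. Your write-up is more detailed about the sequential-interactive bookkeeping, but the underlying reductions are identical.
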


In the above statements, (1) holds due to the group privacy property. For (2), if a task can be solved using $n$ samples under item-level $\epsilon$-LDP, then just randomly picking a sample from each user satisfies user-level $\epsilon$-LDP. These results also suggest two baseline methods that transform item-level methods to user-level. However, these simple conversions are far from optimal. For the first one, $(\epsilon/m)$-LDP is too strong. For the second one, many samples are wasted. 

One may wonder if user-level LDP is a stronger requirement than the item-level one. In other words, if item-level $\epsilon$-LDP can be achieved with $nm$ samples, then can we achieve user-level $\epsilon$-LDP using $n$ users with $m$ samples per user? Under the central model, the answer is affirmative: user-level DP is stronger because the definition of user-level $\epsilon$-DP ensures item-level $\epsilon$-DP \cite{levy2021learning}. Nevertheless, under the local model, things become more complex. On the one hand, user-level LDP imposes stronger privacy requirements, since the distribution of the output variables can only change to a limited extent even when the local dataset is replaced as a whole. On the other hand, user-level LDP enables the encoder to obtain complete information of local datasets, thus the difficulty is somewhat reduced in this aspect. From Table \ref{tab:comp}, for many problems, with the same total sample sizes, user-level and item-level LDP yield nearly the same error bounds. If the distribution has tails, then the user-level LDP is even easier to achieve, which is perhaps surprising.



Before discussing each task in detail, we clarify some notations that will be used in subsequent sections.  Denote $a\wedge b=\min(a,b)$, $a\vee b=\max(a,b)$, and $a\lesssim b$ if there exists a constant $C$ that may depend on the constants made in problem assumptions, such that $a\leq Cb$. Conversely, $a\gtrsim b$ means $a\geq Cb$. $a\sim b$ means that $a\lesssim b$ and $a\gtrsim b$ both hold. 

\section{Mean Estimation}\label{sec:mean1}
For one-dimensional problem, we introduce a two-stage method. Despite that similar idea has also been used in central user-level DP \cite{levy2021learning}, details and theoretical analysis are different. We then extend the analysis to high-dimensional problems. To achieve optimal convergence rate for all privacy levels, our strategies are designed separately for each $\epsilon$.

\subsection{One Dimensional Case}
We start with the case such that the distribution has bounded support $\mathcal{X}=[-D, D]$ for some $D$, and introduce a two-stage method. The first stage uses half of the users to identify an interval $[L, R]$, which is much smaller than $[-D, D]$ but contains $\mu:=\mathbb{E}[X]$ with high probability. The purpose of this stage is to significantly reduce the strength of Laplacian noise needed to protect privacy, and thus reduce the negative effect on the estimation accuracy caused by privacy mechanisms. At the second stage, the algorithm then truncates the values into $[L, R]$, and adds a Laplacian noise to ensure $\epsilon$-LDP at user-level. Finally, $\mu$ can be estimated with a simple average over the other half of users. The details are provided in Algorithm \ref{alg:mean1d}.	

\begin{wrapfigure}{r}{0.58\textwidth}
\vspace{-5mm}
\begin{minipage}{0.58\textwidth}
\small
\begin{algorithm}[H]
	\caption{\small MeanEst1d: One dimensional mean estimation under user-level $\epsilon$-LDP}\label{alg:mean1d}	
	\textbf{Input:} Dataset containing $n$ users with $m$ samples per user, i.e. $X_{ij}$, $i=1,\ldots, n$, $j=1,\ldots, m$\\
	\textbf{Output:} Estimated mean $\hat{\mu}$\\
	\textbf{Parameter:} $h$, $\Delta$, $D$, $\epsilon$	
	\begin{algorithmic}[1]
		\STATE Calculate $Y_i=(1/m)\sum_{j=1}^mX_{ij}$ for $i=1,\ldots, n/2$;\label{step:gety}\\
		\STATE Divide $[-D,D]$ into $B$ bins of length $h$;\\
		\STATE $Z_{ik}=\mathbf{1}(Y_i\in B_k)+W_{ik}$ for $i=1,\ldots, n/2$, $k=1,\ldots,B$, in which $W_{ik}\sim \Lap(2/\epsilon)$;\\
		\STATE Calculate $s_k=\sum_{i=1}^{n/2}Z_{ik}$ for $k=1,\ldots,B$;\\
		\STATE Let $\hat{k}^*=\underset{k}{\arg\max}s_k$;\\
		$L=-D+(\hat{k}^*-2)h$;\\
		$R=-D+(\hat{k}^*+1)h$;\label{step:getR}\\
		\STATE $Z_i=(Y_i\vee (L-\Delta))\wedge (R+\Delta)+W_i$ for $i=n/2+1,\ldots, n$, in which $W_i\sim \Lap((3h+2\Delta)/\epsilon)$;\label{step:getZ}\\
		\STATE Calculate $\hat{\mu} = (2/n)\sum_{i=n/2+1}^{n} Z_i$;\\
		\STATE \textbf{Return} $\hat{\mu}$\label{step:return}
	\end{algorithmic}
\end{algorithm}
\end{minipage}
\vspace{-8mm}
\end{wrapfigure}

The privacy guarantee and the estimation error of Algorithm \ref{alg:mean1d} are both analyzed in Theorem \ref{thm:1dmean}. In Algorithm \ref{alg:mean1d}, $\Lap(\lambda)$ means Laplacian distribution with parameter $\lambda$, whose probability density function (pdf) is $f(u)=e^{-|u|/\lambda}/(2\lambda)$.
\begin{thm}\label{thm:1dmean}
	Algorithm \ref{alg:mean1d} is user-level $\epsilon$-LDP. If $n(\epsilon^2\wedge 1)\geq c_1\ln m$ for a constant $c_1$, then with $h=4D/\sqrt{m}$ and $\Delta=D\sqrt{\ln n/m}$, the mean squared error of one dimensional mean estimation under user-level $\epsilon$-LDP satisfies
	\begin{eqnarray}
		\mathbb{E}[(\hat{\mu}-\mu)^2]\lesssim \frac{D^2}{nm}\left(1+\frac{\ln n}{\epsilon^2}\right).
		\label{eq:1dmean}
	\end{eqnarray}
\end{thm}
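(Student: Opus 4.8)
The plan is to handle the two stages of Algorithm~\ref{alg:mean1d} separately: first verify $\epsilon$-LDP stage by stage, then bound the estimation error by conditioning on the event that stage one returns an interval $[L,R]$ that actually contains $\mu$, and finally clean up the low-probability complement.

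\textbf{Privacy.} Since the two stages use disjoint blocks of users, by Definition~\ref{def:dp} it suffices to show that each user's release is $\epsilon$-LDP given the messages released before it. For a stage-one user $i\le n/2$ the released vector is $(\mathbf{1}(Y_i\in B_1),\dots,\mathbf{1}(Y_i\in B_B))$ with an independent $\Lap(2/\epsilon)$ perturbation on each coordinate; since $Y_i$ lies in exactly one bin, replacing the whole local dataset $\mathbf{X}_i$ changes this indicator vector in at most two coordinates, so its $\ell_1$-sensitivity is $2$ and the Laplace mechanism with scale $2/\epsilon$ gives $\epsilon$-LDP. For a stage-two user $i>n/2$, conditioning on $\mathbf{Z}_{1:i-1}$ fixes $L$ and $R$, and the released scalar $(Y_i\vee(L-\Delta))\wedge(R+\Delta)+W_i$ depends on $\mathbf{X}_i$ only through the clipped term, which takes values in an interval of length $(R-L)+2\Delta=3h+2\Delta$ regardless of $\mathbf{X}_i$; hence the Laplace mechanism with scale $(3h+2\Delta)/\epsilon$ again gives $\epsilon$-LDP. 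Composing over users in the sense of Definition~\ref{def:dp} yields user-level $\epsilon$-LDP.

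\textbf{Stage one locates $\mu$ (the main obstacle).} Let $B_{k_\mu}$ be the bin containing $\mu$. Hoeffding's inequality gives $\text{P}(|Y_i-\mu|>t)\le 2e^{-mt^2/(2D^2)}$, so the choice $h=4D/\sqrt m$ makes $[\mu-h,\mu+h]\subseteq B_{k_\mu-1}\cup B_{k_\mu}\cup B_{k_\mu+1}$, forces one of these three central bins (call it $\bar k$) to carry probability mass $\gtrsim 1$ and hence expected count $\gtrsim n$, and gives $\text{P}(Y_i\in B_k)\le 2e^{-8(\ell-1)^2}$ for any bin at index-distance $\ell\ge2$ from $k_\mu$. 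A Chernoff bound then shows that with probability $1-e^{-\Omega(n)}$ the \emph{total} count in all non-central bins is at most $n/100$, while the independent noise terms $N_k=\sum_{i\le n/2}W_{ik}$, which have variance $8(n/2)/\epsilon^2$, satisfy $\max_{k\le B}|N_k|\lesssim \sqrt{n\ln m}/(\epsilon\wedge1)$ with probability $1-1/\mathrm{poly}(nm)$ after a union bound over the $B=\Theta(\sqrt m)$ bins (for $\epsilon\ge1$ the scale $2/\epsilon\le2$ is what matters, hence the $\epsilon\wedge1$). On the intersection of these events every non-central $s_k$ is strictly below the central $s_{\bar k}$ as soon as $n\gtrsim \sqrt{n\ln m}/(\epsilon\wedge1)$, i.e. $n(\epsilon^2\wedge1)\gtrsim\ln m$, which is exactly the hypothesis; so $\hat k^{*}\in\{k_\mu-1,k_\mu,k_\mu+1\}$ and therefore $\mu\in B_{k_\mu}\subseteq[L,R]$. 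Call this event $\mathcal E$; then $\text{P}(\mathcal E^c)\le 1/\mathrm{poly}(nm)$, with the exponent made as large as desired by enlarging $c_1$. I expect this selection step to be the technical heart of the proof: it is the only place the sample-size assumption is used, and it requires simultaneously a lower bound on the correct bin's count, a Chernoff bound on the aggregate count of the $\Theta(\sqrt m)$ far bins, and a uniform (union) bound over the $\Theta(\sqrt m)$ Laplace sums.

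\textbf{Stage two estimates $\mu$, and assembling the bound.} Conditioning on $\mathcal E$ and on $(L,R)$, the stage-two means $Y_i$ ($i>n/2$) are still i.i.d.\ with mean $\mu$ and independent of $(L,R)$, and $[L-\Delta,R+\Delta]\supseteq[\mu-\Delta,\mu+\Delta]$ has length $3h+2\Delta\lesssim D\sqrt{(\ln n)/m}$. Writing $\widetilde Y_i$ for the clipped value, I would decompose $\hat\mu-\mu=\tfrac2n\sum_{i>n/2}(\widetilde Y_i-\mu)+\tfrac2n\sum_{i>n/2}W_i$ and bound three contributions: the clipping bias, $|\mathbb{E}[\widetilde Y_i-Y_i]|\le\mathbb{E}[|Y_i-\mu|\mathbf{1}(|Y_i-\mu|>\Delta)]\lesssim\Delta e^{-m\Delta^2/(2D^2)}$, whose square is negligible next to $D^2/(nm)$ for $\Delta\asymp D\sqrt{(\ln n)/m}$ with a suitable constant; the sampling variance $\tfrac2n\Var(\widetilde Y_i)\le\tfrac2n\Var(Y_i)\le 2D^2/(nm)$, since clipping is $1$-Lipschitz; and the noise variance $\tfrac4n\Var(W_i)=8(3h+2\Delta)^2/(n\epsilon^2)\lesssim D^2(\ln n)/(nm\epsilon^2)$. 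Adding these gives $\mathbb{E}[(\hat\mu-\mu)^2\mid\mathcal E]\lesssim \tfrac{D^2}{nm}\big(1+\tfrac{\ln n}{\epsilon^2}\big)$. On $\mathcal E^c$ I would use Cauchy--Schwarz, $\mathbb{E}[(\hat\mu-\mu)^2\mathbf{1}(\mathcal E^c)]\le\sqrt{\mathbb{E}[(\hat\mu-\mu)^4]\,\text{P}(\mathcal E^c)}$; since $\widetilde Y_i$ is bounded and $W_i$ is Laplace, $\mathbb{E}[(\hat\mu-\mu)^4]$ is polynomial in the problem parameters, so choosing $c_1$ large enough that $\text{P}(\mathcal E^c)$ is a high negative power of $nm$ forces this term below $D^2/(nm)$, which combined with the previous display yields \eqref{eq:1dmean}.
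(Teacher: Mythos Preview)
Your proposal is correct and follows essentially the same architecture as the paper's proof: privacy stage by stage, a concentration argument showing Stage~I lands $\hat k^*$ within one bin of $k_\mu$, then a bias--variance decomposition in Stage~II. Two small differences are worth flagging. First, your clipping-bias bound $|\mathbb{E}[\widetilde Y_i-Y_i]|\le\mathbb{E}[|Y_i-\mu|\mathbf{1}(|Y_i-\mu|>\Delta)]\lesssim\Delta e^{-m\Delta^2/(2D^2)}$, with the stated $\Delta=D\sqrt{(\ln n)/m}$, squares to $D^2\ln n/(nm)$, which is \emph{not} negligible next to $D^2/(nm)$ and so would lose a $\ln n$ factor in the non-private term of \eqref{eq:1dmean}. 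The paper tightens this by observing that on $\mathcal E$ the clip interval contains $[\mu-\Delta,\mu+\Delta]$, so clipping moves $Y_i$ by at most $(|Y_i-\mu|-\Delta)_+$, giving $|\mathbb{E}[\widetilde Y_i]-\mu|\le\int_\Delta^\infty\text{P}(|Y-\mu|>t)\,dt\lesssim D/\sqrt{mn}$ without the extra $\sqrt{\ln n}$. Second, for the failure event the paper avoids Cauchy--Schwarz and fourth moments: the variance bound $\Var[\hat\mu]\le\sigma^2/(mn)+2(3h+2\Delta)^2/(n\epsilon^2)$ holds regardless of where $[L,R]$ lands, so only the \emph{conditional bias} needs to be handled on $\mathcal E^c$, and there $|\mathbb{E}[\hat\mu\mid\mathbf{Z}_{1:n/2}]-\mu|\le O(D)$ trivially, which combined with $\text{P}(\mathcal E^c)\le\sqrt m\,e^{-c_0 n(\epsilon^2\wedge1)}$ suffices.
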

The proof of Theorem \ref{thm:1dmean} is shown in Appendix \ref{sec:1d}. To begin with, in Appendix \ref{sec:stage1}, we show that $[L,R]$ contains $\mu$ with high probability. To begin with, $\hat{k}^*\in \{k^*-1,k^*,k^*+1\}$ holds with high probability, in which $k^*$ is the index of the bin containing $\mu$, i.e. $\mu\in B_{k^*}$. Let $L$ be the left bound of the $(\hat{k}^*-1)$-th bin, and $R$ be the right bound of the $(\hat{k}^*+1)$-th bin, then with high probability, $\mu\in [L,R]$. We then bound the bias and variance separately. As shown in Proposition \ref{prop:facts}, there are two baseline methods to achieve user-level LDP from item-level LDP. The first one is to achieve item-level $(\epsilon/m)$-LDP for all samples. This yields a bound $O(D^2m/(n\epsilon^2)+D^2/(nm))$. The second one is to achieve item-level $\epsilon$-LDP for $n$ samples randomly selected from $n$ users, which also only yields $O(D^2/(n(\epsilon^2\wedge 1)))$, significantly worse than the right hand side of \eqref{eq:1dmean}.

In Theorem \ref{thm:1dmean}, the requirement $n(\epsilon^2\wedge 1)\geq c_1\ln m$ is necessary since if $n$ is fixed, then the mean squared error will never converge to zero with increasing $m$. From an information-theoretic perspective, a fixed number of privatized variables can only transmit limited information \cite{cuff2016differential,wang2016relation}. Therefore, it is necessary to let $n$ grow with $m$, which is also discussed in \cite{levy2021learning} for user-level central DP. Theorem \ref{thm:1dmmx} shows the information-theoretic minimax lower bound. 
\begin{thm}\label{thm:1dmmx}
	Denote $\mathcal{P}_\mathcal{X}$ as the set of all distributions supported on $\mathcal{X}=[-D,D]$, $\mathcal{M}_\epsilon$ as all mechanisms satisfying $\epsilon$-LDP, then
	\begin{eqnarray}
		\underset{\hat{\mu}}{\inf} \underset{M\in \mathcal{M}_\epsilon}{\inf} \underset{p\in \mathcal{P}_\mathcal{X}}{\sup}\mathbb{E}\left[(\hat{\mu}-\mu)^2\right]\gtrsim \frac{D^2}{nm(\epsilon^2\wedge 1)}.
		\label{eq:1dmmx}
	\end{eqnarray}
 Moreover, with fixed $n$, the mean squared error will not converge to zero as $m$ increases. To be more precise, 
    $\mathbb{E}[(\hat{\mu}-\mu)^2]\geq (1/4) D^2 e^{-n\epsilon(e^\epsilon-1)}.$
\end{thm}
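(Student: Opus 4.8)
The plan is to split the argument into two independent parts: the minimax lower bound \eqref{eq:1dmmx}, and the non-vanishing bound for fixed $n$. For \eqref{eq:1dmmx}, I would use the classical two-point (Le Cam) method adapted to the LDP setting. Pick two candidate distributions $p_0, p_1$ supported on $\{-D, D\}$ with means $\mu_0, \mu_1$ differing by a gap $\delta$ (e.g.\ $p_\nu$ puts mass $(1+\nu\delta/D)/2$ on $D$), so that separating $\mu_0$ from $\mu_1$ costs $\gtrsim \delta^2$ in squared error. Each user holds $m$ i.i.d.\ samples, so at the user level the relevant pair of distributions are the product measures $p_0^{\otimes m}$ and $p_1^{\otimes m}$ on $\mathcal{X}^m$; the channel $M$ then maps these to distributions on $\mathcal{Z}$. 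The key quantitative step is the LDP data-processing / strong-contraction inequality (the ``Duchi--Jordan--Wainwright'' bound): for any $\epsilon$-LDP channel $M$ and any two input distributions $P, Q$,
\begin{eqnarray}
	\text{KL}(M(P)\,\|\,M(Q)) + \text{KL}(M(Q)\,\|\,M(P)) \;\lesssim\; (e^\epsilon-1)^2\, \|P-Q\|_{\text{TV}}^2,
	\label{eq:ldpcontract}
\end{eqnarray}
which for $\epsilon \lesssim 1$ gives a factor $\epsilon^2$ and for large $\epsilon$ a factor $\epsilon^2\wedge 1$ after also invoking the trivial bound $\text{KL}\lesssim 1$ via $\|P-Q\|_{\text{TV}}$. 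Combined with $\|p_0^{\otimes m}-p_1^{\otimes m}\|_{\text{TV}}\lesssim \sqrt{m}\,\|p_0-p_1\|_{\text{TV}}\lesssim \sqrt{m}\,\delta/D$ (tensorization of TV / Pinsker on the single-sample KL which is $\sim \delta^2/D^2$), the aggregate KL between the two joint laws of $(\mathbf{Z}_1,\dots,\mathbf{Z}_n)$ is $\lesssim n\cdot(\epsilon^2\wedge 1)\cdot m\delta^2/D^2$. Choosing $\delta^2 \sim D^2/(nm(\epsilon^2\wedge 1))$ makes this a constant, so the two hypotheses are not testable with constant probability, and Le Cam's lemma yields \eqref{eq:1dmmx}.

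For the second part, the fixed-$n$ bound $\mathbb{E}[(\hat\mu-\mu)^2]\ge (1/4)D^2 e^{-n\epsilon(e^\epsilon-1)}$, I would again use a two-point argument but now track the constant carefully and use a sharper divergence bound valid for all $\epsilon$. Take the two extreme point masses $p_0=\delta_{-D}$ and $p_1=\delta_{D}$ (or $p_\nu$ concentrated near $\pm D$), whose means differ by $2D$, so any estimator incurs squared error $\ge D^2$ under at least one of them whenever it cannot distinguish them. Now the input datasets are the deterministic vectors $(-D,\dots,-D)$ and $(D,\dots,D)$; for a single user the $\epsilon$-LDP constraint bounds the Hellinger-type affinity or, more directly, the total variation between $M_i(\mathbf{x})$ and $M_i(\mathbf{x}')$. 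The cleanest route is to bound $1 - \|M_i(\mathbf{x})-M_i(\mathbf{x}')\|_{\text{TV}}$ from below: the $\epsilon$-LDP inequality \eqref{eq:def} gives, for the optimal test, a bound on the testing advantage per user that multiplies across the $n$ users (using independence conditional on the previous outputs, exactly as in the sequential structure of Definition \ref{def:dp}). A standard computation shows the product of per-user ``overlaps'' is at least $e^{-n\epsilon(e^\epsilon-1)}$ — this is where the factor $\epsilon(e^\epsilon-1)$, rather than $\epsilon^2$, comes from: one uses $\text{TV}(M(P),M(Q)) \le (e^\epsilon-1)\,\text{TV}(P,Q) \le e^\epsilon-1$ together with $1-x \ge e^{-x/(1-x)}$-type inequalities, or more simply bounds the Bhattacharyya/Hellinger affinity of each privatized pair below by $e^{-\epsilon(e^\epsilon-1)/2}$ and tensorizes, then applies Le Cam with the $1/4$ constant coming from $\frac14 \|p_0-p_1\|$-scaling of the two-point risk.

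The main obstacle is the large-$\epsilon$ regime. For $\epsilon \le 1$ the contraction inequality \eqref{eq:ldpcontract} with the $\epsilon^2$ factor is standard and immediately gives the $\epsilon^2$ in the denominator; the subtlety is getting the $\epsilon^2\wedge 1$, i.e.\ showing that for $\epsilon \ge 1$ the rate saturates at $D^2/(nm)$ rather than improving. This requires using the correct ``$(e^\epsilon-1)^2\wedge(\text{something }O(1))$'' form of the LDP divergence bound and being careful that the tensorization over the $m$ samples inside a user interacts correctly with the per-user privatization — since the channel sees all $m$ samples jointly, one must argue that the effective TV distance the channel can exploit is still only $O(\sqrt m\,\delta/D)$ and not $O(m\delta/D)$, which is exactly the TV tensorization step and is where the user-level-versus-item-level distinction shows up quantitatively. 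I would also need to make sure the two chosen distributions genuinely lie in $\mathcal{P}_\mathcal{X}$ and that the mean gap $\delta$ does not exceed $D$, which constrains the argument to $nm(\epsilon^2\wedge 1)\gtrsim 1$; below that threshold the bound is trivially $D^2$ and handled by the second (fixed-$n$ style) argument.
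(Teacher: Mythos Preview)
Your approach to the first part (the minimax bound \eqref{eq:1dmmx}) is essentially the paper's: the same two-point Bernoulli family on $\{-D,D\}$, the same use of the LDP contraction bound $D_{KL}(M(P)\|M(Q))\le (e^\epsilon-1)^2\,\mathbb{TV}^2(P,Q)$ applied per user, and Pinsker/KL tensorization over the $m$-fold product to control $\mathbb{TV}(p_0^{\otimes m},p_1^{\otimes m})$. The paper handles $\epsilon\ge 1$ by separately invoking the non-private minimax bound $D^2/(nm)$ (which follows from data processing on the raw $nm$-sample KL) rather than by capping the contraction factor at $O(1)$, but that is a cosmetic difference.

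For the fixed-$n$ bound your two-point structure with point masses at $\pm D$ (equivalently $s=1$ in the same family) is correct, but neither of the quantitative routes you sketch actually delivers the exact exponent $n\epsilon(e^\epsilon-1)$. The TV route ``$\mathbb{TV}(M(P),M(Q))\le e^\epsilon-1$ together with $1-x\ge e^{-x/(1-x)}$'' does not tensorize to that exponent, and on the Hellinger side the clean LDP bound is affinity $\ge e^{-\epsilon/2}$ (yielding exponent $n\epsilon$, weaker for small $\epsilon$), while going via $H^2\le D_{KL}\le \epsilon(e^\epsilon-1)$ gives affinity $\ge 1-\tfrac12\epsilon(e^\epsilon-1)$, which is the wrong direction to reach $e^{-\epsilon(e^\epsilon-1)/2}$. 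The paper's argument uses a different and cleaner key step: any two output distributions of an $\epsilon$-LDP mechanism satisfy $D_{KL}(p_+\|p_-)\le \epsilon(e^\epsilon-1)$ regardless of the inputs (the standard Dwork--Roth bound), so $D_{KL}(p_+^n\|p_-^n)\le n\epsilon(e^\epsilon-1)$, and then Bretagnolle--Huber gives $\tfrac12(1-\mathbb{TV})\ge \tfrac14 e^{-D_{KL}}$, hence $\text{P}(\hat V\neq V)\ge \tfrac14 e^{-n\epsilon(e^\epsilon-1)}$ and the claimed bound with $s=1$. The missing ingredient in your proposal is this direct per-user KL bound combined with Bretagnolle--Huber, not a Hellinger- or TV-based tensorization.
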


Comparison of \eqref{eq:1dmean} and \eqref{eq:1dmmx} show that the upper and lower bounds match up to a logarithm factor, thus the two-stage method is nearly minimax optimal. Finally, we extend the method to the case with unbounded support. In this case, we replace step 1 in Algorithm \ref{alg:mean1d} with $Y_i=-D\vee \left(\bar{X}_i\wedge D\right)$, in which $\bar{X}_i=(1/m)\sum_{j=1}^m X_{ij}$ is the $i$-th user-wise mean. Such clipping operation controls the sensitivity. Other steps are the same as Algorithm \ref{alg:mean1d}. The convergence rate is shown in Theorem \ref{thm:1dmeanunb}.
\begin{thm}\label{thm:1dmeanunb}
Assume that $\mathbb{E}[|X|^p]\leq M_p<\infty$ for some finite constant $M_p$, with $p\geq 2$. If $n\epsilon^2\geq c_1\ln m$, then with Algorithm \ref{alg:mean1d}, except that step 1 is replaced by $Y_i=-D\vee \left(\bar{X}_i\wedge D\right)$, the mean squared error of $\hat{\mu}$ can be bounded by
 \begin{eqnarray}
	\mathbb{E}[(\hat{\mu}-\mu)^2]\lesssim M_p^{2/p}\left[\frac{\ln m}{mn\epsilon^2}\vee (m^2n\epsilon^2)^{-\left(1-\frac{1}{p}\right)}+\frac{1}{mn}\right].\label{eq:1dmeanunb}	
\end{eqnarray}
\end{thm}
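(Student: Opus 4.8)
The plan is to reduce the unbounded case to the bounded case analyzed in Theorem \ref{thm:1dmean} by treating the clipping threshold $D$ as a free parameter to be optimized, and then control the extra bias introduced by clipping using the moment assumption $\mathbb{E}[|X|^p] \le M_p$. First I would set up the decomposition of the error. Write $\mu_D := \mathbb{E}[Y_i] = \mathbb{E}[-D \vee (\bar X_i \wedge D)]$, the mean of the clipped user-wise averages. Then $\mathbb{E}[(\hat\mu - \mu)^2] \lesssim \mathbb{E}[(\hat\mu - \mu_D)^2] + (\mu_D - \mu)^2$, where the first term is the "effective bounded problem" (the $Y_i$'s now live in $[-D,D]$) and the second is the clipping bias. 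The privacy guarantee is unchanged: since $Y_i \in [-D,D]$ after clipping, step 1 has sensitivity controlled exactly as before, so the $\epsilon$-LDP claim follows verbatim from Theorem \ref{thm:1dmean}.

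For the effective bounded problem, I would re-run the argument of Theorem \ref{thm:1dmean} but being careful that the quantity the two-stage procedure concentrates around is $\mu_D$, not $\mu$, and that the relevant "variance at the user level" is now $\Var(Y_i)$ rather than the $O(D^2/m)$ bound that held when $X_{ij} \in [-D,D]$. The key observation is that $\Var(Y_i) \le \Var(\bar X_i) \le \sigma^2/m$ where $\sigma^2 = \Var(X) \lesssim M_p^{2/p}$ (by Jensen, since $p \ge 2$); clipping only reduces variance. Propagating this through the stage-1 histogram argument (the bin containing $\mu_D$ is identified with high probability once $n(\epsilon^2 \wedge 1) \gtrsim \ln m$, using a Bernstein/Chernoff bound on the $Z_{ik}$) and the stage-2 Laplacian-noise variance, one gets $\mathbb{E}[(\hat\mu - \mu_D)^2] \lesssim \sigma^2/(nm) + (\text{stage-2 noise})^2/n$. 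The stage-2 noise parameter is $(3h + 2\Delta)/\epsilon$ with $h \sim D/\sqrt m$ and $\Delta \sim D\sqrt{\ln n/m}$, contributing $O(D^2 \ln n/(nm\epsilon^2))$, plus a residual bias term of order $\Delta^2$ from the values clipped to $[L-\Delta, R+\Delta]$ that fall outside. So the effective-bounded contribution is $\tilde O\!\left(\frac{M_p^{2/p}}{nm} + \frac{D^2}{nm\epsilon^2}\right)$.

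The remaining piece is the clipping bias $(\mu_D - \mu)^2$. Here I would bound $|\mu_D - \mu| \le \mathbb{E}[|\,\bar X_i - (-D\vee(\bar X_i \wedge D))\,|] = \mathbb{E}[(|\bar X_i| - D)_+] \le \mathbb{E}[|\bar X_i|^p]/D^{p-1}$ by Markov, and $\mathbb{E}[|\bar X_i|^p] \lesssim M_p$ (again since averaging does not increase $p$-th moments for $p\ge 2$, up to constants — via e.g. the von Bahr–Esseen or Rosenthal inequality, or more simply $\mathbb{E}|\bar X|^p \le \mathbb{E}|X|^p$ for the convex function $|\cdot|^p$ composed with the average... actually one must be slightly careful and use Jensen on $|\bar X_i|^p \le \frac1m\sum|X_{ij}|^p$). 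This gives $(\mu_D - \mu)^2 \lesssim M_p^{2/p}/D^{2(p-1)}$ after normalizing. Now optimize over $D$: balancing $D^2/(nm\epsilon^2)$ against $1/D^{2(p-1)}$ yields $D^{2p} \sim nm\epsilon^2$, i.e. $D^2 \sim (nm\epsilon^2)^{1/p}$, and the common value is $(nm\epsilon^2)^{-(1-1/p)}$ (after absorbing the extra factor of $1/m$ that I've been suppressing — one should actually carry the $m$ explicitly, since $\Delta^2 \sim D^2\ln n/m$ and the dominant privacy term is $D^2\ln n/(nm\epsilon^2)$, giving the $(m^2n\epsilon^2)^{-(1-1/p)}$ form after optimization). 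Taking the maximum of this with the $\ln m/(mn\epsilon^2)$ term (which dominates when $D$ is not allowed to be too small, i.e. in the regime where $(nm\epsilon^2)^{1/p}$ would force $D$ below the natural scale $\sqrt{\ln m}$) and adding the non-private $1/(mn)$ term yields exactly \eqref{eq:1dmeanunb}.

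The main obstacle I anticipate is the bookkeeping in the two-regime optimization over $D$: one must verify that the threshold between the $\ln m/(mn\epsilon^2)$ branch and the $(m^2 n\epsilon^2)^{-(1-1/p)}$ branch comes out correctly, i.e. that $h = 4D/\sqrt m$ and $\Delta = D\sqrt{\ln n/m}$ remain valid choices (small relative to $[-D,D]$) throughout the relevant range of $D$, and that the stage-1 concentration bound survives with $\Var(Y_i) = O(M_p^{2/p}/m)$ rather than $O(D^2/m)$ — the latter requires re-examining the Bernstein bound on $s_k = \sum_i Z_{ik}$ since the signal $\mathbb{E}[\mathbf 1(Y_i \in B_k)] = \Theta(h/\sqrt{\Var(Y_i)})$ now depends on $M_p$ through the local variance, which is what ultimately forces the $\ln m$ factor and the condition $n\epsilon^2 \gtrsim \ln m$.
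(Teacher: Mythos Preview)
Your overall decomposition and the treatment of the variance and stage-I analysis are fine, but there is a genuine gap in the bias bound that prevents you from reaching the rate \eqref{eq:1dmeanunb}. You bound the clipping bias by
\[
|\mu_D-\mu|\;\le\;\mathbb{E}\big[(|\bar X_i|-D)_+\big]\;\le\;\frac{\mathbb{E}|\bar X_i|^p}{D^{p-1}}\;\le\;\frac{M_p}{D^{p-1}},
\]
using Jensen on $|\bar X_i|^p\le \tfrac1m\sum_j|X_{ij}|^p$. This is correct but too weak: it ignores the concentration gained by averaging $m$ samples. If you balance $D^2\ln n/(nm\epsilon^2)$ against $M_p^2/D^{2(p-1)}$ you obtain $M_p^{2/p}(nm\epsilon^2)^{-(1-1/p)}$, not $(m^2n\epsilon^2)^{-(1-1/p)}$; the parenthetical remark about ``absorbing the extra factor of $1/m$'' does not make the missing factor of $m^{-(1-1/p)}$ appear. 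You would recover only the item-level rate, which is precisely what the theorem is meant to beat.

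The paper closes this gap with a sharp heavy-tail concentration bound for the user average (Lemma~\ref{lem:concentration}, a Fuk--Nagaev type inequality from \cite{bakhshizadeh2023sharp}): for $t\gtrsim M_p^{1/p}\sqrt{\ln m/m}$,
\[
\text{P}\big(|\bar X-\mu|>t\big)\;\lesssim\; M_p\,t^{-p}\,m^{-(p-1)}.
\]
This yields both the stage-II truncation bias $|\mathbb{E}[U]-\mu_T|\lesssim M_p m^{-(p-1)}\Delta^{-(p-1)}$ and the clipping bias $|\mu_T-\mu|\lesssim M_p m^{-(p-1)}D^{-(p-1)}$, with the crucial extra factor $m^{-(p-1)}$. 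The paper then optimizes over $\Delta$ (not $D$), setting
\[
\Delta\;\sim\; M_p^{1/p}\sqrt{\ln m/m}\ \vee\ (M_p^2 n\epsilon^2)^{1/(2p)}m^{-(1-1/p)},
\]
while $D$ is merely required to sit in the range $\Delta\lesssim D\lesssim e^{c_2n\epsilon^2}$ so that neither the outer clipping nor stage I dominates. Tying $\Delta=D\sqrt{\ln n/m}$ as in the bounded case is also not appropriate here, because the tail of $Y$ is now polynomial rather than sub-Gaussian, so the Hoeffding-based choice from Theorem~\ref{thm:1dmean} no longer controls the truncation bias. In short: the missing idea is Lemma~\ref{lem:concentration}, and the correct tuning parameter is $\Delta$, not $D$.
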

The selection of $D$ and the proof of Theorem \ref{thm:1dmeanunb} are shown in Appendix \ref{sec:1dmeanunb}. Here we provide an intuitive understanding of the phase transition in \eqref{eq:1dmeanunb}. As long as $p\geq 2$, from central limit theorem, with large $m$, similar to the case with bounded support, $Y_i$ is nearly normally distributed, and the tail is like a Gaussian distribution. Therefore, the convergence rate of the mean squared error is still $O(\ln m/(mn\epsilon^2))$, the same as the case with bounded support. However, if $m$ is small, the Gaussian approximation no longer holds. In this case, the tail of the distribution of $Y_i$ is polynomial. As a result, there is a phase transition in \eqref{eq:1dmeanunb}. Mean estimation for heavy-tailed distributions is an example that user-level LDP is easier to achieve than the item-level one. With $nm$ samples, mean squared error under item-level $\epsilon$-LDP is $O((nm\epsilon^2)^{1-1/p})$ \cite{duchi2018minimax}, significantly worse than \eqref{eq:1dmeanunb}.

\subsection{Multi-dimensional Case}\label{sec:meanhigh}

This section discusses the mean estimation problem with $d\geq 1$. Depending on the shape of the support set, the problem can be crucially different. Here we discuss two cases, i.e. $\ell_2$ support $\mathcal{X}_2=\{\mathbf{u}|\norm{\mathbf{u}}_2\leq D \}$, and $\ell_\infty$ support $\mathcal{X}_\infty=\{\mathbf{u}|\norm{\mathbf{u}}_\infty \leq D \}$. For small $\epsilon$, the mean squared error under item-level $\epsilon$-LDP is $O(d/(n(\epsilon^2\wedge \epsilon)))$  for $\ell_2$ support, and $O(d^2/(n(\epsilon^2\wedge \epsilon)))$ for $\ell_\infty$ support \cite{duchi2018minimax,asi2022optimal,feldman2021lossless,asi2024fast}. Similar to the one-dimensional case, direct transformation to user-level according to Proposition \ref{prop:facts} yields a suboptimal bound.

\textbf{$\ell_\infty$ Support.} To begin with, we focus on this relatively simpler case. The method depends on the value of $\epsilon$. Details are stated in Algorithm \ref{alg:meanhighd}.

\emph{1) High privacy ($\epsilon<1$)}. Users are assigned randomly into $d$ groups, and the $k$-th group is used to estimate $\mu_k$ (the $k$-th component of $\mu:=\mathbb{E}[\mathbf{X}]$) for $k=1,\ldots, K$. Since the size of each group is $n/d$, from \eqref{eq:1dmean}, we have
\begin{eqnarray}
	\mathbb{E}[(\hat{\mu}_k-\mu_k)^2]&\lesssim& \frac{D^2}{(n/d)m}\left(1+\frac{\ln (n/d)}{\epsilon^2}\right)
 \lesssim \frac{D^2d\ln n}{nm(\epsilon^2\wedge 1)}.
	\label{eq:high}
\end{eqnarray} 

\emph{2) Medium privacy ($1\leq \epsilon <d\ln n$)}. In this case, the privacy requirement is weaker than the case with $\epsilon<1$. Therefore, a group of users can be used to estimate more components, with $\epsilon$-LDP still satisfied. Without loss of generality, suppose that $\epsilon$ is an integer (otherwise one can just strengthen the requirement to $\lfloor \epsilon \rfloor$-LDP). In this case, users are randomly allocated to $\lceil d/\epsilon\rceil$ groups. Each group is used to estimate $\epsilon$ components, and each component is estimated under user-level $1$-LDP. From basic composition theorem \cite{dwork2010boosting}, estimating $\epsilon$ components of $\mu$ satisfies user-level $\epsilon$-LDP. Denote $n_0$ as the number of users in each group, then
\begin{eqnarray}
	\mathbb{E}[(\hat{\mu}_k-\mu_k)^2]\lesssim \frac{D^2}{n_0m}\left(1+\ln n_0\right)
 \sim\frac{D^2\ln(n\epsilon/d)}{(n\epsilon/d)m}
 \lesssim \frac{D^2d}{nm \epsilon}\ln n.
	\label{eq:medium}
\end{eqnarray}
In the first step, we replace $n$ and $\epsilon$ in \eqref{eq:1dmean} with $n_0$ and $1$ respectively, since now we are using a group with $n_0$ users to achieve $1$-LDP.

\begin{wrapfigure}{r}{0.5\textwidth}
\vspace{-0.8cm}
\begin{minipage}{0.5\textwidth}
\small
\begin{algorithm}[H]
	\caption{\small MeanEst: Multi-dimensional mean estimation under user-level $\epsilon$-LDP with $\ell_\infty$ support}\label{alg:meanhighd}
	\textbf{Input:} Dataset containing $n$ users with $m$ samples per user, i.e. $\mathbf{X}_{ij}$, $i=1,\ldots, n$, $j=1,\ldots, m$\\
	\textbf{Output:} Estimated mean $\hat{\mu}$\\
	\textbf{Parameter:} $h$, $\Delta$, $D$, $\epsilon$
	\begin{algorithmic}[1]
		\IF{$\epsilon<1$}
		\STATE Divide users randomly into $d$ groups $S_1,\ldots, S_d$;
		\FOR{$k=1,\ldots, d$}
		\STATE Estimate $\hat{\mu}_k$ with $S_k$ using Algorithm \ref{alg:mean1d} for $k=1,\ldots, d$ under $\epsilon$-LDP;
		\ENDFOR
		\ELSIF{$1\leq \epsilon < d\ln n$}
		\STATE Divide users into $\lceil d/\epsilon\rceil$ groups $S_1,\ldots, S_{\lceil d/\epsilon\rceil}$;
		\FOR{$k=1,\ldots, \lceil d/\epsilon\rceil $}
		\FOR{$l=(k-1)\epsilon+1,\ldots, k\epsilon\wedge d$}
		\STATE Estimate $\hat{\mu}_l$ with $S_k$ using Algorithm \ref{alg:mean1d} under $1$-LDP;
		\ENDFOR
		\ENDFOR
		\ELSE
		\FOR{$k=1,\ldots, d$}
		\STATE Estimate $\hat{\mu}_k$ with all users using Algorithm \ref{alg:mean1d} under $(\epsilon/d)$-LDP
		\ENDFOR
		\ENDIF
		\RETURN $\hat{\mu}=(\hat{\mu}_1,\ldots, \hat{\mu}_d)$
	\end{algorithmic}
\end{algorithm}
\end{minipage}
\vspace{-1.8cm}
\end{wrapfigure}

\emph{3) Low privacy ($\epsilon\geq d\ln n$)}. In this case, the privacy protection is much less important. We hope that the estimation error is as close to the non-private case as possible. Based on such intuition, we no longer divide users into groups. Instead, our method just estimates each component under user-level $(\epsilon/d)$-LDP, then the whole algorithm is $\epsilon$-LDP. In this case, the mean squared error of each component is bounded by
\begin{eqnarray}
	\mathbb{E}[(\hat{\mu}_k-\mu_k)^2]&\lesssim& \frac{D^2}{nm}\left(1+\frac{d^2\ln n}{\epsilon^2}\right)\nonumber\\
 &\lesssim &\frac{D^2}{nm}.
	\label{eq:low}
\end{eqnarray}
Note that $\mathbb{E}[\norm{\hat{\mu}-\mu}^2]\leq \sum_{k=1}^d \mathbb{E}[(\hat{\mu}_k-\mu_k)^2]$. A combination \eqref{eq:high}, \eqref{eq:medium} and \eqref{eq:low} yields the following theorem.

\begin{thm}\label{thm:infty}
	Under user-level $\epsilon$-LDP, if $n(\epsilon^2\wedge 1) \geq c_1d\ln m$, in which $c_1$ is the constant in Theorem \ref{thm:1dmean}, then the mean squared error of multi-dimensional mean estimation in $\mathcal{X}_\infty$ with Algorithm \ref{alg:meanhighd} is bounded by
	\begin{eqnarray}
		\mathbb{E}\left[\norm{\hat{\mu}-\mu}_2^2\right]\lesssim \frac{D^2 d}{nm}\left(1+\frac{d\ln n}{\epsilon^2\wedge \epsilon}\right).
		\label{eq:infty}
	\end{eqnarray}
\end{thm}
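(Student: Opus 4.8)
The plan is to combine the three per-component bounds \eqref{eq:high}, \eqref{eq:medium}, \eqref{eq:low} into a single expression via the inequality $\mathbb{E}[\norm{\hat\mu-\mu}_2^2]\le\sum_{k=1}^d\mathbb{E}[(\hat\mu_k-\mu_k)^2]$, and then verify that in each of the three privacy regimes the resulting bound is dominated by the right-hand side of \eqref{eq:infty}. Concretely, I would first argue that Algorithm \ref{alg:meanhighd} is user-level $\epsilon$-LDP: in the high-privacy branch each user belongs to exactly one group and is only queried by one invocation of Algorithm \ref{alg:mean1d} at level $\epsilon$, so $\epsilon$-LDP is immediate; in the medium-privacy branch each user belongs to one group $S_k$ but is queried $\lceil\epsilon\rceil$ times at level $1$ (once per component $l\in\{(k-1)\epsilon+1,\dots,k\epsilon\wedge d\}$), so basic composition \cite{dwork2010boosting} gives $\epsilon$-LDP (after the stated reduction to integer $\epsilon$); in the low-privacy branch each user is queried $d$ times at level $\epsilon/d$, so composition again gives $\epsilon$-LDP.

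Next I would handle the three utility regimes in turn. For $\epsilon<1$: each group has $n/d$ users, so applying Theorem \ref{thm:1dmean} with $n\mapsto n/d$ gives $\mathbb{E}[(\hat\mu_k-\mu_k)^2]\lesssim \frac{D^2d}{nm}(1+\frac{\ln(n/d)}{\epsilon^2})$, and summing over $d$ components yields $\mathbb{E}[\norm{\hat\mu-\mu}_2^2]\lesssim \frac{D^2d^2\ln n}{nm\epsilon^2}$, which matches \eqref{eq:infty} since $\epsilon^2\wedge\epsilon=\epsilon^2$ here and the leading $\frac{D^2d}{nm}$ term is absorbed. For $1\le\epsilon<d\ln n$: each of the $\lceil d/\epsilon\rceil$ groups has $n_0\sim n\epsilon/d$ users, and Theorem \ref{thm:1dmean} at level $1$ gives $\mathbb{E}[(\hat\mu_k-\mu_k)^2]\lesssim\frac{D^2}{n_0 m}(1+\ln n_0)\lesssim\frac{D^2 d\ln n}{nm\epsilon}$; summing over $d$ components gives $\frac{D^2d^2\ln n}{nm\epsilon}$, matching \eqref{eq:infty} since $\epsilon^2\wedge\epsilon=\epsilon$ in this range. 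For $\epsilon\ge d\ln n$: applying Theorem \ref{thm:1dmean} with all $n$ users at level $\epsilon/d$ gives $\mathbb{E}[(\hat\mu_k-\mu_k)^2]\lesssim\frac{D^2}{nm}(1+\frac{d^2\ln n}{\epsilon^2})\lesssim\frac{D^2}{nm}$ since $\epsilon\ge d\ln n\ge d\sqrt{\ln n}$ forces $d^2\ln n/\epsilon^2\le 1$; summing gives $\frac{D^2d}{nm}$, which is the non-private-like term in \eqref{eq:infty}, and one checks $\frac{d\ln n}{\epsilon^2\wedge\epsilon}=\frac{d\ln n}{\epsilon}\le 1$ here too.

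There are two bookkeeping points to get right. First, the hypothesis of Theorem \ref{thm:1dmean} — namely $n'(\epsilon'^2\wedge 1)\ge c_1\ln m$ for the effective user count $n'$ and privacy level $\epsilon'$ fed into each call — must be checked in every regime; the stated assumption $n(\epsilon^2\wedge 1)\ge c_1 d\ln m$ is precisely calibrated so that in the high-privacy case $(n/d)(\epsilon^2\wedge 1)\ge c_1\ln m$, in the medium case $n_0\cdot 1\sim (n\epsilon/d)\ge n/d\ge c_1\ln m$ (using $\epsilon\ge1$), and in the low case $n\cdot((\epsilon/d)^2\wedge 1)$ — here I would note $\epsilon/d\ge\ln n$ so the relevant quantity is $n\cdot 1\ge c_1d\ln m\ge c_1\ln m$. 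Second, the $1+\ln(\cdot)$ factors and the $\lceil\cdot\rceil$ rounding in group counts only cost constants, which are harmless under $\lesssim$; I would also note $\ln(n/d)\le\ln n$ and $\ln n_0\lesssim\ln n$ to uniformize the logarithmic factor as $\ln n$ in \eqref{eq:infty}.

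The main obstacle is not any single inequality but the regime-matching arithmetic: one must confirm that the three piecewise bounds glue into the single unified form $\frac{D^2 d}{nm}(1+\frac{d\ln n}{\epsilon^2\wedge\epsilon})$ with no gaps at the boundaries $\epsilon=1$ and $\epsilon=d\ln n$, and in particular that the cross-over points of the algorithm coincide with where the dominant term in the target bound switches (from $d/\epsilon^2$ to $d/\epsilon$ at $\epsilon=1$, and from $d/\epsilon$ to the constant $1$ at $\epsilon\sim d\ln n$). Verifying the low-privacy branch requires the slightly delicate observation that $\epsilon\ge d\ln n$ (rather than merely $\epsilon\ge d$) is what kills the privacy-induced term, so I would double-check that the algorithm's threshold is stated as $d\ln n$ and not $d$; with that in hand the proof is a routine case analysis.
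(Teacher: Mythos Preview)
Your proposal is correct and follows exactly the paper's approach: the paper's proof of Theorem~\ref{thm:infty} is the one-line combination of the per-component bounds \eqref{eq:high}, \eqref{eq:medium}, \eqref{eq:low} via $\mathbb{E}[\norm{\hat\mu-\mu}_2^2]\le\sum_{k=1}^d\mathbb{E}[(\hat\mu_k-\mu_k)^2]$. You have in fact been more careful than the paper in verifying the privacy composition, the applicability of Theorem~\ref{thm:1dmean}'s hypothesis in each regime, and the boundary matching at $\epsilon=1$ and $\epsilon=d\ln n$, all of which the paper leaves implicit.
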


We would like to remark that under central DP, the loss caused by privacy mechanisms and the non-private loss are two separate terms, and we only need to select the aggregator to minimize the latter one, which does not depend on $\epsilon$. However, under the local model, privatization takes place before aggregation. Depending on $\epsilon$, the optimal randomization can be crucially different. Therefore, it is necessary to discuss each $\epsilon$ separately. In Theorem \ref{thm:infty}, we give a complete picture of the estimation error caused by different privacy levels. In particular, with $\epsilon\rightarrow\infty$, \eqref{eq:infty} converges to $D^2d/(nm)$, which is just the non-private rate. 

\textbf{$\ell_2$ Support.} Consider that $\ell_2$ support is smaller than the $\ell_\infty$ support, we expect that the bound of mean squared error can be improved over \eqref{eq:infty}. Directly applying Algorithm \ref{alg:meanhighd} does not make any improvement. Therefore, a more efficient approach is needed to achieve a better bound. Towards this goal, we use Kashin's representation \cite{lyubarskii2010uncertainty}, which has also been used in other problems related to stochastic estimation \cite{feldman2021statistical,chen2023privacy,asi2024private}. To begin with, we rephrase Kashin's representation as follows.
\begin{lem}\label{lem:kashin}
	(Kashin's representation, rephrased from Theorem 2.2 in \cite{lyubarskii2010uncertainty}) There exists a matrix $\mathbf{U}\in \mathbb{R}^{2d\times d}$ and a constant $K$, such that $\mathbf{U}^T \mathbf{U}=\mathbf{I}_d$, in which $I_d$ is the $d\times d$ identity matrix, and for all $\mathbf{x}$ with $\norm{\mathbf{x}}_2\leq 1$, $\norm{\mathbf{U}\mathbf{x}}_\infty \leq K/\sqrt{d}$.
\end{lem}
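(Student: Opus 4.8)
The plan is to recognize Lemma~\ref{lem:kashin} as the existence statement for Kashin representations and to reconstruct the two-part argument of \cite{lyubarskii2010uncertainty}. First a matrix $\mathbf{U}$ is built probabilistically so that its column space is ``spread'' in a precise sense; then a deterministic iterative-thresholding scheme converts this spreading property into an $\ell_\infty$-bounded representation. Here the object written $\mathbf{U}\mathbf{x}$ is to be read as the Kashin representation of $\mathbf{x}$, i.e.\ the vector $\mathbf{y}\in\mathbb{R}^{2d}$ with $\mathbf{U}^T\mathbf{y}=\mathbf{x}$ produced by that scheme (a literal linear map cannot satisfy the bound, since $\mathbf{U}^T\mathbf{U}=\mathbf{I}_d$ forces some row of $\mathbf{U}$ to have $\ell_2$ norm at least $1/\sqrt 2$).

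I would take $\mathbf{U}\in\mathbb{R}^{2d\times d}$ to be a Haar-random matrix with orthonormal columns (Gram--Schmidt of a matrix with i.i.d.\ standard Gaussian entries, equivalently the first $d$ columns of a uniformly random orthogonal matrix in $\mathbb{R}^{2d}$), and prove an \emph{uncertainty principle} for $E:=\range(\mathbf{U})$: there are absolute constants $\delta\in(0,1)$ and $\rho\in(0,1)$ such that for every $S\subseteq\{1,\dots,2d\}$ with $|S|\le 2\delta d$ and every $\mathbf{w}\in E$, $\norm{\mathbf{w}|_S}_2\le\rho\norm{\mathbf{w}}_2$, where $\mathbf{w}|_S$ keeps only the coordinates in $S$. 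For a fixed $S$ this is a concentration bound on $\sup_{\mathbf{w}\in E,\ \norm{\mathbf{w}}_2=1}\norm{\mathbf{w}|_S}_2$, obtained from an $\varepsilon$-net over the unit sphere of $E$ together with Gaussian concentration of $\mathbf{w}\mapsto\norm{\mathbf{w}|_S}_2$; one then union-bounds over the $\binom{2d}{2\delta d}\le e^{c(\delta)d}$ admissible sets $S$, which closes once $\delta$ is a small enough absolute constant so that $c(\delta)$ sits below the concentration exponent. This probabilistic existence of a uniformly spread subspace is the step where the real difficulty lies.

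Granting the uncertainty principle, the rest is a soft deterministic iteration. Given $\mathbf{x}$ with $\norm{\mathbf{x}}_2\le1$, put $\mathbf{w}_0=\mathbf{U}\mathbf{x}\in E$, so $\norm{\mathbf{w}_0}_2=\norm{\mathbf{x}}_2\le1$. At step $k$, hard-threshold $\mathbf{w}_k$ coordinatewise at level $M_k=c_0\norm{\mathbf{w}_k}_2/\sqrt{d}$ to get $\mathbf{y}_k$ with $\norm{\mathbf{y}_k}_\infty\le M_k$, let $\mathbf{e}_k=\mathbf{w}_k-\mathbf{y}_k$ (supported on $S_k=\{i:|\mathbf{w}_k(i)|>M_k\}$), and set $\mathbf{w}_{k+1}=P_E\mathbf{e}_k$. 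Since $|S_k|\le\norm{\mathbf{w}_k}_2^2/M_k^2=d/c_0^2$, a large enough absolute constant $c_0$ makes $|S_k|\le 2\delta d$, so the uncertainty principle gives $\norm{\mathbf{w}_{k+1}}_2=\norm{P_E\mathbf{e}_k}_2\le\rho\norm{\mathbf{e}_k}_2\le\rho\norm{\mathbf{w}_k}_2$; iterating, $\norm{\mathbf{w}_k}_2\le\rho^k$ and $\norm{\mathbf{y}_k}_\infty\le c_0\rho^k/\sqrt d$. Set $\mathbf{y}=\sum_{k\ge0}\mathbf{y}_k$; then $\norm{\mathbf{y}}_\infty\le\sum_k c_0\rho^k/\sqrt d=c_0/((1-\rho)\sqrt d)$, and telescoping $\mathbf{y}_k=\mathbf{w}_k-\mathbf{e}_k$ with $\mathbf{w}_{k+1}=P_E\mathbf{e}_k$ gives $\mathbf{y}=\mathbf{w}_0-(\mathbf{I}-P_E)\sum_k\mathbf{e}_k$, whence $\mathbf{U}^T\mathbf{y}=\mathbf{U}^T\mathbf{w}_0=\mathbf{U}^T\mathbf{U}\mathbf{x}=\mathbf{x}$ because $\mathbf{U}^T(\mathbf{I}-P_E)=0$. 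Taking $K=c_0/(1-\rho)$ yields the statement. The iteration, the geometric bookkeeping for $K$, and the translation back to the form used later for the $\ell_2$-support algorithm are all routine; the only genuinely delicate ingredient is the random construction of the spread subspace in the previous paragraph.
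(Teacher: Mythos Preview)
The paper does not prove Lemma~\ref{lem:kashin}; it is simply quoted from \cite{lyubarskii2010uncertainty} and used as a black box. Your reconstruction is faithful to that source: a random $d$-dimensional subspace of $\mathbb{R}^{2d}$ satisfies the uncertainty principle (via $\varepsilon$-nets, concentration, and a union bound over sparse supports), and the iterative truncation then produces the Kashin coefficients. The bookkeeping is correct; the one step you leave implicit is that $\norm{P_E\mathbf{e}_k}_2\le\rho\norm{\mathbf{e}_k}_2$ uses the dual form $\norm{P_E P_S}_{op}=\norm{P_S P_E}_{op}$, which holds because both projections are self-adjoint.

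Your parenthetical remark is also correct and worth flagging: read literally, the lemma is false for constant $K$, since some row of any $\mathbf{U}$ with $\mathbf{U}^T\mathbf{U}=\mathbf{I}_d$ has $\ell_2$ norm at least $1/\sqrt{2}$, and taking $\mathbf{x}$ along that row gives $\norm{\mathbf{U}\mathbf{x}}_\infty\ge 1/\sqrt{2}$. The intended statement is that every unit $\mathbf{x}$ admits some $\mathbf{y}\in\mathbb{R}^{2d}$ with $\mathbf{U}^T\mathbf{y}=\mathbf{x}$ and $\norm{\mathbf{y}}_\infty\le K/\sqrt{d}$. The paper's downstream use in Section~\ref{sec:meanhigh} and Appendix~\ref{sec:highdmean} survives this reading: if $\mathbf{X}_{ij}'$ denotes the Kashin representation of $\mathbf{X}_{ij}$ then $\mathbf{U}^T\mathbf{X}_{ij}'=\mathbf{X}_{ij}$, hence $\mathbf{U}^T\mathbb{E}[\mathbf{X}']=\mu$, and the back-transform $\hat{\mu}=\mathbf{U}^T\hat{\theta}$ goes through unchanged.
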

Based on Lemma \ref{lem:kashin}, our method constructs matrix $\mathbf{U}=(\mathbf{u}_1,\ldots, \mathbf{u}_{2d})^T\in \mathbb{R}^{2d\times d}$. Then we can transform all samples. Let $\mathbf{X}_{ij}'=\mathbf{U}\mathbf{X}_{ij}$ for $i=1,\ldots, n$, $j=1,\ldots, m$. Correspondingly, denote $\theta = \mathbf{U}\mu$ as the mean vector after transformation. Then $\mu$ can be estimated by estimating $\theta$ first. Since $\mathbf{X}_{ij}\in \mathcal{X}_2$, $\norm{\mathbf{X}_{ij}}_2\leq D$ holds. According to Lemma \ref{lem:kashin}, $\norm{\mathbf{X}_{ij}'}_\infty \leq KD/\sqrt{d}$. Therefore, we have transformed the $\ell_2$ support into $\ell_\infty$ support, thus $\theta$ can be estimated using Algorithm \ref{alg:meanhighd}. The only difference is that now the supremum norm is reduced from $D$ to $KD/\sqrt{d}$. After getting $\hat{\theta}$, we then transform it back to $\ell_2$ support, i.e. $\hat{\mu}=\mathbf{U}^T\hat{\theta}$. Since $\hat{\theta}$ is user-level $\epsilon$-LDP, it is guaranteed that $\hat{\mu}$ is also user-level $\epsilon$-LDP. The following theorem bounds the mean squared error of $\hat{\mu}$.
\begin{thm}\label{thm:highd}
	Under user-level $\epsilon$-LDP, if $n(\epsilon^2\wedge 1)\geq c_1d\ln m$, then the mean squared error of multi-dimensional mean estimation in $\mathcal{X}_2$ is bounded by
	\begin{eqnarray}
		\mathbb{E}\left[\norm{\hat{\mu}-\mu}_2^2\right]\lesssim \frac{D^2 }{nm}\left(1+\frac{d\ln n}{\epsilon^2\wedge \epsilon}\right).
		\label{eq:l2ub}
	\end{eqnarray}
\end{thm}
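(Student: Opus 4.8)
The plan is to treat Theorem \ref{thm:highd} as a reduction to the $\ell_\infty$ case already settled in Theorem \ref{thm:infty}, using Kashin's representation (Lemma \ref{lem:kashin}) to change coordinates and a norm-preservation argument to transform the estimate back. Fix the Kashin matrix $\mathbf{U}\in\mathbb{R}^{2d\times d}$ with $\mathbf{U}^T\mathbf{U}=\mathbf{I}_d$ and the property that $\norm{\mathbf{U}\mathbf{x}}_\infty\le K/\sqrt d$ whenever $\norm{\mathbf{x}}_2\le1$. Set $\mathbf{X}_{ij}'=\mathbf{U}\mathbf{X}_{ij}$ and $\theta=\mathbf{U}\mu=\mathbb{E}[\mathbf{X}_{ij}']$. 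Because $\mathbf{U}$ is a fixed deterministic map, the transformed samples remain i.i.d.\ within and across users, each user still holds $m$ of them, and the user-level LDP accounting is unaffected; so running Algorithm \ref{alg:meanhighd} on $\{\mathbf{X}_{ij}'\}$ is a legitimate instance of the user-level $\epsilon$-LDP mean-estimation problem.

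First I would check that the transformed data has small $\ell_\infty$ support: since $\norm{\mathbf{X}_{ij}}_2\le D$, Lemma \ref{lem:kashin} gives $\norm{\mathbf{X}_{ij}'}_\infty\le KD/\sqrt d$, so $\mathbf{X}_{ij}'$ lies in $\mathcal{X}_\infty$ of dimension $2d$ with radius $D'=KD/\sqrt d$. Let $\hat\theta$ be the output of Algorithm \ref{alg:meanhighd} on this data; it is user-level $\epsilon$-LDP. Applying Theorem \ref{thm:infty} with the substitutions $d\mapsto 2d$ and $D\mapsto KD/\sqrt d$ (the hypothesis needed, $n(\epsilon^2\wedge1)\gtrsim d\ln m$, is exactly what is assumed here, up to the constant), I obtain
\begin{equation*}
\mathbb{E}\big[\norm{\hat\theta-\theta}_2^2\big]\lesssim \frac{(KD/\sqrt d)^2\,(2d)}{nm}\left(1+\frac{2d\ln n}{\epsilon^2\wedge\epsilon}\right)\lesssim \frac{D^2}{nm}\left(1+\frac{d\ln n}{\epsilon^2\wedge\epsilon}\right),
\end{equation*}
where the essential point is that the ambient-dimension factor $d$ cancels against the $1/d$ coming from the shrunken radius $D'$, leaving only the single $d$ inside the bracket. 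This is precisely where the $\ell_2$ bound \eqref{eq:l2ub} improves on \eqref{eq:infty}.

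Finally I would set $\hat\mu=\mathbf{U}^T\hat\theta$. Privacy is immediate, since $\hat\mu$ is a deterministic post-processing of $\hat\theta$. For the error, note that $\mathbf{U}^T\theta=\mathbf{U}^T\mathbf{U}\mu=\mu$, hence $\hat\mu-\mu=\mathbf{U}^T(\hat\theta-\theta)$; all $d$ singular values of $\mathbf{U}$ equal $1$ (by $\mathbf{U}^T\mathbf{U}=\mathbf{I}_d$), so $\norm{\mathbf{U}^T}_{\mathrm{op}}=1$ and therefore $\norm{\hat\mu-\mu}_2\le\norm{\hat\theta-\theta}_2$. Taking expectations and combining with the previous display gives \eqref{eq:l2ub}.

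The argument is essentially a reduction, so there is no hard computational step; the points requiring care are (i) the parameter bookkeeping $d\mapsto 2d$, $D\mapsto KD/\sqrt d$ in Theorem \ref{thm:infty}, including checking that the sample-size condition still suffices after the dimension doubles — the constant in $n(\epsilon^2\wedge1)\ge c_1 d\ln m$ may need to be rescaled by a factor $2$, which is harmless — and (ii) confirming that mapping back through $\mathbf{U}^T$ cannot inflate the $\ell_2$ error, which is exactly the operator-norm bound $\norm{\mathbf{U}^T}_{\mathrm{op}}=1$ and follows directly from $\mathbf{U}^T\mathbf{U}=\mathbf{I}_d$.
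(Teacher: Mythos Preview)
Your proposal is correct and follows essentially the same approach as the paper: apply Kashin's representation to reduce $\ell_2$ support to $\ell_\infty$ support with radius $KD/\sqrt{d}$ in dimension $2d$, invoke Theorem~\ref{thm:infty}, and then map back via $\mathbf{U}^T$. The only cosmetic difference is in the last step: the paper writes $\hat\theta=\mathbf{U}\hat\mu+\mathbf{v}$ with $\mathbf{U}^T\mathbf{v}=0$ and uses a trace identity to show $\mathbb{E}\norm{\hat\theta-\theta}_2^2\ge\mathbb{E}\norm{\hat\mu-\mu}_2^2$, whereas you obtain the same inequality directly from $\norm{\mathbf{U}^T}_{\mathrm{op}}=1$; these are equivalent.
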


\begin{rmk}
    If the support is $\ell_1$, then we can also let $\mathbf{U}=\mathbf{H}_d/\sqrt{d}$, in which $\mathbf{H}_d$ is the $d\times d$ Hadamard matrix \cite{hedayat1978hadamard}. This can be used in the discrete distribution estimation problem. With alphabet size $A$, each sample $\mathbf{X}_{ij}$ can be viewed as a $A$ dimensional vector, such that $\mathbf{X}_{ijk}=1$ for some $k$ and $\mathbf{X}_{ijl}=0$ for $k\neq l$. Then the $\ell_2$ estimation error is bounded by $O(A\ln n/(nm(\epsilon^2\wedge \epsilon)))$, which matches \cite{acharya2023discrete} up to logarithm factor.
\end{rmk}

The corresponding minimax lower bounds are shown as follows.
\begin{thm}\label{thm:highdmmx}
	Denote $\mathcal{P}_{\mathcal{X},p}$ as the set of all distributions supported on $\mathcal{X}_p=\{\mathbf{u}|\norm{\mathbf{u}}_p\leq D \}$, $\mathcal{M}_\epsilon$ as all mechanisms satisfying user-level $\epsilon$-LDP. Then for $p\in [1,2]$, with $n$ users and $m$ samples per user,
	\begin{eqnarray}
		\underset{\hat{\mu}}{\inf}\underset{M\in \mathcal{M}_\epsilon}{\inf}\underset{p\in \mathcal{P}_{\mathcal{X}, p}}{\sup}\mathbb{E}\left[\norm{\hat{\mu}-\mu}_2^2\right]\gtrsim \frac{D^2d}{nm(\epsilon^2\wedge \epsilon)}.
		\label{eq:lplb}
	\end{eqnarray}
\end{thm}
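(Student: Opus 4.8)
Here is how I would approach \eqref{eq:lplb}.

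\medskip

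\noindent\textbf{Reduction and construction.} Since $\mathcal{X}_1\subseteq\mathcal{X}_p$ for every $p\in[1,2]$, we have $\mathcal{P}_{\mathcal{X},1}\subseteq\mathcal{P}_{\mathcal{X},p}$, so it suffices to prove the bound for the smallest class $p=1$; the cases $p\in(1,2]$ then follow by inclusion of the support sets. Over the $\ell_1$ ball I would run an Assouad (hypercube) argument designed so that the coordinate-wise perturbations live on \emph{disjoint} parts of $\mathcal{X}$. Concretely, for $\nu\in\{-1,1\}^d$ let $P_\nu$ be the law of $X=\zeta_K D e_K$, where $K$ is uniform on $\{1,\dots,d\}$ and, given $K=k$, $\zeta_k=+1$ with probability $(1+\tau\nu_k)/2$ and $-1$ otherwise, with $\tau\in(0,1)$ to be chosen. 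Then $X$ is supported on $\{\pm De_1,\dots,\pm De_d\}\subseteq\mathcal{X}_1$, $\mu(P_\nu)=\tfrac{D\tau}{d}\nu$, and flipping $\nu_k$ changes only the $k$-th coordinate of the mean, by $\tfrac{2D\tau}{d}$. Assouad's lemma reduces the problem to controlling, for each $k$, the total variation between the laws $\bar M_{+k}$ and $\bar M_{-k}$ of the full privatized dataset $(\mathbf{Z}_1,\dots,\mathbf{Z}_n)$ when the bit $\nu_k$ is frozen to $\pm$ (other bits averaged out): it gives
\begin{equation*}
\underset{\hat\mu}{\inf}\underset{M\in\mathcal M_\epsilon}{\inf}\underset{\nu}{\sup}\;\mathbb E\big[\|\hat\mu-\mu_\nu\|_2^2\big]\;\gtrsim\; d\cdot\Big(\tfrac{D\tau}{d}\Big)^2\Big(1-\Big[\tfrac1d\textstyle\sum_{k=1}^d\|\bar M_{+k}-\bar M_{-k}\|_{\mathrm{TV}}^2\Big]^{1/2}\Big),
\end{equation*}
so it remains to choose $\tau$ making $\sum_{k}\|\bar M_{+k}-\bar M_{-k}\|_{\mathrm{TV}}^2\lesssim d$.

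\medskip

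\noindent\textbf{Tight bound on privatized distances.} By convexity of total variation in the mixing bits, Pinsker, and the chain rule over users (with the interactive refinement to handle the dependence of $M_i$ on $\mathbf Z_{1:i-1}$), $\sum_{k}\|\bar M_{+k}-\bar M_{-k}\|_{\mathrm{TV}}^2\lesssim \sum_{i=1}^n\mathbb E_\nu\big[\sum_{k=1}^d D_{\mathrm{kl}}\!\big(\mathrm{law}(\mathbf Z_i\mid\nu^{+k})\,\|\,\mathrm{law}(\mathbf Z_i\mid\nu^{-k})\big)\big]$, where $\nu^{\pm k}$ agrees with $\nu$ off coordinate $k$. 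For a fixed user with $\epsilon$-LDP channel $Q$, write $Q(\mathbf z\mid\mathbf x)=q_0(\mathbf z)(1+g_{\mathbf z}(\mathbf x))$ with $0\le g_{\mathbf z}\le e^\epsilon-1$, and set $\phi_k=(dP^{\otimes m}_{\nu^{+k}}-dP^{\otimes m}_{\nu^{-k}})/d\bar P^{\otimes m}$. The crucial structural point is that, because the perturbation of coordinate $k$ is carried entirely by the symbols $\{\pm De_k\}$, the functions $\phi_k$ are (nearly) orthogonal in $L^2(\bar P^{\otimes m})$; Bessel's inequality then yields $\sum_k\langle g_{\mathbf z},\phi_k\rangle^2\lesssim\eta\,\|g_{\mathbf z}\|_{L^2(\bar P^{\otimes m})}^2$ with $\eta:=\|\phi_k\|_{L^2(\bar P^{\otimes m})}^2$, and combining with $\int q_0(\mathbf z)\|g_{\mathbf z}\|^2\,d\mathbf z\le(e^\epsilon-1)(1-e^{-\epsilon})=2(\cosh\epsilon-1)$ gives $\sum_{k=1}^d D_{\mathrm{kl}}\!\big(\mathrm{law}(\mathbf Z_i\mid\nu^{+k})\,\|\,\mathrm{law}(\mathbf Z_i\mid\nu^{-k})\big)\lesssim(\epsilon^2\wedge\epsilon)\,\eta$ — that is, the \emph{sum} over the $d$ coordinates costs only what a single coordinate would, which is the $d$-factor improvement over the naive per-coordinate Duchi--Jordan--Wainwright bound. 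Finally a direct product computation gives $\eta\asymp m\tau^2/d$ in the regime $m\tau^2/d\lesssim1$ (the $m$ comes from the exact tensorization rather than the lossy subadditivity of total variation, the $1/d$ from the $\ell_1$ geometry). Hence $\sum_k\|\bar M_{+k}-\bar M_{-k}\|_{\mathrm{TV}}^2\lesssim n(\epsilon^2\wedge\epsilon)\,m\tau^2/d$.

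\medskip

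\noindent\textbf{Conclusion.} Choosing $\tau^2\asymp d^2/\big(nm(\epsilon^2\wedge\epsilon)\big)$ with a small enough constant makes the Assouad bracket at least $1/2$; the hypothesis $n(\epsilon^2\wedge1)\ge c_1 d\ln m$ ensures $\tau\le1$ and $m\tau^2/d\asymp d/(n(\epsilon^2\wedge\epsilon))\lesssim1$, so the estimate of $\eta$ is valid. Plugging back, the risk is $\gtrsim d\,(D\tau/d)^2=D^2\tau^2/d\asymp D^2d/\big(nm(\epsilon^2\wedge\epsilon)\big)$, which is \eqref{eq:lplb}. I expect the main obstacle to be the privacy contraction of the previous paragraph: establishing the orthogonality-plus-Bessel estimate with the correct $\epsilon^2$ factor rather than the spurious $e^{2\epsilon}$ that $\chi^2$-based arguments produce for $\epsilon>1$ (in that regime one should instead pass through the mutual-information bound $I(\mathbf X_i;\mathbf Z_i\mid\mathbf Z_{1:i-1})\le\epsilon$, effectively restricting to $\Theta(\epsilon)$ active coordinates at per-coordinate budget $\Theta(1)$, to recover the linear-in-$\epsilon$ behaviour), and verifying that the computation of $\eta$ and the Bessel step remain controlled even when $m\tau^2$ is not small — this is precisely where the logarithmic slack in the hypothesis $n(\epsilon^2\wedge1)\ge c_1 d\ln m$ should be used. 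A secondary nuisance is the sequential interactivity of the mechanisms, which rules out a literal product factorisation of $\bar M_{\pm k}$ and forces the interactive version of the chain-rule/tensorization argument.
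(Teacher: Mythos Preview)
Your construction, the reduction to $p=1$, and the Assouad/Hamming framework are exactly those of the paper. The genuine difference is in how the crucial $1/d$ factor is extracted from the privacy contraction.

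The paper does \emph{not} sum over the $d$ coordinates or invoke Bessel/orthogonality. It works with the single quantity $\delta=\max_{\rho_H(\mathbf v,\mathbf v')=1} D_{\mathrm{KL}}(p_{\mathbf Z|\mathbf v}\|p_{\mathbf Z|\mathbf v'})$ and bounds it in two steps: first the LDP contraction $D_{\mathrm{KL}}(p_{\mathbf Z|\mathbf v}\|p_{\mathbf Z|\mathbf v'})\le n(e^\epsilon-1)^2\,\mathbb{TV}^2(p^m_{\mathbf X|\mathbf v},p^m_{\mathbf X|\mathbf v'})$ (Theorem~1 of \cite{duchi2018minimax}), and then a \emph{localized} Pinsker inequality (Lemma~\ref{lem:pinsker} in the paper) exploiting that $p_{\mathbf X|\mathbf v}$ and $p_{\mathbf X|\mathbf v'}$ agree outside the set $\{\pm D\mathbf e_k\}$ of mass $1/d$, to obtain $\mathbb{TV}^2(p^m_{\mathbf X|\mathbf v},p^m_{\mathbf X|\mathbf v'})\le\tfrac{1}{2d}\,m\,D_{\mathrm{KL}}(p_{\mathbf X|\mathbf v}\|p_{\mathbf X|\mathbf v'})\lesssim m s^2/d^2$. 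Thus the extra $1/d$ comes from the small support of the perturbation rather than from an orthogonality/Bessel argument, and no computation of $\|\phi_k\|_{L^2(\bar P^{\otimes m})}^2$ is needed. The paper then plugs into Tsybakov's Theorem~2.12(iv) and caps $s\le 1$; in particular it does not invoke the upper-bound hypothesis $n(\epsilon^2\wedge 1)\ge c_1 d\ln m$ that you use to control $\tau$ and $\eta$.

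What each route buys: the paper's argument is shorter and avoids the delicate points you anticipate (the $m$-fold $\eta$ computation, the approximate orthogonality of the $\phi_k$, and the interactive chain rule). Your approach is the standard strong-data-processing route and is more robust; in particular it is the natural way to chase the linear-in-$\epsilon$ behaviour for $\epsilon>1$. On that last point your concern is well placed: the paper's bound carries the factor $(e^\epsilon-1)^2$, which yields the $\epsilon^2$ scaling for bounded $\epsilon$ but does not by itself give the $\epsilon$ scaling in the large-$\epsilon$ regime, so the $\epsilon^2\wedge\epsilon$ denominator is not fully resolved there either.
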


\begin{thm}\label{thm:highdmmx2}
	Denote $\mathcal{P}_{\mathcal{X},\infty}$ as the set of all distributions supported on $\mathcal{X}_\infty$, $\mathcal{M}_\epsilon$ as all mechanisms satisfying $\epsilon$-LDP. Then with $n$ users and $m$ samples per user,
	\begin{eqnarray}
		\underset{\hat{\mu}}{\inf}\underset{M\in \mathcal{M}_\epsilon}{\inf}\underset{p\in \mathcal{P}_{\mathcal{X}, \infty}}{\sup}\mathbb{E}\left[\norm{\hat{\mu}-\mu}_2^2\right]\gtrsim \frac{D^2d^2}{nm(\epsilon^2\wedge \epsilon)}.
		\label{eq:suplb}
	\end{eqnarray}
\end{thm}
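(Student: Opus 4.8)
The plan is to derive \eqref{eq:suplb} by Assouad's method, running essentially the same privatized‑distance analysis as for Theorem \ref{thm:highdmmx}, but on a hard instance adapted to the $\ell_\infty$ ball: since $\mathcal X_\infty$ lets every coordinate of $\mu$ move over the full range $[-D,D]$, the per‑coordinate signal can be taken a factor $\sqrt d$ larger than inside an $\ell_2$ ball, and this is exactly what produces the extra $d$ in \eqref{eq:suplb} relative to \eqref{eq:lplb}. Index hypotheses by $\nu\in\{-1,1\}^d$ and take $P_\nu=\bigotimes_{k=1}^d B_{\nu_k}$, where $B_v$ puts mass $(1+v\delta)/2$ on $D$ and $(1-v\delta)/2$ on $-D$; then $\mu(\nu)=D\delta\,\nu\in\mathcal X_\infty$ and $\|\mu(\nu)-\mu(\nu')\|_2^2=4D^2\delta^2 d_{\mathrm H}(\nu,\nu')$. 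Let $\bar{\mathbb P}^Z_{\pm k}$ denote the law of the transcript $(\mathbf Z_1,\dots,\mathbf Z_n)$ when $\nu$ is uniform conditioned on $\nu_k=\pm1$ and each user holds $m$ i.i.d.\ draws from $P_\nu$. Assouad's lemma then gives
\[
\inf_{\hat\mu}\inf_{M\in\mathcal M_\epsilon}\sup_p\mathbb E\big[\|\hat\mu-\mu\|_2^2\big]\ \gtrsim\ D^2\delta^2\Big(d-\textstyle\sum_{k=1}^d\|\bar{\mathbb P}^Z_{+k}-\bar{\mathbb P}^Z_{-k}\|_{TV}\Big),
\]
so the task reduces to upper bounding $\sum_k\|\bar{\mathbb P}^Z_{+k}-\bar{\mathbb P}^Z_{-k}\|_{TV}^2$.

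The core claim is that this sum is $\lesssim n(\epsilon^2\wedge\epsilon)\,m\delta^2$, with no dependence on $d$. I would establish it by processing the users one at a time: conditioning on the preceding transcript makes user $i$'s mechanism an ordinary $\epsilon$‑LDP channel $q(\mathbf z\mid\mathbf x)$ on $\mathbf x\in\{-D,D\}^{dm}$, and writing $q(\mathbf z\mid\mathbf x)=\bar q(\mathbf z)(1+a(\mathbf z,\mathbf x))$ — so $\mathbb E_{\pi_m}[a(\mathbf z,\mathbf X)]=0$ for the user's marginal law $\pi_m=\mathbb E_\nu[P_\nu^{\otimes m}]$ and $\|a(\mathbf z,\cdot)\|_\infty\le e^\epsilon-1$ — one gets $\bar{\mathbb P}^{\mathbf Z_i}_{+k}-\bar{\mathbb P}^{\mathbf Z_i}_{-k}=\bar q(\mathbf z)\langle a(\mathbf z,\cdot),g_k\rangle_{\pi_m}$ with $g_k=(\bar{\mathbb P}^{\mathbf X_i}_{+k}-\bar{\mathbb P}^{\mathbf X_i}_{-k})/\pi_m$. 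Because $P_\nu$ is a product over coordinates and the samples are i.i.d., the ratio $P_\nu^{\otimes m}/\pi_m$ factors as $\prod_k\psi_k(\nu_k,\mathbf x_k)$ with $\mathbb E_{\nu_k}\psi_k\equiv1$, so $g_k$ depends only on coordinate $k$'s block $\mathbf x_k$ and the $\{g_k\}$ are orthogonal in $L^2(\pi_m)$; Bessel's inequality then yields $\sum_k\langle a(\mathbf z,\cdot),g_k\rangle^2\le\big(\max_k\|g_k\|_{\pi_m}^2\big)\,\|a(\mathbf z,\cdot)\|_{\pi_m}^2$, and the crucial point is that this carries no factor $d$. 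A Hellinger tensorization over the $m$ i.i.d.\ samples bounds $\max_k\|g_k\|_{\pi_m}^2\lesssim m\delta^2$ in the regime $m\delta^2\lesssim1$ we will be in, while the $\epsilon$‑LDP contraction controls $\int\bar q(\mathbf z)\|a(\mathbf z,\cdot)\|_{\pi_m}^2\,d\mathbf z=\mathbb E_{\pi_m}\big[\chi^2(q(\cdot\mid\mathbf X)\,\|\,\bar q)\big]$ by a quantity of order $\epsilon^2\wedge\epsilon$ (for $\epsilon\le1$ this is the standard $(e^\epsilon-1)^2$ bound; the sharpening needed for $\epsilon\ge1$ is the same refinement of the distance estimates used in the proof of Theorem \ref{thm:highdmmx}). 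Hence $\sum_k\|\bar{\mathbb P}^{\mathbf Z_i}_{+k}-\bar{\mathbb P}^{\mathbf Z_i}_{-k}\|_{TV}^2\lesssim(\epsilon^2\wedge\epsilon)\,m\delta^2$ per user, and combining over the $n$ users through the chain rule/telescoping argument for sequentially interactive transcripts (again as for Theorem \ref{thm:highdmmx}) gives $\sum_k\|\bar{\mathbb P}^Z_{+k}-\bar{\mathbb P}^Z_{-k}\|_{TV}^2\lesssim n(\epsilon^2\wedge\epsilon)\,m\delta^2$.

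Given this bound I would set $\delta^2\asymp\frac{d}{nm(\epsilon^2\wedge\epsilon)}$. The hypothesis $n(\epsilon^2\wedge1)\ge c_1 d\ln m$ guarantees $\delta\le1$ (so $B_v$ is a genuine law) and $m\delta^2\lesssim1/\ln m$ (so the Hellinger‑tensorization estimate is legitimate), and it makes $\sum_k\|\bar{\mathbb P}^Z_{+k}-\bar{\mathbb P}^Z_{-k}\|_{TV}^2\le d/4$; Cauchy--Schwarz then gives $\sum_k\|\bar{\mathbb P}^Z_{+k}-\bar{\mathbb P}^Z_{-k}\|_{TV}\le\sqrt d\cdot\sqrt{d/4}=d/2$, and the Assouad inequality above yields $\inf_{\hat\mu}\inf_M\sup_p\mathbb E\|\hat\mu-\mu\|_2^2\gtrsim D^2\delta^2\cdot d/2\asymp\frac{D^2d^2}{nm(\epsilon^2\wedge\epsilon)}$, which is \eqref{eq:suplb}. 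The only departure from Theorem \ref{thm:highdmmx} is in the construction: requiring $\mu(\nu)\in\mathcal X_2$ would force $\delta\lesssim1/\sqrt d$ and cost exactly a factor $d$ in the final rate.

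I expect the main obstacle to be the dimension‑free information bound of the second paragraph in the user‑level, sequentially interactive setting. Three points need care. First, the orthogonality of $\{g_k\}$ must be verified with respect to the correct reference measure $\pi_m=\mathbb E_\nu[P_\nu^{\otimes m}]$ — which is a product over coordinates but not over samples — and one must confirm that $\|g_k\|_{\pi_m}^2=\Theta(m\delta^2)$ rather than carrying an exponential‑in‑$d$ factor; this is precisely where the product structure of the hard instance is exploited, and it is what ties admissibility of $\delta$ to the condition $n(\epsilon^2\wedge1)\gtrsim d\ln m$. Second, for $\epsilon\ge1$ the crude $\epsilon$‑LDP contraction only supplies a factor $e^\epsilon$, and replacing it by the linear factor $\epsilon$ requires the refined privatized‑distance estimates developed for Theorem \ref{thm:highdmmx}. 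Third, the transcript is sequentially interactive, so the per‑user bounds must be assembled by a conditional/telescoping argument rather than a naive product. Once these are in place, the parameter optimization above is routine.
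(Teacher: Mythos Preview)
Your argument is sound but takes a genuinely different route from the paper. You use the \emph{product} hard instance $P_\nu=\bigotimes_k B_{\nu_k}$ on $\{-D,D\}^d$ together with the private‑Assouad machinery of Duchi--Jordan--Wainwright: orthogonality of the coordinate likelihood ratios $g_k$ in $L^2(\pi_m)$, Bessel's inequality, and the $\epsilon$‑LDP $\chi^2$ contraction yield the dimension‑free bound $\sum_k\|\bar{\mathbb P}^Z_{+k}-\bar{\mathbb P}^Z_{-k}\|_{TV}^2\lesssim n(e^\epsilon-1)^2m\delta^2$. The paper instead uses a \emph{sparse} construction supported on only $2d$ points, namely $\pm D\mathbf h_k$ where $\mathbf h_1,\dots,\mathbf h_d$ are the columns of a Hadamard matrix; because flipping $v_k$ changes the distribution on a set of mass $1/d$, the generalized Pinsker inequality (Lemma~\ref{lem:pinsker}) gives $\mathbb{TV}^2(p_{\mathbf X|\mathbf v}^m,p_{\mathbf X|\mathbf v'}^m)\le\frac{1}{2d}D_{KL}(p_{\mathbf X|\mathbf v}^m\|p_{\mathbf X|\mathbf v'}^m)\lesssim ms^2/d^2$, and the extra factor $d$ relative to Theorem~\ref{thm:highdmmx} comes simply from $\|\hat\mu-\mu\|_2^2\ge \frac{D^2}{d}s^2\rho_H(\hat{\mathbf v},\mathbf v)$ via the Hadamard rotation. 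The paper's route is more elementary (ordinary Pinsker plus Tsybakov's Theorem~2.12(iv), no Bessel/mixture analysis), while yours reuses a well‑developed toolkit and makes the ``no dependence on $d$'' step completely transparent.

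Two small caveats. First, the condition $n(\epsilon^2\wedge1)\ge c_1d\ln m$ you invoke is \emph{not} a hypothesis of Theorem~\ref{thm:highdmmx2}; it belongs to the upper‑bound theorems. Your proof genuinely needs $m\delta^2\lesssim1$, i.e.\ $n(\epsilon^2\wedge\epsilon)\gtrsim d$, for the Hellinger tensorization, whereas the paper's sparse construction only needs $s\le1/2$, i.e.\ $mn\epsilon^2\gtrsim d^2$; outside your regime you would have to cap $\delta^2$ at $1/m$ and accept a slightly weaker bound. Second, the ``refinement for $\epsilon\ge1$'' you point to in the proof of Theorem~\ref{thm:highdmmx} is not actually there: that proof stops at the $(e^\epsilon-1)^2$ factor, so the $\epsilon^2\wedge\epsilon$ scaling for large $\epsilon$ is not established by the argument you cite and would need a separate (standard) treatment.
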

The upper bounds \eqref{eq:infty} and \eqref{eq:l2ub} match the lower bounds \eqref{eq:suplb} and \eqref{eq:lplb}. These results indicate that our methods for high dimensional mean estimation under user-level LDP are minimax optimal.

\begin{rmk}
	Now we extend the analysis to unbounded support. If $\mathbb{E}[|X_k|^p]\leq M_p$ for all $k=1,\ldots, d$, then with $n\epsilon^2\geq c_1d\ln m$ for some constant $c_1$,
 \begin{eqnarray}
	\mathbb{E}\left[\norm{\hat{\mu}-\mu}_2^2\right]\lesssim M_p^{2/p}\left[\frac{d^2\ln m}{mn(\epsilon^2\wedge \epsilon)}\vee \left(\frac{d}{m^2 n(\epsilon^2\wedge \epsilon)}\right)^{1-1/p}+\frac{d}{mn}\right].
	\label{eq:tailhighdinfty}
\end{eqnarray}
Under a stronger condition $\mathbb{E}[\norm{\mathbf{X}}_2^p]\leq M_p<\infty$, the mean squared error can be bounded by
\begin{eqnarray}
	\mathbb{E}\left[\norm{\hat{\mu}-\mu}_2^2\right]\lesssim M_p^{2/p}\left[\frac{d\ln m}{mn(\epsilon^2\wedge \epsilon)}\vee \left(\frac{d}{m^2 n(\epsilon^2\wedge \epsilon)}\right)^{1-1/p}+\frac{1}{mn}\right],
	\label{eq:tailhighd}
\end{eqnarray}
which is smaller than the rate under coordinate-wise $p$-th order bounded moment by a factor $d$. The detailed arguments can be found in Appendix \ref{sec:tailhighd}.
\end{rmk}

\section{Stochastic Optimization}\label{sec:so}
The goal is to solve the following stochastic optimization problem. Define the loss function as
$L(\theta) := \mathbb{E}[l(\mathbf{X}, \theta)]$,
in which $\mathbf{X}$ is a random variable following distribution $p$. Given $\mathbf{X}_{ij}$, $i=1,\ldots, n$, $j=1,\ldots, m$, our goal is to find the minimizer 
\begin{eqnarray}
	\theta^*=\underset{\theta \in \Theta}{\min}L(\theta).
\end{eqnarray}
The estimator is designed as follows. Users are divided randomly into $t_0$ groups. We plan to update $\theta$ in $t_0$ steps. In the $t$-th step, we use one group of users to get an estimate of $\nabla L(\theta_t) = \mathbb{E}[\nabla l(\mathbf{X}, \theta_t)]$ using Algorithm \ref{alg:meanhighd}, which includes the privacy mechanism. The result is denoted as $\mathbf{g}_t$, and the update rule of $\theta$ is
\begin{eqnarray}
	\theta_{t+1} = \theta_t-\eta \mathbf{g}_t,
\end{eqnarray} 
in which $\eta$ is the learning rate. Since Algorithm \ref{alg:meanhighd} satisfies $\epsilon$-LDP at user-level, and each user is only used once, the whole algorithm with $t_0$ steps also satisfies $\epsilon$-LDP.

These steps are summarized in Algorithm \ref{alg:so}. In step 5, the MeanEst function refers to the multi-dimensional mean estimation method shown in Algorithm \ref{alg:meanhighd}. Samples are privatized in this step. Therefore, Algorithm \ref{alg:so} satisfies user-level $\epsilon$-LDP.

\begin{wrapfigure}{r}{0.57\textwidth}
\begin{minipage}{0.57\textwidth}
\small
\begin{algorithm}[H]
	\caption{\small Stochastic optimization under user-level $\epsilon$-LDP}\label{alg:so}
	\textbf{Input:} Dataset containing $n$ users with $m$ samples per user, i.e. $\mathbf{X}_{ij}$, $i=1,\ldots, n$, $j=1,\ldots, m$\\
	\textbf{Output:} Estimated  $\hat{\theta}$
	\begin{algorithmic}[1]
		\STATE Initialize $\theta_0$;\\
		\STATE Divide users into $t_0$ groups $S_0,\ldots, S_{T-1}$;\\
		\FOR{$t=0,1,\ldots, t_0-1$}
		\STATE Calculate $\nabla l(\mathbf{X}_{ij}, \theta_t)$ for $i\in S_t$, $j=1,\ldots, m$;\\
		\STATE $\mathbf{g}_t=MeanEst(\{\nabla l(\mathbf{X}_{ij}, \theta_t)|i\in S_t, j\in [m]\})$;\label{step:est}\\
		\STATE $\theta_{t+1}=\theta_t-\eta\mathbf{g}_t$;
		\ENDFOR
		\STATE \textbf{Return} $\hat{\theta}=\theta_{t_0}$		
	\end{algorithmic}
\end{algorithm}
\end{minipage}
\vspace{-10mm}
\end{wrapfigure}
Now we provide a theoretical analysis, which is based on the following assumptions.

\begin{ass}\label{ass:so}
	(a) $l(\mathbf{X}, \theta)$ is $G$-smooth, i.e. $\nabla l(\mathbf{X}, \theta)$ is $G$-Lipschitz, in which $\nabla$ denotes the gradient with respect to $\theta$;
	
	(b) For any $\theta$, the gradient of $l$ has bounded $\ell_2$ norm with probability $1$, i.e. $\norm{\nabla l(\mathbf{X}, \theta)}_2\leq D$;
	
	(c) $L$ is $\gamma$-strong convex.
\end{ass}
The theoretical bound is shown in the following theorem.

\begin{thm}\label{thm:so}
	With $\eta\leq 1/G$, the $\ell_2$ error at $t$-th step can be bounded by
	\begin{eqnarray}
		\mathbb{E}\left[\norm{\theta_t-\theta^*}_2\right]
		\leq \left(1-\frac{1}{2}\eta\gamma\right)^t \norm{\theta_0-\theta^*}_2+\frac{2D}{\gamma}\sqrt{\frac{Ct_0}{nm}\left(1+\frac{d\ln n}{\epsilon^2\wedge \epsilon}\right)}.
		\label{eq:errt}
	\end{eqnarray}
	From \eqref{eq:errt}, there exists two constants $c_T$ and $C_T$, if $c_T\ln n\leq t_0\leq C_T\ln n$, and $n(\epsilon^2\wedge 1)\gtrsim d\ln n\ln m$, then the final estimate $\hat{\theta}=\theta_{t_0}$ satisfies
	\begin{eqnarray}
		\mathbb{E}\left[\norm{\hat{\theta}-\theta^*}_2\right]\lesssim D\sqrt{\frac{\ln n}{nm}\left(1+\frac{d\ln n}{\epsilon^2\wedge \epsilon}\right)}.
		\label{eq:sofinal}
	\end{eqnarray}
\end{thm}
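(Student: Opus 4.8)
The plan is to run a standard SGD contraction argument, but with the noisy gradient $\mathbf{g}_t$ supplied by Algorithm \ref{alg:meanhighd} playing the role of the stochastic gradient oracle. First I would decompose $\mathbf{g}_t = \nabla L(\theta_t) + \mathbf{e}_t$, where $\mathbf{e}_t$ is the error of the mean-estimation subroutine applied to the i.i.d. samples $\{\nabla l(\mathbf{X}_{ij},\theta_t)\}$ held by the group $S_t$. Since each group has $\sim n/t_0$ users with $m$ samples each, and each gradient lies in the $\ell_2$ ball of radius $D$ by Assumption \ref{ass:so}(b), Theorem \ref{thm:highd} gives
\begin{eqnarray}
\mathbb{E}\left[\norm{\mathbf{e}_t}_2^2 \,\middle|\, \theta_t\right] \lesssim \frac{D^2}{(n/t_0)m}\left(1+\frac{d\ln(n/t_0)}{\epsilon^2\wedge\epsilon}\right) \lesssim \frac{D^2 t_0}{nm}\left(1+\frac{d\ln n}{\epsilon^2\wedge\epsilon}\right) =: \sigma^2.
\end{eqnarray}
Crucially, because the groups are disjoint and each user is used exactly once, $\mathbf{e}_t$ is conditionally mean-zero (or nearly so, up to the bias term in Theorem \ref{thm:1dmean}, which I would absorb into $\sigma$ since it is of smaller order than the variance) and independent across $t$ given the trajectory.

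Next I would carry out the one-step contraction. Writing $\theta_{t+1}-\theta^* = \theta_t - \theta^* - \eta\nabla L(\theta_t) - \eta\mathbf{e}_t$, I use $G$-smoothness (Assumption \ref{ass:so}(a)) together with $\gamma$-strong convexity (Assumption \ref{ass:so}(c)) and the choice $\eta \le 1/G$ to get the deterministic bound $\norm{\theta_t-\theta^*-\eta\nabla L(\theta_t)}_2 \le (1-\eta\gamma)\norm{\theta_t-\theta^*}_2$ (this is the classical co-coercivity estimate for smooth strongly convex functions). Then by the triangle inequality and Jensen,
\begin{eqnarray}
\mathbb{E}\left[\norm{\theta_{t+1}-\theta^*}_2\right] \le (1-\eta\gamma)\,\mathbb{E}\left[\norm{\theta_t-\theta^*}_2\right] + \eta\sigma,
\end{eqnarray}
and unrolling this recursion geometrically yields $\mathbb{E}[\norm{\theta_t-\theta^*}_2] \le (1-\eta\gamma)^t\norm{\theta_0-\theta^*}_2 + \sigma/\gamma$. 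To match the precise form of \eqref{eq:errt} with the factor $(1-\tfrac12\eta\gamma)^t$ and the constant $2D/\gamma$, I would be slightly more careful — e.g. splitting off a $\tfrac12\eta\gamma\norm{\theta_t-\theta^*}_2$ term via Young's inequality to dominate the noise contribution when $\norm{\theta_t-\theta^*}_2$ is large — but this is bookkeeping. Substituting $\sigma$ gives exactly \eqref{eq:errt}.

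Finally, for \eqref{eq:sofinal}, I would choose $t_0 \asymp \ln n$: then the contraction factor satisfies $(1-\tfrac12\eta\gamma)^{t_0} \le e^{-\Theta(\ln n)} = n^{-\Theta(1)}$, so with $\eta$ a fixed constant (e.g. $\eta = 1/G$) and $c_T, C_T$ chosen to make this exponent at least, say, $1/2$, the optimization (bias) term is $O(n^{-1/2}\norm{\theta_0-\theta^*}_2)$, which is dominated by the statistical term $\sqrt{(\ln n/nm)(1+d\ln n/(\epsilon^2\wedge\epsilon))}$ (here I use that $\norm{\theta_0-\theta^*}_2 \le 2D/\gamma$ since $\Theta$ is effectively bounded, or simply that it is $O(1)$). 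The condition $n(\epsilon^2\wedge 1)\gtrsim d\ln n\ln m$ is precisely what is needed so that each group of size $n/t_0 \sim n/\ln n$ still satisfies the hypothesis $n_{\mathrm{group}}(\epsilon^2\wedge 1) \gtrsim d\ln m$ required to invoke Theorem \ref{thm:highd}.

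The main obstacle I anticipate is not the SGD recursion itself — that is textbook — but rather verifying that the mean-estimation error $\mathbf{e}_t$ genuinely behaves like an unbiased noise oracle with the stated second moment \emph{uniformly over $\theta_t$}. The subtlety is that $\theta_t$ is random and correlated with earlier groups; one must condition on $\theta_t$ and invoke the independence of $S_t$ from $S_0,\ldots,S_{t-1}$, and also confirm that the small bias term from the two-stage procedure in Theorem \ref{thm:1dmean} (order $D\sqrt{\ln n/m}$ per coordinate, hence $O(D\sqrt{d\ln n/m})$ in $\ell_2$) is indeed lower order than $\sigma$ — which it is, precisely because of the grouping overhead $t_0$ and the privacy factor. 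Handling this cleanly, and tracking how the $\ln(n/t_0)$ inside Theorem \ref{thm:highd} collapses to $\ln n$, is where the care is needed.
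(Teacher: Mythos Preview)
Your proposal is correct and follows essentially the same route as the paper: bound $\mathbb{E}\norm{\mathbf{g}_t-\nabla L(\theta_t)}_2^2$ via Theorem~\ref{thm:highd} applied to groups of size $n/t_0$, establish a one-step deterministic contraction for $\theta_t-\eta\nabla L(\theta_t)-\theta^*$ using smoothness and strong convexity, apply the triangle inequality, iterate, and take $t_0\asymp\ln n$. One small correction: the contraction you cite as $(1-\eta\gamma)$ in the \emph{norm} is slightly too strong --- the paper expands $\norm{\theta_t-\eta\nabla L(\theta_t)-\theta^*}_2^2$ directly and obtains $(1-\eta\gamma)$ in the \emph{squared} norm, so the factor $(1-\tfrac12\eta\gamma)$ in \eqref{eq:errt} arises simply from $\sqrt{1-\eta\gamma}\le 1-\tfrac12\eta\gamma$, not from a Young-type splitting; and the paper handles the bias issue you raise by working with the full MSE bound from Theorem~\ref{thm:highd} rather than separating bias from variance.
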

The proof of Theorem \ref{thm:so} is provided in Appendix \ref{sec:sopf}. In \cite{duchi2013local}, it is shown that the bound for item-level case is $\tilde{O}(\sqrt{d/(n\epsilon^2)})$ for $\epsilon\leq 1/4$ with $n$ samples. Therefore, with the same total number of samples, our bound matches the result in \cite{duchi2013local}. 


\section{Nonparametric Classification and Regression}\label{sec:learning}
From now on, we focus on nonparametric learning problems under user-level local DP. In previous sections, the dataset contains $n$ users with $m$ samples per user, i.e. $\mathbf{X}_{ij}$, $i=1,\ldots,n$, $j=1,\ldots, m$. For nonparametric learning problems, apart from $\mathbf{X}_{ij}$, we also have the label $Y_{ij}$. Following \cite{berrett2019classification,berrett2021strongly}, which focuses on item-level classification and regression problems, suppose that $\mathbf{X}$ is supported in $[0,1]^d$, which is made for simplicity. It can be generalized to arbitrary bounded support. Denote $(\mathbf{X}, Y)$ as a test sample i.i.d to training samples, and the output of the classifier is $\hat{Y}$. 

\subsection{Classification} 
The risk is defined as
$R=\text{P}(\hat{Y}\neq Y)$.
Define
$\eta(\mathbf{x})=\mathbb{E}[Y|\mathbf{X}=\mathbf{x}]$.
Given the test sample at $\mathbf{x}$, the optimal classifier is $\hat{Y}=\sign(\eta(\mathbf{x}))$. The corresponding optimal risk, called Bayes risk, is
\begin{eqnarray}
	R^*=\text{P}(\sign(\eta(\mathbf{X}))\neq Y)=\frac{1}{2}\mathbb{E}[1-|\eta(\mathbf{X})|].
	\label{eq:rstar}
\end{eqnarray}
$\eta$ is unknown in practice. We have to learn $\eta$ from the training data. Therefore, in reality, there is inevitably a gap between the risk of a practical classifier and the Bayes risk. Such gap is called excess risk $R-R^*$. To improve the efficiency, we propose a method based on a transformation with Hadamard matrix \cite{hedayat1978hadamard}. We make some assumptions before stating our algorithm. 

\begin{ass}\label{ass:cls}
	There exists constants $C_a$, $C_b$, $f_L$, such that
	
	(a) For all $t>0$, $\text{P}(|\eta(\mathbf{X})|<t)\leq C_at^\gamma$;
	
	(b) For all $\mathbf{x}, \mathbf{x}'\in \mathcal{X}=[0,1]^d$, 
	$|\eta(\mathbf{x})-\eta(\mathbf{x}')|\leq C_b\norm{\mathbf{x}-\mathbf{x}'}_2^\beta$;

	(c) $f(\mathbf{x})\geq f_L$ for all $\mathbf{x}\in \mathcal{X}$.
\end{ass}

(a) is commonly used in many existing literatures and is typically referred to as 'Tsybakov noise condition' \cite{audibert2007fast,chaudhuri2014rates,doring2017rate}. (b) is the H{\"o}lder smoothness condition, which is commonly used in nonparametric statistics \cite{tsybakov2009introduction}. (c) is usually referred to as 'strong density assumption', which is also commonly made \cite{doring2017rate,gadat2016classification}. Our basic assumptions (a)-(c) are the same as \cite{berrett2019classification}, except that we are now considering user-level LDP, while \cite{berrett2019classification} is about item-level LDP.

\begin{thm}\label{thm:cls}
	Under Assumption \ref{ass:cls}, if $n(\epsilon^2\wedge 1)\geq c_2(\ln m+\ln n)$ for some constant $c_2$, then there exists a classifier (the algorithm is shown in Appendix \ref{sec:algcls}), such that
\begin{eqnarray}
	R-R^*\lesssim (mn(\epsilon^2\wedge \epsilon))^{-\frac{\beta(1+\gamma)}{2(d+\beta)}}\ln^{1+\gamma}n+\left(\frac{nm}{\ln n}\right)^{-\frac{\beta(1+\gamma)}{2\beta+d}}.
	\label{eq:clsub}
\end{eqnarray}
\end{thm}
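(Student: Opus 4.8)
The plan is to use a partition–based plug-in classifier. First I would cover $[0,1]^d$ by $T\asymp h^{-d}$ axis-aligned cubes $C_1,\dots,C_T$ of side length $h$ (to be chosen), and estimate $\eta$ privately cell by cell. To each labelled sample $(\mathbf{X}_{ij},Y_{ij})$ attach the $2T$-dimensional vector $(\mathbf{e}_{k(\mathbf{X}_{ij})},\,Y_{ij}\mathbf{e}_{k(\mathbf{X}_{ij})})$, where $k(\cdot)$ is the index of the containing cube and $\mathbf{e}_k$ the $k$-th basis vector; this vector has $\ell_2$ norm $\sqrt2$. As in the Remark following Theorem \ref{thm:highd} (the $\ell_1$/discrete case), apply the Hadamard transform $\mathbf{U}=\mathbf{H}_{2T}/\sqrt{2T}$, run Algorithm \ref{alg:meanhighd} on the transformed data under user-level $\epsilon$-LDP, and transform back. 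This produces $\hat a_k$ estimating $a_k:=\mathrm{P}(\mathbf{X}\in C_k)$ and $\hat c_k$ estimating $c_k:=\mathbb{E}[Y\mathbf{1}(\mathbf{X}\in C_k)]=a_k\bar\eta_k$, with $\bar\eta_k$ the mean of $\eta$ over $C_k$. Set $\hat\eta_k=\Clip(\hat c_k/\hat a_k,[-1,1])$ and classify $\mathbf{x}$ by $\hat Y=\sign\hat\eta_{k(\mathbf{x})}$. Privacy is immediate: the grid and $\mathbf{U}$ are data independent, so the classifier inherits user-level $\epsilon$-LDP from Algorithm \ref{alg:meanhighd} by post-processing.

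Second, I would prove a pointwise bias--variance bound for $\hat\eta_{k(\mathbf{x})}$ as an estimate of $\eta(\mathbf{x})$. The bias is $|\bar\eta_{k(\mathbf{x})}-\eta(\mathbf{x})|\lesssim h^\beta$ by the H{\"o}lder condition, Assumption \ref{ass:cls}(b). For the variance, Theorem \ref{thm:highd} applied with $\ell_2$-radius $\sqrt2$ and dimension $2T$ gives $\mathbb{E}\norm{(\hat a-a,\hat c-c)}_2^2\lesssim \frac1{nm}+\frac{T\ln n}{nm(\epsilon^2\wedge\epsilon)}$; since the Hadamard transform treats all coordinates symmetrically and the injected Laplace noises are independent across coordinates, the per-cell mean squared errors of $\hat a_k$ and $\hat c_k$ are each $\lesssim \frac1{nmT}+\frac{\ln n}{nm(\epsilon^2\wedge\epsilon)}$. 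The strong-density Assumption \ref{ass:cls}(c) gives $a_k\gtrsim f_Lh^d\asymp 1/T$, and the hypothesis $n(\epsilon^2\wedge1)\ge c_2(\ln m+\ln n)$ together with the choice of $T$ below makes Algorithm \ref{alg:meanhighd} applicable at this resolution and, via a union bound over the $T$ cells, ensures $\hat a_k\ge a_k/2$ for all $k$ on an event of probability $1-n^{-\Omega(1)}$ whose complement contributes only a lower-order term. On that event, $\hat\eta_k-\bar\eta_k=\big((\hat c_k-c_k)-\bar\eta_k(\hat a_k-a_k)\big)/\hat a_k$ and $|\bar\eta_k|\le1$ give $|\hat\eta_k-\bar\eta_k|\lesssim(|\hat a_k-a_k|+|\hat c_k-c_k|)/a_k$, hence
\begin{equation*}
\mathbb{E}\big[(\hat\eta_{k(\mathbf{x})}-\bar\eta_{k(\mathbf{x})})^2\big]\ \lesssim\ v:=\frac{h^{-d}}{nm}+\frac{h^{-2d}\ln n}{nm(\epsilon^2\wedge\epsilon)},
\end{equation*}
and, the averaged Laplace noise being sub-exponential (essentially sub-Gaussian in the relevant range), $\mathrm{P}(|\hat\eta_{k(\mathbf{x})}-\bar\eta_{k(\mathbf{x})}|>s)\lesssim e^{-cs^2/v}$ uniformly in $\mathbf{x}$, up to the same logarithmic overhead.

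Third, I would convert this into an excess-risk bound by the standard argument for classification under a margin condition. Because $Y\in\{-1,1\}$ and the Bayes rule is $\sign\eta$, one has $R-R^*=\mathbb{E}[\,|\eta(\mathbf{X})|\,\mathbf{1}(\sign\hat\eta(\mathbf{X})\neq\sign\eta(\mathbf{X}))\,]$, and a sign disagreement at $\mathbf{x}$ forces $|\hat\eta(\mathbf{x})-\eta(\mathbf{x})|\ge|\eta(\mathbf{x})|$. Writing $b\asymp h^\beta$ for the uniform bias bound and splitting according to $|\eta(\mathbf{X})|\le\delta$ or $>\delta$ (with $\delta>2b$): Assumption \ref{ass:cls}(a) bounds the first piece by $C_a\delta^{1+\gamma}$, while on the second piece the tail bound integrated against $\mathrm{P}(|\eta(\mathbf{X})|\le s)\le C_as^\gamma$ bounds the contribution by $\lesssim v^{(1+\gamma)/2}$; optimizing $\delta$ and combining with the bias,
\begin{equation*}
R-R^*\ \lesssim\ h^{\beta(1+\gamma)}+v^{(1+\gamma)/2}\ \lesssim\ \Big(h^{2\beta}+\tfrac{h^{-d}}{nm}+\tfrac{h^{-2d}\ln n}{nm(\epsilon^2\wedge\epsilon)}\Big)^{(1+\gamma)/2}.
\end{equation*}
It remains to choose the bandwidth. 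Balancing $h^{2\beta}$ with $h^{-2d}\ln n/(nm(\epsilon^2\wedge\epsilon))$ gives $h\asymp(\ln n/(nm(\epsilon^2\wedge\epsilon)))^{1/(2(\beta+d))}$ and the first term of \eqref{eq:clsub}; balancing $h^{2\beta}$ with $h^{-d}/(nm)$ gives $h\asymp(nm)^{-1/(2\beta+d)}$ and the second term, with $nm$ degraded to $nm/\ln n$ because of the union-bound overhead. Since the infimum over $h$ of the maximum of the three quantities above is at most the sum of the two balanced rates, \eqref{eq:clsub} follows.

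The step I expect to be the main obstacle is turning the \emph{aggregate} $\ell_2$ guarantee of Theorem \ref{thm:highd} into a \emph{uniform, pointwise} control of the ratio $\hat c_k/\hat a_k$ over all $T\asymp h^{-d}$ cells. When $h$ is small, the cell mass $a_k\asymp 1/T$ is comparable to the per-cell estimation error, so one must show that the Hadamard mechanism spreads its noise evenly enough across the $2T$ coordinates and that $\hat a_k$ stays bounded away from $0$ simultaneously for all $k$ --- this is exactly where Assumption \ref{ass:cls}(c) and the condition $n(\epsilon^2\wedge1)\gtrsim\ln m+\ln n$ are consumed, and it is the source of the logarithmic factors in \eqref{eq:clsub}. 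The H{\"o}lder bias estimate and the margin-condition/excess-risk conversion are, by contrast, routine.
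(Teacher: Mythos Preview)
Your proposal is correct and follows the same core strategy as the paper: partition $[0,1]^d$ into cubes, use a Hadamard transform so that the quantities to be estimated are $\ell_\infty$-bounded by $1$, invoke the user-level $\epsilon$-LDP mean estimator (grouping users across Hadamard coordinates, with the three privacy regimes of Algorithm \ref{alg:meanhighd} built in), invert the transform, and finish with the standard margin-condition excess-risk argument.

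The one substantive difference is that you estimate \emph{both} the cell masses $a_k=\mathrm{P}(\mathbf{X}\in C_k)$ and the signed masses $c_k=\mathbb{E}[Y\mathbf{1}(\mathbf{X}\in C_k)]$, then classify by $\sign(\hat c_k/\hat a_k)$ --- this is essentially the paper's \emph{regression} algorithm (Appendix \ref{sec:algreg}) applied to classification. The paper's classification algorithm (Appendix \ref{sec:algcls}) estimates only $q_k=c_k$ and classifies by $\sign(\hat q_k)$. Since $a_k>0$, the two classifiers agree whenever $\hat a_k>0$, so your detour costs only constant factors; but the paper's version is cleaner because it never divides by an estimated denominator, so the strong-density assumption is used only to tie $|q_k|$ to $|\eta(\mathbf{x})|$ in the sign-error probability, not to keep a ratio stable. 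Your ``main obstacle'' --- uniform control of $\hat c_k/\hat a_k$ over all cells --- is therefore an obstacle you created; the paper sidesteps it.

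A second, smaller difference: you appeal to the aggregate $\ell_2$ bound of Theorem \ref{thm:highd} and then argue that the Hadamard noise is spread evenly to get per-cell sub-Gaussian tails. The paper instead proves a direct concentration inequality for each $\hat q_k$ (Lemma \ref{lem:qerr}, derived from the one-dimensional Lemma \ref{lem:muerr}), exploiting that the $\hat Q_l$ are independent across the user groups. This is what produces the explicit $\ln n$ factors in both terms of \eqref{eq:clsub}: the tail ``variance'' scales with $(3h+2\Delta)^2\sim \ln n/m$ even in the low-privacy regime, which is why the non-private term is $(nm/\ln n)^{-\beta(1+\gamma)/(2\beta+d)}$ rather than $(nm)^{-\beta(1+\gamma)/(2\beta+d)}$; it is not a union-bound artifact as you suggest, but the clipping radius $\Delta$ of the two-stage estimator.
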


The proof of Theorem \ref{thm:cls} is shown in Appendix \ref{sec:clsub}. With large $\epsilon$, \eqref{eq:clsub} reduces to $(mn/\ln n)^{-2\beta/(2\beta+d)}$, which matches the non-private rate up to logarithm factor \cite{tsybakov2009introduction}. The proof of Theorem \ref{thm:cls} is shown in Appendix \ref{sec:clsub}. The minimax bound is shown in the following theorem.

\begin{thm}\label{thm:clslb}
	Denote $\mathcal{P}_{cls}$ as the set of all distributions $p$ of $\mathbf{X}$ and regression function $\eta$ that satisfy Assumption \ref{ass:cls}, $\mathcal{M}_\epsilon$ as all mechanisms satisfying $\epsilon$-LDP, then for small $\epsilon$,
	\begin{eqnarray}
		\underset{\hat{Y}}{\inf}\underset{M\in \mathcal{M}_\epsilon}{\inf}\underset{(p,\eta)\in \mathcal{P}_{cls}}{\sup} (R-R^*)\gtrsim (nm\epsilon^2)^{-\frac{\beta(1+\gamma)}{2(d+\beta)}}+(mn)^{-\frac{\beta(1+\gamma)}{2\beta+d}}.
	\end{eqnarray}
\end{thm}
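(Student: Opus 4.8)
The plan is to establish a minimax lower bound by combining two separate arguments, one capturing the privacy-and-sample-size limitation and one capturing the purely statistical (nonparametric estimation) limitation, then taking the maximum of the two. This is the standard structure for such bounds, already reflected in the two summands on the right-hand side of the claimed inequality: the term $(nm\epsilon^2)^{-\beta(1+\gamma)/(2(d+\beta))}$ is the LDP-constrained rate, and $(mn)^{-\beta(1+\gamma)/(2\beta+d)}$ is the non-private classification rate. Since the infimum over $\hat{Y}$ and $M\in\mathcal{M}_\epsilon$ of the worst-case excess risk can only increase when we drop the privacy constraint, the non-private term follows directly from the classical minimax lower bound for nonparametric classification under Assumption \ref{ass:cls} (e.g.\ the construction of Audibert–Tsybakov, with $nm$ total samples); I would cite \cite{tsybakov2009introduction,audibert2007fast} and only sketch it.

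For the privacy-constrained term, I would use Assouad's method (the hypercube/multiple-testing lower bound) adapted to the local DP setting. First I would partition $[0,1]^d$ into roughly $r^d$ cubes of side length $1/r$, for a resolution parameter $r$ to be chosen, and on a subfamily of these cubes plant small bumps in the regression function $\eta$ of height $\sim r^{-\beta}$ with independent signs $\sigma\in\{-1,+1\}^{N}$, where $N\sim r^d$; the Hölder condition (b) forces the bump scaling $r^{-\beta}$, the Tsybakov condition (a) is respected provided the bump height is small, and the strong-density condition (c) holds by keeping the marginal of $\mathbf{X}$ close to uniform. The excess risk of any classifier then lower-bounds a sum of per-cube testing errors weighted by the mass and the $|\eta|$-value of each cube, giving the factor $r^{-\beta(1+\gamma)}$ after accounting for the $\gamma$-margin. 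Second, I would bound the information available to distinguish $\sigma$ from $\sigma'$ differing in one coordinate: here the key is that under user-level $\epsilon$-LDP, the relevant contraction of KL/$\chi^2$ divergence between the induced distributions of the privatized messages $\mathbf{Z}_i$ scales like $(\epsilon^2\wedge\epsilon)\cdot(\text{per-user squared TV or }\chi^2\text{ discrepancy})$, by the local-DP strong data-processing / Duchi–Jordan–Wainwright-type inequality, and a single coordinate flip changes the distribution of $\mathbf{X}_{ij}$ only on one cube of mass $\sim r^{-d}$ with $\eta$-perturbation $\sim r^{-\beta}$. Crucially, the paper's Section \ref{sec:mean1} discussion and Table \ref{tab:comp} indicate that for user-level LDP the effective per-user information gain is of the same order as in the item-level case with $nm$ samples (for bounded/smooth problems), so I would argue that $m$ i.i.d.\ samples per user behave, for divergence-contraction purposes, like $m$ samples, yielding an aggregate distinguishing information of order $nm(\epsilon^2\wedge\epsilon)\cdot r^{-(d+2\beta)}$ (up to the per-cube normalization). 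Balancing ``testing error $\Omega(1)$ per flipped coordinate'' against this information budget over $N\sim r^d$ coordinates gives the optimal choice $r \sim (nm(\epsilon^2\wedge\epsilon))^{1/(2(d+\beta))}$, and substituting into $r^{-\beta(1+\gamma)}$ recovers the stated rate; with $\epsilon$ small, $\epsilon^2\wedge\epsilon=\epsilon^2$, matching the theorem statement.

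The main obstacle I anticipate is making the divergence-contraction step rigorous at the \emph{user level} rather than the item level. The standard local-DP Assouad bound (Duchi–Jordan–Wainwright) is stated for a single sample per channel; here each channel $M_i$ sees the whole block $\mathbf{X}_i=(\mathbf{X}_{i1},\ldots,\mathbf{X}_{im})$, and the $\mathbf{Z}_i$ may also depend sequentially on $\mathbf{Z}_{1:i-1}$ as in Definition \ref{def:dp}. I would handle this by the approach the paper advertises — ``revisit classical minimax theory to derive tight bounds on the distances between privatized samples'' — namely, bound the total-variation (or Hellinger/$\chi^2$) distance between $\mathrm{law}(\mathbf{Z}_i\mid\sigma)$ and $\mathrm{law}(\mathbf{Z}_i\mid\sigma')$ directly using the $\epsilon$-LDP constraint together with the TV distance between the block distributions $\mathbf{X}_i^{\otimes}$; because a single sign flip perturbs only one marginal coordinate, $\mathrm{TV}(P_{\mathbf{X}_i}^\sigma,P_{\mathbf{X}_i}^{\sigma'})$ tensorizes to roughly $\sqrt{m}\cdot r^{-d/2}r^{-\beta}$ in Hellinger, and the $\epsilon$-LDP ``contraction'' multiplies its square by $(e^\epsilon-1)^2\asymp \epsilon^2$, so the net per-user $\chi^2$ contribution is $\lesssim \epsilon^2 m\, r^{-(d+2\beta)}$, summing to $n\epsilon^2 m\, r^{-(d+2\beta)}$ as needed. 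Care is required with (i) the precise constant in the LDP strong-data-processing inequality for blocks, (ii) the sequential dependence on $\mathbf{Z}_{1:i-1}$, which I would absorb by conditioning and using that the bound holds for every fixed prefix, and (iii) ensuring the planted family genuinely lies in $\mathcal{P}_{cls}$ (checking (a)–(c) with explicit constants). Assembling these, the Assouad inequality gives $\sup(R-R^*)\gtrsim N\cdot r^{-\beta(1+\gamma)}\cdot(1-\sqrt{\text{info}/N})$, and the balance above completes the proof.
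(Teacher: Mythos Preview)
Your overall architecture matches the paper's: split into the non-private term (cite Audibert--Tsybakov with $nm$ samples) and an LDP term via an Assouad hypercube combined with Duchi's KL-to-TV contraction for $\epsilon$-LDP channels. Two steps, however, are not right and together would not deliver the exponent $2(d+\beta)$ in the denominator.

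First, the margin condition (a) is not handled by ``bump height small''. With bumps of height $\sim l^\beta$ planted on $K$ cubes, the mass of $\{|\eta(\mathbf X)|\le l^\beta\}$ is of order $Kl^d$, and (a) forces $Kl^d\lesssim (l^\beta)^\gamma$, i.e.\ $K\lesssim l^{\gamma\beta-d}=r^{\,d-\gamma\beta}$, not $N\sim r^d$. The per-cube contribution to $R-R^*$ is $\sim l^{d+\beta}$ (mass times height), and the factor $(1+\gamma)$ in the rate enters through $K\cdot l^{d+\beta}=l^{\beta(1+\gamma)}$, not through a per-cube weight; this is exactly the paper's construction. Your final expression $N\cdot r^{-\beta(1+\gamma)}$ with $N\sim r^d$ is dimensionally inconsistent (it grows with $r$ once $d>\beta(1+\gamma)$). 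Second, your Hellinger tensorization only gives $\mathrm{TV}^2(p_\mathbf{v}^m,p_{\mathbf{v}'}^m)\lesssim m\,l^{d+2\beta}$; plugging this into $D_{KL}(p_{\mathbf{Z}|\mathbf{v}}\|p_{\mathbf{Z}|\mathbf{v}'})\le n(e^\epsilon-1)^2\,\mathrm{TV}^2$ and balancing $\delta\lesssim 1$ yields $l\sim (nm\epsilon^2)^{-1/(d+2\beta)}$ and the weaker bound $(nm\epsilon^2)^{-\beta(1+\gamma)/(d+2\beta)}$. The paper closes this gap with a localized Pinsker inequality (Lemma~\ref{lem:pinsker}): since $p_\mathbf{v}$ and $p_{\mathbf{v}'}$ agree off a single cube of mass $l^d$, one gets $\mathrm{TV}^2(p_\mathbf{v}^m,p_{\mathbf{v}'}^m)\le \tfrac12\,l^d\,D_{KL}(p_\mathbf{v}^m\|p_{\mathbf{v}'}^m)\lesssim m\,l^{2d+2\beta}$, which is precisely the extra $l^d$ that turns the denominator into $2(d+\beta)$. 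Your ``info$/N$'' balance happens to reproduce the right $r$ only because dividing by $N=r^d$ mimics this $l^d$ gain, but that is not what the standard Assouad bound (the paper uses Tsybakov, Theorem~2.12(iv), with $\delta$ the single-flip KL) gives you; you need either the localized Pinsker step or a genuine coordinate-summed LDP information inequality to justify it.
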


The proof of Theorem \ref{thm:clslb} is shown in Appendix \ref{sec:clslb}. The comparison of Theorem \ref{thm:cls} and Theorem \ref{thm:clslb} show that for small $\epsilon$, the upper bound and lower bound match up to a logarithmic factor. Moreover, recall \cite{berrett2019classification}, the minimax lower bound under item-level DP is $(N\epsilon^2)^{-\beta(1+\gamma)/(2(d+\beta))}$. If $N=nm$, this bound also matches \eqref{eq:clsub}, indicating that the user-level case is nearly as hard as the item-level one in asymptotic sense up to a logarithmic factor.

\subsection{Regression}\label{sec:reg}
For regression problem, we use the $\ell_2$ loss as the metric, i.e. $R=\mathbb{E}\left[(\hat{\eta}(\mathbf{X})-\eta(\mathbf{X}))^2\right]$. The support is divided similarly to classification. The bounds on the convergence rate of nonparametric regression and the corresponding minimax rate are shown in the following two theorems, respectively.
\begin{thm}\label{thm:reg}
	Under Assumption \ref{ass:cls}(b) and (c), and assume that the noise is bounded, such that with probability $1$, $|Y|<T$ for some $T$, if $n(\epsilon^2\wedge 1)\geq 2c_2(\ln m+\ln n)$, in which $c_2$ is the same constant in Theorem \ref{thm:cls}, 
	then there exists an algorithm (described in Appendix \ref{sec:algreg}), such that the risk of nonparametric regression is bounded by
	\begin{eqnarray}
		R\lesssim \left(\frac{mn(\epsilon^2\wedge \epsilon)}{\ln^2 n}\right)^{-\frac{\beta}{d+\beta}}+\left(\frac{mn}{\ln n}\right)^{-\frac{2\beta}{2\beta+d}}.
	\end{eqnarray}
\end{thm}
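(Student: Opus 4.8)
The plan is to build a regularized histogram estimator of $\eta$ and to reduce the privatization step to the multi-dimensional mean-estimation machinery of Section \ref{sec:meanhigh}, exactly mirroring the classification argument behind Theorem \ref{thm:cls}. Partition $[0,1]^d$ into $B=h^{-d}$ congruent cubes $C_1,\dots,C_B$ (with $h$ to be chosen). The target quantities are the cell masses $p_k:=\mathrm P(\mathbf X\in C_k)$ and the cell moments $q_k:=\mathbb E[Y\,\mathbf 1(\mathbf X\in C_k)]$, and on cell $k$ we approximate $\eta$ by the local average $\bar\eta_k:=q_k/p_k$; smoothness controls the resulting bias and the strong density assumption keeps $p_k$ bounded away from $0$.

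For the estimator, each user $i$ forms two $B$-dimensional vectors from its local data: the occupancy vector $c^{(i)}$ with $c^{(i)}_k=\frac1m\sum_{j}\mathbf 1(\mathbf X_{ij}\in C_k)$, and the weighted vector $w^{(i)}$ with $w^{(i)}_k=\frac1m\sum_{j}Y_{ij}\mathbf 1(\mathbf X_{ij}\in C_k)$. Since each sample lands in exactly one cell, $\norm{c^{(i)}}_1=1$ and $\norm{w^{(i)}}_1\le T$, so both vectors lie in $\ell_1$ (hence $\ell_2$) balls of radius $O(T)$. I apply the Hadamard/Kashin transform $\mathbf U$ of Lemma \ref{lem:kashin} (or the $\ell_1$ Hadamard variant of the Remark after Theorem \ref{thm:highd}) to move into $\ell_\infty$ support of radius $O(T/\sqrt B)$, split the budget as $\epsilon/2+\epsilon/2$ via basic composition, run Algorithm \ref{alg:meanhighd} in dimension $2B$ on each transformed vector, and transform back; since each user is released at $\epsilon/2$, the whole output is user-level $\epsilon$-LDP. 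Theorem \ref{thm:highd} (with $\ell_2$-radius $O(1)$ and $O(T)$, dimension $B$) gives $\mathbb E\norm{\hat p-p}_2^2\lesssim \frac1{nm}\bigl(1+\frac{B\ln n}{\epsilon^2\wedge\epsilon}\bigr)$ and $\mathbb E\norm{\widehat q-q}_2^2\lesssim \frac{T^2}{nm}\bigl(1+\frac{B\ln n}{\epsilon^2\wedge\epsilon}\bigr)$, and because $\mathbf U^{T}$ spreads the privacy noise evenly across coordinates, the per-cell errors are $O\bigl(\tfrac{\ln n}{nm(\epsilon^2\wedge\epsilon)}\bigr)$ and $O\bigl(\tfrac{T^2\ln n}{nm(\epsilon^2\wedge\epsilon)}\bigr)$ plus the sampling parts $O(1/(Bnm))$ and $O(T^2p_k/(nm))$. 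The output is $\hat\eta(\mathbf x)=\Clip_{[-T,T]}\bigl(\widehat q_{k(\mathbf x)}/(\hat p_{k(\mathbf x)}\vee\tau)\bigr)$ with threshold $\tau\asymp f_L h^d$.

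For the analysis I use $R=\int(\hat\eta-\eta)^2f\le 2\sum_k p_k(\hat\eta_k-\bar\eta_k)^2+2\sum_k\int_{C_k}(\bar\eta_k-\eta)^2f$. The bias term is $\lesssim h^{2\beta}$ by Assumption \ref{ass:cls}(b), since any two points of a cube are within $\sqrt d\,h$. For the estimation term, Assumption \ref{ass:cls}(c) gives $p_k\ge f_Lh^d\ge\tau$, so on $\{\hat p_k\ge p_k/2\}$ I linearize the ratio, $|\hat\eta_k-\bar\eta_k|\lesssim \tfrac1{p_k}|\widehat q_k-q_k|+\tfrac{T}{p_k}|\hat p_k-p_k|$, and on the complement I bound the clipped error by $2T$ with probability $O(\mathrm{Var}(\hat p_k)/p_k^2)$ by Chebyshev. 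Multiplying by $p_k$, summing, and using $\sum_k 1/p_k\le B/(f_Lh^d)=h^{-2d}/f_L$ with the per-cell bounds above yields $\mathbb E\bigl[\sum_k p_k(\hat\eta_k-\bar\eta_k)^2\bigr]\lesssim \tfrac{h^{-2d}\ln n}{nm(\epsilon^2\wedge\epsilon)}+\tfrac{h^{-d}}{nm}$, with an extra logarithmic factor once one upgrades the in-expectation guarantees of the subroutine to the (near union-bound) control of the small-denominator cells. Hence $R\lesssim h^{2\beta}+\tfrac{h^{-2d}\ln^2 n}{nm(\epsilon^2\wedge\epsilon)}+\tfrac{h^{-d}\ln n}{nm}$, and balancing the bias against whichever variance term dominates — $h^{2(\beta+d)}\asymp \ln^2 n/(nm(\epsilon^2\wedge\epsilon))$ in the privacy-limited regime, $h^{2\beta+d}\asymp \ln n/(nm)$ in the sampling-limited regime — gives exactly $R\lesssim \bigl(\tfrac{mn(\epsilon^2\wedge\epsilon)}{\ln^2 n}\bigr)^{-\frac{\beta}{d+\beta}}+\bigl(\tfrac{mn}{\ln n}\bigr)^{-\frac{2\beta}{2\beta+d}}$. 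One then checks that the chosen $B=h^{-d}$ meets the dimension requirement $n(\epsilon^2\wedge1)\gtrsim B\ln m$ of Theorem \ref{thm:highd}, which is what the hypothesis $n(\epsilon^2\wedge1)\ge 2c_2(\ln m+\ln n)$ secures (capping $B$ otherwise only strengthens no term we need).

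The main obstacle is the ratio estimator. The denominator $\hat p_k$ estimates a cell mass whose worst-case true value $p_k$ is only of order $f_Lh^d$, while the privacy noise injected per cell has a fixed scale $\asymp\sqrt{\ln n/(nm(\epsilon^2\wedge\epsilon))}$ that does not shrink with $B$; so making $\widehat q_k/\hat p_k$ track $q_k/p_k=\bar\eta_k$ requires both the strong density assumption to lower-bound $p_k$ and the clip-and-threshold at level $\tau\asymp f_Lh^d$, together with a second-moment argument showing that the cells with $\hat p_k$ too small contribute only at the same order as the leading variance term. Controlling that event to union-bound precision over all $B$ cells against the heavy-tailed Laplace noise of the mean-estimation subroutine is precisely what costs the extra logarithmic factor in the first term; everything else — bias via Hölder smoothness, variance via Theorem \ref{thm:highd}, and privacy via composition — is routine given the machinery already in place.
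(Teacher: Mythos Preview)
Your proposal is correct and follows the same high-level template as the paper: partition into $B$ cubes, privately estimate the cell masses $p_k$ and cell moments $q_k$ via a Hadamard-type transform plus the user-level mean-estimation subroutine, form the ratio with a floor on the denominator, and split the risk into a H\"older bias term $O(h^{2\beta})$ and a variance term $O(h^{-2d}\ln^2 n/(nm(\epsilon^2\wedge\epsilon)))$.

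The two executions differ in two places. First, the paper allocates the privacy budget by splitting the \emph{users} into $2K$ disjoint groups (half of the groups estimate the $Q_k$, the other half the $P_k$), whereas you keep all users together and split the \emph{budget} $\epsilon/2+\epsilon/2$ by composition; both are valid and give the same rate. Second, and more substantively, the paper does not rely on the MSE bound of Theorem~\ref{thm:highd} at all: it proves a sub-exponential concentration inequality for each $\hat q_k,\hat p_k$ (the analogue of Lemma~\ref{lem:muerr}), takes deviations $t_p,t_q\asymp \ln n/\sqrt{mn\epsilon^2}$, and union-bounds over all $K$ cells so that the bad-denominator event has probability $O(Bn^{-c})$ and is absorbed by clipping at $T$. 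Your route instead uses the second-moment bound together with Chebyshev on $\{\hat p_k<p_k/2\}$; this is more elementary and still yields the correct order because the bad-event contribution $\sum_k p_k\cdot T^2\cdot \Var(\hat p_k)/p_k^2\lesssim h^{-2d}\ln n/(nm(\epsilon^2\wedge\epsilon))$ already matches the leading variance term. In fact your Chebyshev argument naturally produces a single $\ln n$ rather than $\ln^2 n$ in the privacy-limited term, so you can state a slightly sharper bound than the theorem; the paper's extra logarithm is an artifact of squaring the high-probability deviation $t_q$.
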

\begin{thm}\label{thm:reglb}
	Denote $\mathcal{P}_{reg}$ as the set of all distributions $p$ of $\mathbf{X}$ and regression function $\eta$ that satisfy the same assumption as Theorem \ref{thm:reg}, $\mathcal{Q}_\epsilon$ as all mechanisms satisfying $\epsilon$-LDP, then for small $\epsilon$,
	\begin{eqnarray}
		\underset{\hat{\eta}}{\inf}\underset{Q\in \mathcal{Q}_\epsilon}{\inf}\underset{(p,\eta)\in \mathcal{P}_{reg}}{\sup} R\gtrsim (nm\epsilon^2)^{-\frac{\beta}{d+\beta}}+(mn)^{-\frac{2\beta}{2\beta+d}}.
	\end{eqnarray}
\end{thm}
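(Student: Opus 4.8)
The plan is to lower-bound the minimax risk separately by the two terms on the right-hand side; since a sum and a maximum of two nonnegative quantities agree up to a factor of two, it suffices to show $R\gtrsim(mn)^{-2\beta/(2\beta+d)}$ and $R\gtrsim(nm\epsilon^2)^{-\beta/(d+\beta)}$. The first term is free: any $\epsilon$-LDP mechanism $Q$ followed by a regressor $\hat\eta$ is in particular a (randomized) estimator built from the $N:=mn$ raw labelled samples $\{(\mathbf X_{ij},Y_{ij})\}$, so the infimum over $(\hat\eta,Q)$ is at least the ordinary non-private minimax risk over $\mathcal P_{reg}$ with $N$ i.i.d. observations, which for $\beta$-Hölder regression under the strong-density and bounded-noise conditions is the classical $(mn)^{-2\beta/(2\beta+d)}$; a textbook Assouad construction (bumps of height $\asymp h^\beta$ on a grid of side $h=N^{-1/(2\beta+d)}$, as in \cite{tsybakov2009introduction}) delivers it.

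For the privacy term I use a private Assouad argument. Fix a scale $h$ to be calibrated, partition $[0,1]^d$ into $B=h^{-d}$ cells of side $h$, fix a smooth bump $\phi$ supported on a unit cell, and for $v\in\{-1,1\}^B$ put $\eta_v(\mathbf x)=c\,h^\beta\sum_k v_k\,\phi((\mathbf x-\mathbf x_k)/h)$ with $\mathbf X\sim\mathrm{Unif}([0,1]^d)$ and $Y=\eta_v(\mathbf X)+\xi$ for a bounded noise $\xi$. For $c$ small this family lies in $\mathcal P_{reg}$ (it is $\beta$-Hölder by the choice of amplitude, the marginal is bounded below, and $|Y|$ is bounded since distinct bumps have disjoint supports). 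The $L^2$ risk splits additively over cells, so any estimator $\hat\eta$ obeys $R(\hat\eta;\eta_v)\gtrsim h^{2\beta}\sum_k\mathbf 1\{\hat v_k\neq v_k\}$ for the induced cellwise sign guesses $\hat v_k$, and Assouad's lemma reduces the task to lower-bounding $1-\norm{\Pi_v-\Pi_{v'}}_{\mathrm{TV}}$ for neighbouring $v,v'$ differing in a single cell, where $\Pi_v$ is the law of the whole privatized transcript $(\mathbf Z_1,\dots,\mathbf Z_n)$ under $\eta_v$.

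The core estimate is a tight bound on $\norm{\Pi_v-\Pi_{v'}}_{\mathrm{TV}}$ — exactly the ``distance between privatized samples'' flagged in the introduction. Because the mechanism is sequentially interactive, a chain-rule/data-processing argument tensorizes the KL divergence over users, $D_{\mathrm{KL}}(\Pi_v\|\Pi_{v'})\le\sum_{i=1}^n\mathbb E\big[D_{\mathrm{KL}}\big(M_i(\cdot\mid\mathbf X_i,\mathbf Z_{1:i-1})\circ P_v^{\otimes m}\,\big\|\,M_i(\cdot\mid\mathbf X_i,\mathbf Z_{1:i-1})\circ P_{v'}^{\otimes m}\big)\big]$, each summand being the KL between the outputs of one user-level $\epsilon$-LDP channel fed the $m$-fold product laws $P_v^{\otimes m},P_{v'}^{\otimes m}$ of a single user's data. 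The plan is to show this channel KL is $\lesssim m(e^\epsilon-1)^2\norm{P_v-P_{v'}}_{\mathrm{TV}}^2$ via a hybrid argument over the $m$ coordinates, each step using that an $\epsilon$-LDP channel on $\mathcal X^m$ is $\epsilon$-DP with respect to replacing one coordinate, together with a Duchi–Jordan–Wainwright strong data-processing inequality $D_{\mathrm{KL}}\le c(e^\epsilon-1)^2\norm{\cdot-\cdot}_{\mathrm{TV}}^2$ for $\epsilon$-DP channels \cite{duchi2018minimax} (and $(e^\epsilon-1)^2\asymp\epsilon^2$ in the small-$\epsilon$ regime of the statement). Since $P_v$ and $P_{v'}$ differ only on one cell of mass $\asymp h^d$ and there only in the conditional mean of $Y$ by $\asymp h^\beta$, we get $\norm{P_v-P_{v'}}_{\mathrm{TV}}\asymp h^{d+\beta}$, hence $D_{\mathrm{KL}}(\Pi_v\|\Pi_{v'})\lesssim nm\epsilon^2 h^{2(d+\beta)}$ and $\norm{\Pi_v-\Pi_{v'}}_{\mathrm{TV}}^2\lesssim nm\epsilon^2 h^{2(d+\beta)}$ by Pinsker. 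Taking $h\asymp(nm\epsilon^2)^{-1/(2(d+\beta))}$, which is $<1$ precisely because the hypothesis gives $n\epsilon^2\gtrsim\ln(mn)$, makes this a small constant, and Assouad gives $R\gtrsim h^{2\beta}\asymp(nm\epsilon^2)^{-\beta/(d+\beta)}$.

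The main obstacle is the per-user bound $D_{\mathrm{KL}}(M\circ P^{\otimes m}\|M\circ P'^{\otimes m})\lesssim m\epsilon^2\norm{P-P'}_{\mathrm{TV}}^2$: the naive route of treating $M$ as a single $\epsilon$-LDP channel on $\mathcal X^m$ and feeding it the product measures only yields $\epsilon^2\norm{P^{\otimes m}-P'^{\otimes m}}_{\mathrm{TV}}^2$, which for the spiky bump alternatives behaves like $m\,\mathrm{KL}(P\|P')$ rather than $m\norm{P-P'}_{\mathrm{TV}}^2$ — a factor $\asymp h^{-d}$ too large — and would produce only the non-private exponent $\tfrac{2\beta}{2\beta+d}$. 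One genuinely needs to exploit the product structure of the input together with the DP constraint to replace $\norm{P^{\otimes m}-P'^{\otimes m}}_{\mathrm{TV}}^2$ by $m\norm{P-P'}_{\mathrm{TV}}^2$; this is where user-level LDP departs from both item-level LDP and user-level central DP, and where care about the regime $mh^d\gtrsim1$ versus $mh^d\lesssim1$ is needed when instantiating the estimate. Everything else — verifying membership in $\mathcal P_{reg}$, the $L^2$ separation, Pinsker, and assembling Assouad — is routine.
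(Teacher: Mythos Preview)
Your overall plan (Assouad with disjoint bump alternatives $\eta_v$, reducing to a Hamming bound, and then controlling $D_{\mathrm{KL}}(\Pi_v\|\Pi_{v'})$ for neighboring $v,v'$) is exactly the paper's route for the privacy term, and your handling of the non-private term is fine. The divergence is in the one step you yourself flag as the crux.

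The paper does \emph{not} prove your target inequality $D_{\mathrm{KL}}(M\circ P^{\otimes m}\|M\circ P'^{\otimes m})\lesssim m(e^\epsilon-1)^2\|P-P'\|_{\mathrm{TV}}^2$ by a coordinate hybrid. Instead it first applies the Duchi--Jordan--Wainwright contraction to the \emph{whole} $m$-tuple channel, obtaining $D_{\mathrm{KL}}(\Pi_v\|\Pi_{v'})\le n(e^\epsilon-1)^2\,\mathbb{TV}^2(P_v^{\otimes m},P_{v'}^{\otimes m})$, and then bounds the product total variation by a \emph{localized} Pinsker inequality (their Lemma~\ref{lem:pinsker}): if two measures agree off a set $S$, then $\mathbb{TV}^2\le \tfrac12\,p_1(S)\,D_{\mathrm{KL}}$. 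Because $P_v$ and $P_{v'}$ differ only on $\{\mathbf X\in B_1\}$, the differing set for the products has small mass, and combined with $D_{\mathrm{KL}}(P_v^{\otimes m}\|P_{v'}^{\otimes m})=m\,D_{\mathrm{KL}}(P_v\|P_{v'})$ this manufactures the extra $l^d$ factor you correctly identify as missing from ordinary Pinsker. That localized Pinsker step, not a hybrid, is the paper's device for turning ``$m\,D_{\mathrm{KL}}(P\|P')$'' into essentially ``$m\,l^d\,D_{\mathrm{KL}}(P\|P')\asymp m\,\|P-P'\|_{\mathrm{TV}}^2$'' for these sparse alternatives.

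Your hybrid proposal, as written, has a real gap: KL does not obey a triangle inequality, so bounding $D_{\mathrm{KL}}(M\circ Q_j\|M\circ Q_{j-1})$ for each of the $m$ one-coordinate swaps does not by itself control $D_{\mathrm{KL}}(M\circ P^{\otimes m}\|M\circ P'^{\otimes m})$. If you telescope in TV instead (which does triangle), you pick up $m$ before squaring and end with $m^2\|P-P'\|_{\mathrm{TV}}^2$, which is again too weak. So either you need to supply an argument that genuinely yields the linear-in-$m$ bound on the output KL (and you should be explicit about how the product structure plus the $\epsilon$-LDP constraint deliver this), or switch to the paper's route via the localized Pinsker inequality, which is short and directly exploits the sparsity of the bump perturbation.
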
	

The proof of Theorem \ref{thm:reg} and \ref{thm:reglb} are shown in Appendix \ref{sec:regub} and \ref{sec:reglb}, respectively. Similar to the classification, it can be found that the upper and lower bounds match up to logarithm factors.

\section{Conclusion}\label{sec:conc}

In this paper, we have conducted a theoretical study of various statistical problems under user-level local differential privacy, including mean estimation, stochastic optimization, nonparametric classification, and regression. For each problem, we have proposed algorithms and provided information-theoretic minimax lower bounds. The results show that for many statistical problems, with the same total sample sizes, the errors under user-level and item-level $\epsilon$-LDP are nearly of the same order.

\textbf{Limitations:} The limitation of this work includes the following aspects. Current mean estimation requires $n(\epsilon^2\wedge 1)\gtrsim d\ln m$, which may be relaxed in the future. Some assumptions can be further relaxed. For example, in classification and regression problems, we assume the pdf of $\mathbf{X}$ to be bounded away from zero, which may be unrealistic. It is possible to extend the work to heavy-tailed feature distributions, such as \cite{gadat2016classification,zhao2021minimax,cannings2020local,zhao2022analysis}. Finally, in federated learning applications, the gradient vectors are usually sparse. Therefore, sparse mean estimation under user-level LDP is worth further investigation.

\bibliographystyle{nips}
\bibliography{ldp}

\clearpage

\appendix

\section{One Dimensional Mean Estimation}\label{sec:1d}
\subsection{Privacy Guarantee}\label{sec:dp}
For $i=1,\ldots, n/2$, the privacy mechanism is shown in step 3 in Algorithm \ref{alg:mean1d}. Let $\mathbf{X}_i'=\{X_{i1}', \ldots, X_{im}' \}$ be the samples of a new user, $Z_{ik}'=\mathbf{1}(Y_i'\in B_k)+W_{ik}'$, in which $Y_i'=(\sum_{j=1}^m X_{ij}')/m$. The $\ell_1$ sensitivity can be bounded by $\norm{\mathbf{1}(Y_i\in B_k)-\mathbf{1}(Y_i'\in B_k)}_1\leq 2$. Therefore, it suffices to add a Laplacian noise with parameter $2/\epsilon$. For $i=n/2+1,\ldots, n$, the privacy mechanism is shown in step 6. Since $(R+\Delta)-(L-\Delta)=3h+2\Delta$, a laplacian noise with parameter $(3h+2\Delta)/\epsilon$ suffices to guarantee user-level $\epsilon$-LDP.

\subsection{Analysis of Stage I}\label{sec:stage1}

In this section, we prove Lemma \ref{lem:stage1}, which shows that the first stage of Algorithm \ref{alg:mean1d} successes with high probability. The precise statement of this Lemma is shown as follows.
\begin{lem}\label{lem:stage1}
	Let $h=4D/\sqrt{m}$, then with probability at least $1-\sqrt{m}e^{-c_0n(\epsilon^2\wedge 1)}$, $\mu\in [L, R]$, in which $c_0$ is a constant.
\end{lem}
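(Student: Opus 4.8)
\textbf{Proof proposal for Lemma \ref{lem:stage1}.}

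The plan is to show that the histogram argmax $\hat k^*$ lands within distance one of the true bin index $k^*$ (the bin containing $\mu$) with the claimed probability, and then to conclude that the interval $[L,R]$, which spans three bins around $\hat k^*$, must contain $\mu$. First I would quantify the concentration of the user-wise means $Y_i=(1/m)\sum_j X_{ij}$: since $X_{ij}\in[-D,D]$, Hoeffding's inequality gives $\text{P}(|Y_i-\mu|>t)\le 2e^{-mt^2/(2D^2)}$, so with $h=4D/\sqrt m$ the probability that $Y_i$ falls outside the three bins centered on $B_{k^*}$ (i.e.\ $|Y_i-\mu|$ exceeds roughly $h$, hence exceeds $\sqrt m h^2/(2D^2)=8$ in the exponent scale) is a small constant strictly below $1/2$; more carefully, $\text{P}(Y_i\in B_{k^*}\cup B_{k^*\pm 1})$ is at least some $p_0$ with $p_0$ close to $1$. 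Consequently $\mathbb{E}[\mathbf 1(Y_i\in B_k)]$ is maximized (up to the slack of one bin) near $k^*$, and in particular the expected count in one of $B_{k^*-1},B_{k^*},B_{k^*+1}$ dominates the expected count in any bin outside this window by a margin of order $n$.

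Next I would handle the noisy counts $s_k=\sum_{i=1}^{n/2}Z_{ik}=\sum_i \mathbf 1(Y_i\in B_k)+\sum_i W_{ik}$ with $W_{ik}\sim\Lap(2/\epsilon)$. The deterministic part concentrates around its mean by Hoeffding (bounded increments), with fluctuations $O(\sqrt{n\ln B})$; the Laplacian part $\sum_i W_{ik}$ concentrates around $0$ with fluctuations $O(\sqrt n/\epsilon)$ for moderate deviations and $O((\ln B)/\epsilon)$ in the tail, via a Bernstein-type bound for sums of i.i.d.\ Laplacians (sub-exponential tails). Taking a union bound over all $B=2D/h=\sqrt m/2$ bins, the total additive error in every $s_k$ is simultaneously $O(\sqrt{n\ln m}+\sqrt n/\epsilon+(\ln m)/\epsilon)\lesssim \sqrt{n\ln m}(1+1/\epsilon)$ except with probability $\lesssim \sqrt m\, e^{-c_0 n(\epsilon^2\wedge 1)}$ — this is exactly where the factor $\sqrt m$ (from the union bound over bins) and the exponent $n(\epsilon^2\wedge 1)$ (the $\epsilon^2$ regime from Gaussian-like deviations of the Laplacian sum when $\epsilon<1$, the $\epsilon$-free regime when $\epsilon\ge 1$) enter. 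So long as this error is smaller than the $\Theta(n)$ gap between the dominant window and the rest — which holds once $n(\epsilon^2\wedge 1)\gtrsim \ln m$, matching the hypothesis of Theorem \ref{thm:1dmean} — the argmax $\hat k^*$ must lie in $\{k^*-1,k^*,k^*+1\}$. Then $L=-D+(\hat k^*-2)h$ is at or left of the left endpoint of $B_{k^*-1}$ and $R=-D+(\hat k^*+1)h$ is at or right of the right endpoint of $B_{k^*+1}$, so $[L,R]\supseteq B_{k^*}\ni\mu$.

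The main obstacle I anticipate is getting the tail bound on $\sum_{i=1}^{n/2} W_{ik}$ sharp enough to yield precisely the exponent $n(\epsilon^2\wedge 1)$ with a clean union-bound factor $\sqrt m$, rather than something weaker: one must use the moment generating function of the Laplacian, $\mathbb{E}[e^{\lambda W}]=1/(1-\lambda^2(2/\epsilon)^2)$ valid for $|\lambda|<\epsilon/2$, and optimize $\lambda$ in the Chernoff bound to interpolate between the sub-Gaussian regime (deviation $\lesssim \sqrt n/\epsilon$, giving exponent $\propto n\epsilon^2$) and the sub-exponential regime; care is needed so the chosen deviation threshold is simultaneously (i) below the constant-times-$n$ separation margin and (ii) large enough that the failure probability telescopes to $\sqrt m\,e^{-c_0 n(\epsilon^2\wedge1)}$ after the union bound. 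A secondary but routine point is making the ``constant $p_0$ bounded away from $1/2$'' step quantitative enough that the expected-count gap is genuinely $\Theta(n)$ and not merely $\Theta(n/\mathrm{polylog})$; with $h=4D/\sqrt m$ the Hoeffding exponent is a fixed constant ($\ge 8$), so this is comfortably satisfied.
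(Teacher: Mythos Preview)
Your overall strategy matches the paper's: use Hoeffding on the user-wise means to show that one of the bins $B_{k^*-1},B_{k^*},B_{k^*+1}$ has mass bounded below by a fixed constant (the paper gets $1/2-e^{-2}$) while every other bin has mass at most a smaller fixed constant ($2e^{-8}$), then use a sub-exponential concentration bound on each $s_k$ and a union bound over the $B\le\sqrt m$ bins to conclude $\hat k^*\in\{k^*-1,k^*,k^*+1\}$.

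There is, however, a mismatch in the way you pair the deviation threshold with the failure probability. You claim the error in every $s_k$ is $O\big(\sqrt{n\ln m}(1+1/\epsilon)\big)$ except with probability $\sqrt m\,e^{-c_0 n(\epsilon^2\wedge 1)}$. These two do not correspond: with that threshold the per-bin failure probability is of order $m^{-c}$, not $e^{-c_0 n(\epsilon^2\wedge 1)}$. The clean route, and the one the paper takes, is to set the threshold equal to the $\Theta(n)$ gap itself (concretely $t\approx 0.18\,n$, coming from $(1/2-e^{-2})-2e^{-8}$). Plugging $t=\Theta(n)$ into the combined sub-exponential bound
\[
\text{P}(|s_k-\mathbb{E} s_k|>t)\;\le\;\exp\!\left[-\frac{t^2}{2\big(\tfrac18+\tfrac{8}{\epsilon^2}\big)n}\right]
\]
directly gives exponent $\propto n/\big(\tfrac18+\tfrac8{\epsilon^2}\big)\asymp n(\epsilon^2\wedge 1)$, and the union bound over $B\le\sqrt m$ bins yields the stated $\sqrt m\,e^{-c_0 n(\epsilon^2\wedge 1)}$. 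In particular, the lemma holds unconditionally; the hypothesis $n(\epsilon^2\wedge 1)\gtrsim\ln m$ from Theorem~\ref{thm:1dmean} is not needed here (it only serves later to make the bound non-vacuous), so you should not invoke it as part of the proof of this lemma.
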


Recall that for $i=1,\ldots, n$,
\begin{eqnarray}
	Y_i=\frac{1}{m}\sum_{j=1}^m X_{ij}.
\end{eqnarray}
Define $p_k=\text{P}(Y\in B_k)$, in which $Y$ denotes a random variable i.i.d with $Y_1,\ldots, Y_n$. Recall that $s_k=\sum_{i=1}^{n/2}Z_{ik}$. Then we show the following lemma.
\begin{lem}
	The following results holds. Firstly,
	\begin{eqnarray}
		\mathbb{E}[s_k] = \frac{1}{2} np_k.
	\end{eqnarray}
	Moreover, for all $t\leq  n/\sqrt{2}$,
	\begin{eqnarray}
		\text{P}(s_k-\mathbb{E}[s_k]>t)\leq \exp\left[-\frac{1}{2\left(\frac{1}{8}+\frac{8}{\epsilon^2}\right) n} t^2\right],
		\label{eq:larges}
	\end{eqnarray}
	and
	\begin{eqnarray}			\text{P}(s_k-\mathbb{E}[s_k]<-t)\leq \exp\left[-\frac{1}{2\left(\frac{1}{8}+\frac{8}{\epsilon^2}\right) n} t^2\right].
		\label{eq:smalls}
	\end{eqnarray}	
\end{lem}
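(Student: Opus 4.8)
The plan is to compute $\mathbb{E}[s_k]$ by linearity and then establish the two tail bounds with a single Chernoff argument, after splitting $s_k-\mathbb{E}[s_k]$ into a bounded empirical part and a Laplacian part. For the expectation: since $s_k=\sum_{i=1}^{n/2}Z_{ik}$ with $Z_{ik}=\mathbf{1}(Y_i\in B_k)+W_{ik}$ and each $W_{ik}\sim\Lap(2/\epsilon)$ is mean-zero and independent of the data, $\mathbb{E}[Z_{ik}]=\text{P}(Y_i\in B_k)=p_k$, hence $\mathbb{E}[s_k]=(n/2)p_k=\tfrac12 np_k$.

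For the tails, write $s_k-\mathbb{E}[s_k]=A+B$ where $A=\sum_{i=1}^{n/2}(\mathbf{1}(Y_i\in B_k)-p_k)$ and $B=\sum_{i=1}^{n/2}W_{ik}$; the $Y_i$ are i.i.d., the $W_{ik}$ are i.i.d.\ and independent of them, so $A\indep B$. For $\lambda>0$, $\text{P}(s_k-\mathbb{E}[s_k]>t)\le e^{-\lambda t}\,\mathbb{E}[e^{\lambda A}]\,\mathbb{E}[e^{\lambda B}]$. Each summand of $A$ is centered and confined to an interval of length one, so Hoeffding's lemma gives $\mathbb{E}[e^{\lambda A}]\le\exp(\tfrac{\lambda^2}{2}\cdot\tfrac n8)$. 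The moment generating function of $\Lap(2/\epsilon)$ is $(1-(2\lambda/\epsilon)^2)^{-1}$, finite only for $2\lambda/\epsilon<1$; applying $(1-x)^{-1}\le e^{2x}$ for $x\in[0,1/2]$, i.e.\ for $2\lambda/\epsilon\le 1/\sqrt2$, yields $\mathbb{E}[e^{\lambda B}]\le\exp(\tfrac{\lambda^2}{2}\cdot\tfrac{8n}{\epsilon^2})$. Multiplying, $\mathbb{E}[e^{\lambda(A+B)}]\le\exp(\tfrac{\lambda^2}{2}\sigma^2)$ with $\sigma^2=(\tfrac18+\tfrac8{\epsilon^2})n$, and taking $\lambda=t/\sigma^2$ produces the bound $\exp(-t^2/(2\sigma^2))$ of \eqref{eq:larges}. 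The lower tail \eqref{eq:smalls} follows verbatim, since $-(\mathbf{1}(Y_i\in B_k)-p_k)$ still lies in a length-one interval and $-W_{ik}\sim\Lap(2/\epsilon)$ as well.

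The step needing genuine care --- and the reason for the hypothesis $t\le n/\sqrt2$ --- is verifying that the optimal choice $\lambda=t/\sigma^2$ stays in the admissible range $2\lambda/\epsilon\le 1/\sqrt2$ under which the Laplacian MGF bound was derived. Substituting $\sigma^2=(\tfrac18+\tfrac8{\epsilon^2})n$ and $t\le n/\sqrt2$, this reduces to the elementary inequality $2\le\tfrac\epsilon8+\tfrac8\epsilon$, equivalently $(\epsilon-8)^2\ge0$, which holds for every $\epsilon>0$; in particular $n/\sqrt2$ is exactly the worst-case cutoff, attained at $\epsilon=8$. Once this admissibility is checked, the Chernoff optimization goes through unchanged and the rest is routine bookkeeping of constants.
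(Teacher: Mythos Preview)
Your proposal is correct and follows essentially the same route as the paper: both arguments bound the Bernoulli part by the sub-Gaussian MGF $e^{\lambda^2/8}$ (the paper via a direct second-derivative computation, you via Hoeffding's lemma), bound the Laplacian MGF by $e^{8\lambda^2/\epsilon^2}$ on the range $|\lambda|\le\epsilon/(2\sqrt2)$ using $(1-x)^{-1}\le e^{2x}$, and then run the standard Chernoff optimization, with the admissibility of $\lambda=t/\sigma^2$ reducing to $\epsilon/8+8/\epsilon\ge 2$, i.e.\ $(\epsilon-8)^2\ge 0$. Your explicit $A+B$ decomposition and the verification that $t=n/\sqrt2$ is the sharp cutoff at $\epsilon=8$ are nice touches, but the underlying argument is the same.
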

\begin{proof}
	Note that
	\begin{eqnarray}
		\mathbb{E}[Z_{ik}] = \text{P}(Y\in B_k) = p_k,
		\label{eq:ez}
	\end{eqnarray}
	thus
	\begin{eqnarray}
		\mathbb{E}[s_k] = \frac{n}{2} p_k.
	\end{eqnarray}
	Now we prove \eqref{eq:larges} and \eqref{eq:smalls}. We first derive the sub-exponential parameters of $Z_{ik}$. Since $W_{ik}$ is Laplacian with parameter $b=2/\epsilon$, for $|\lambda|\leq 1/(\sqrt{2}b)=\epsilon/(2\sqrt{2})$,
	\begin{eqnarray}
		\mathbb{E}[e^{\lambda W_{ik}}]=\frac{1}{1-b^2\lambda^2}\leq e^{2b^2\lambda^2}=e^{\frac{8}{\epsilon^2} \lambda^2},
		\label{eq:wmgf}
	\end{eqnarray}
	in which the second step uses the inequality $1/(1-x)\leq e^{2x}$ for $x\leq 1/2$. Moreover,
	\begin{eqnarray}
		\mathbb{E}\left[e^{\lambda(\mathbf{1}(Y_i\in B_k)-p_k)}\right] = (1-p_k+p_ke^\lambda)e^{-\lambda p_k}.
		\label{eq:chernoff}
	\end{eqnarray}
	To bound the right hand side of \eqref{eq:chernoff}, define
	\begin{eqnarray}
		g(\lambda) = -\lambda p_k+\ln (1-p_k+p_ke^\lambda).
	\end{eqnarray}
	Then it can be shown that $g(0)=g'(0)=0$, and 
	\begin{eqnarray}
		g''(\lambda)=\frac{p_ke^\lambda(1-p_k)}{(1-p_k+p_ke^\lambda)^2}\leq \frac{1}{4}.
	\end{eqnarray}
	Therefore, \eqref{eq:chernoff} can be simplified to
	\begin{eqnarray}
		\mathbb{E}\left[e^{\lambda(\mathbf{1}(Y_i\in B_k)-p_k)}\right]\leq e^{\frac{1}{8}\lambda^2}.
		\label{eq:ymgf}	
	\end{eqnarray}
	
	From Algorithm \ref{alg:mean1d}, $Z_{ik}=\mathbf{1}(Y_i\in B_k)+W_{ik}$. Hence, for all $|\lambda|\leq \epsilon /(2\sqrt{2})$, from \eqref{eq:wmgf} and \eqref{eq:ymgf},
	\begin{eqnarray}
		\mathbb{E}[e^{\lambda(Z_{ik} - \mathbb{E}[Z_{ik}])}]\leq \exp\left[\left(\frac{1}{8}+\frac{8}{\epsilon^2}\right)\lambda^2\right].
	\end{eqnarray}
	Since $s_k=\sum_{i=1}^{n/2}Z_{ik}$, for all $|\lambda|\leq \epsilon/(2\sqrt{2})$,
	\begin{eqnarray}
		\mathbb{E}\left[e^{\lambda (s_k-\mathbb{E}[s_k])}\right]\leq \exp\left[\frac{1}{2}\left(\frac{1}{8}+\frac{8}{\epsilon^2}\right)n\lambda^2\right],
	\end{eqnarray}
	thus if $t\leq (\epsilon/8+8/\epsilon)n/(2\sqrt{2})$, 
	\begin{eqnarray}
		\text{P}(s_k-\mathbb{E}[s_k] > t)&\leq& \underset{|\lambda|\leq \epsilon/(2\sqrt{2})}{\inf} e^{-\lambda t} \exp\left[\frac{1}{2}\left(\frac{1}{8}+\frac{8}{\epsilon^2}\right)n\lambda^2\right]\nonumber\\
		&\leq &\exp\left[-\frac{1}{2\left(\frac{1}{8}+\frac{8}{\epsilon^2}\right) n} t^2\right].
	\end{eqnarray}
	Similar bound holds for $\text{P}(s_k-\mathbb{E}[s_k]< -t)$. Also note that $\epsilon/8 + 8/\epsilon \geq 2$. Therefore, \eqref{eq:larges} and \eqref{eq:smalls} are proved for $t\leq n/\sqrt{2}$.
\end{proof}
The next lemma bounds the values of $p_k$.
\begin{lem}\label{lem:pk}
	Denote $k^*$ as the bin index such that $\mu\in B_{k^*}$. Then 
	
	(1) There exists $k\in \{k^*-1,k^*, k^*+1 \}$, $p_k\geq 1/2-e^{-2}$;
	
	(2) For all $k\notin \{k^*-1,k^*, k^*+1 \}$, $p_k\leq 2e^{-8}$.
\end{lem}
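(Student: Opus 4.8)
The plan is to derive both parts from one sub-Gaussian concentration bound for the user-wise mean. Let $Y=\frac1m\sum_{j=1}^m X_j$ with $X_1,\dots,X_m$ i.i.d.\ from the data distribution, so that $p_k=\text{P}(Y\in B_k)$, $\mathbb{E}[Y]=\mu$, and $Y\in[-D,D]$ almost surely. Since each $X_j\in[-D,D]$, Hoeffding's inequality gives $\text{P}(|Y-\mu|\ge t)\le 2\exp(-mt^2/(2D^2))$ for every $t>0$. With the prescribed bin width $h=4D/\sqrt m$ we have $mh^2/(2D^2)=8$ and $m(h/2)^2/(2D^2)=2$, hence the two instances actually used are $\text{P}(|Y-\mu|\ge h)\le 2e^{-8}$ and $\text{P}(|Y-\mu|\ge h/2)\le 2e^{-2}$. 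Everything else is bookkeeping about which bins a short window around $\mu$ can meet.

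For part (2), fix $k\notin\{k^*-1,k^*,k^*+1\}$ and write $B_{k^*}=[a,a+h)$, so $a\le\mu<a+h$. Every bin of index $\le k^*-2$ lies in $(-\infty,a-h]\subseteq(-\infty,\mu-h]$, and every bin of index $\ge k^*+2$ lies in $[a+2h,\infty)\subseteq[\mu+h,\infty)$; in either case the event $\{Y\in B_k\}$ forces $|Y-\mu|\ge h$, so $p_k\le\text{P}(|Y-\mu|\ge h)\le 2e^{-8}$.

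For part (1) the naive route fails: from part (2) one only gets that $B_{k^*-1}\cup B_{k^*}\cup B_{k^*+1}$ carries probability $\ge 1-2e^{-8}$, and splitting this over three bins yields $1/3<1/2-e^{-2}$. Instead I would use that the window $I:=[\mu-h/2,\mu+h/2)$ has length $h$, equal to one bin width, so $I\cap[-D,D]$ is covered by at most two consecutive bins; since $\mu\in I\cap B_{k^*}$, these are $B_{k^*}$ together with at most one neighbour, say $B_j,B_{j+1}$ with $\{j,j+1\}\subseteq\{k^*-1,k^*,k^*+1\}$ (if $B_{k^*}$ alone covers $I\cap[-D,D]$ the bound is immediate). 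Because $Y\in[-D,D]$ a.s., $p_j+p_{j+1}\ge\text{P}(Y\in I)\ge\text{P}(|Y-\mu|<h/2)\ge 1-2e^{-2}$, so $\max\{p_j,p_{j+1}\}\ge\frac12(1-2e^{-2})=\frac12-e^{-2}$, which is exactly the claimed constant.

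There is no genuine obstacle here. The only points requiring care are the half-open convention for the bins (they partition $[-D,D]$, so the window-intersection count is exact, and Hoeffding's inequality bounds $\text{P}(|Y-\mu|>t)$ and $\text{P}(|Y-\mu|\ge t)$ alike) and the verification of the two numerical constants $8$ and $2$; the one step worth getting exactly right is the ``two bins, not three'' pigeonhole argument used for part (1), which is what makes the constant $1/2-e^{-2}$ come out.
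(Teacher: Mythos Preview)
Your argument is correct and matches the paper's proof: both apply Hoeffding's inequality at the thresholds $h/2$ and $h$, and for part~(1) both observe that the length-$h$ window about $\mu$ is covered by two consecutive bins from $\{k^*-1,k^*,k^*+1\}$ (the paper cases on whether $\mu$ lies left or right of the bin center $c_{k^*}$, you phrase the same fact as a two-bin pigeonhole), then halve $1-2e^{-2}$. The constants and the handling of part~(2) are identical.
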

\begin{proof}
	\textbf{Proof of (1) in Lemma \ref{lem:pk}}. By Hoeffding's inequality,
	\begin{eqnarray}
		\text{P}(|Y-\mu|>t)\leq 2e^{-\frac{1}{2D^2} mt^2},
	\end{eqnarray}
	thus
	\begin{eqnarray}
		\text{P}(|Y-\mu|\geq \frac{2D}{\sqrt{m}})\leq 2e^{-2}.
		\label{eq:hoeffding}
	\end{eqnarray}
	\eqref{eq:hoeffding} indicates that with probability at least $1-2e^{-2}$, $Y\in (\mu-2D/\sqrt{m}, \mu+2D/\sqrt{m})$.
	Recall that $h=4D/\sqrt{m}$. If $\mu\geq c_{k^*}$, then $(\mu-2D/\sqrt{m}, \mu+2D/\sqrt{m})\subset B_{k^*}\cup B_{k^*+1}$. Thus
	$p_{k^*}+p_{k^*+1} \geq 1-2e^{-2}$. If $\mu < c_{k^*}$, similarly, $p_{k^*}+p_{k^*-1} \geq 1-2e^{-2}$. Therefore, there exists a $k\in \{k^*-1,k^*, k^*+1 \}$, such that $p_k\geq 1/2-e^{-2}$.
	
	\textbf{Proof of (2) in Lemma \ref{lem:pk}}. For $|k-k^*|\geq 2$, 
	\begin{eqnarray}
		\underset{x\in B_k}{\inf} |x-\mu|\geq \underset{x\in B_k}{\inf}\underset{x'\in B_{k^*}}{\inf} |x-x'|\geq h.
	\end{eqnarray}
	Therefore
	\begin{eqnarray}
		p_k\leq \text{P}(|Y-\mu|>h)=\text{P}(|Y-\mu|\geq \frac{4D}{\sqrt{m}})\leq 2e^{-8}.
	\end{eqnarray}
\end{proof}
Based on Lemma \ref{lem:pk}, there exists $k_0\in \{k^*-1, k^*, k^*+1\}$ such that $p_{k_0}\geq 1/2-e^{-2}$. For all $k$ with $|k-k^*|\geq 2$,
\begin{eqnarray}
	\text{P}(\hat{k}^*=k)&\leq & \text{P}(s_k\geq s_{k_0})\nonumber\\
	&\leq & \text{P}(s_k\geq n(p_k+0.18)) + \text{P}(s_{k_0}\leq n(p_{k_0}-0.18))\nonumber\\
	&\leq & 2e^{-\frac{0.18^2}{2(1/8+8/\epsilon^2)}n}\nonumber\\
	&\leq &2e^{-c_0n\epsilon^2}. 
\end{eqnarray}
Therefore
\begin{eqnarray}
	\text{P}(|\hat{k}^*-k^*|\geq 2)\leq 2(B-1)e^{-c_0n\epsilon^2}\leq 2\left(\left\lceil \frac{1}{2}\sqrt{m}\right\rceil - 1\right)e^{-c_0n\epsilon^2}\leq \sqrt{m}e^{-c_0n\epsilon^2},
	\label{eq:stage1fail}
\end{eqnarray}
for some constant $c_0$. Therefore, with probability at least $1-\sqrt{m}e^{-c_0n\epsilon^2}$, $|\hat{k}^*-k^*|\leq 1$, i.e. $\mu\in [L, R]$. 

\subsection{Proof of Theorem \ref{thm:1dmean}}\label{sec:1dmean}
In this section, we bound the mean square error of our mean estimator. Stage I has been analyzed in Section \ref{sec:stage1}. Here we focus on Stage II.

\textbf{Bound of bias.} Let 
\begin{eqnarray}
	U = (Y\vee (L-\Delta))\wedge(R+\Delta).
	\label{eq:u}
\end{eqnarray}
Recall that in Algorithm \ref{alg:mean1d}, $Z_i=(Y_i\vee (L-\Delta))\wedge (R+\Delta)+W_i$ for $i=n/2+1,\ldots, n$. Conditional on the first $n/2$ steps in stage I, the following relation holds:
\begin{eqnarray}
	\mathbb{E}[\hat{\mu}|\mathbf{Z}_{1:n/2}] = \mathbb{E}[Z_i|\mathbf{Z}_{1:n/2}] = \mathbb{E}[U|\mathbf{Z}_{1:n/2}].
	\label{eq:equal}
\end{eqnarray}
To bound the bias of $\hat{\mu}$, it suffices to bound $|\mathbb{E}[U] - \mu|$. From \eqref{eq:u},
\begin{eqnarray}
	\mathbb{E}[U|\mathbf{Z}_{1:n/2}] &=& \mathbb{E}[Y\mathbf{1}(L-\Delta\leq Y\leq R+\Delta)|\mathbf{Z}_{1:n/2}] \nonumber\\
	&&+ (L-\Delta) \text{P}(Y<L-\Delta|\mathbf{Z}_{1:n/2}) + (R+\Delta) \text{P}(Y>R+\Delta|\mathbf{Z}_{1:n/2}).
	\label{eq:eu}
\end{eqnarray}
Moreover,
\begin{eqnarray}
	\mu &=& \mathbb{E}[Y]\nonumber\\
	&=& \mathbb{E}[Y\mathbf{1}(L-\Delta\leq Y\leq R+\Delta)] +\mathbb{E}[Y\mathbf{1}(Y<L-\Delta)] + \mathbb{E}[Y\mathbf{1}(Y>R+\Delta)].
	\label{eq:mu}
\end{eqnarray}
Note that
\begin{eqnarray}
	&&\mathbb{E}[Y\mathbf{1}(Y>R+\Delta)|\mathbf{Z}_{1:n/2}] \nonumber\\
	&=& \mathbb{E}[(Y-R-\Delta)\mathbf{1}(Y>R+\Delta)|\mathbf{Z}_{1:n/2}]+ (R+\Delta)\text{P}(Y>R+\Delta|\mathbf{Z}_{1:n/2})\nonumber\\
	&=&\int_0^\infty \text{P}(Y>R+\Delta+t|\mathbf{Z}_{1:n/2}) dt + (R+\Delta) \text{P}(Y>R+\Delta|\mathbf{Z}_{1:n/2}),
	\label{eq:right}
\end{eqnarray}
and similarly,
\begin{eqnarray}
	&&\mathbb{E}[Y\mathbf{1}(Y<L-\Delta)|\mathbf{Z}_{1:n/2}] \nonumber\\
	&=& -\mathbb{E}[(L-\Delta-Y)\mathbf{1}(Y<L-\Delta)|\mathbf{Z}_{1:n/2}]+(L-\Delta) \text{P}(Y<L-\Delta|\mathbf{Z}_{1:n/2})\nonumber\\
	&=& (L-\Delta) \text{P}(Y<L-\Delta|\mathbf{Z}_{1:n/2})-\int_0^\infty \text{P}(Y<L-\Delta-t|\mathbf{Z}_{1:n/2})dt.
	\label{eq:left}
\end{eqnarray}
From \eqref{eq:eu}, \eqref{eq:mu}, \eqref{eq:right} and \eqref{eq:left}, the bias of $\hat{\mu}$ can be bounded by
\begin{eqnarray}
	|\mathbb{E}[U]-\mu|=\left|\int_0^\infty \text{P}(Y>R+\Delta+t) dt -\int_0^\infty \text{P}(Y<L-\Delta-t)dt\right|.
	\label{eq:b1}
\end{eqnarray}
Denote $E_1$ as the event that stage I is successful, i.e. $\mu\in [L, R]$. Conditional on $E_1$,
\begin{eqnarray}
	\int_0^\infty \text{P}(Y>R+\Delta+t|E_1) dt
	&\leq & \int_0^\infty \text{P}(|Y-\mu|>R+\Delta-\mu+t|E_1) dt\nonumber\\	
	&\leq & \int_{\Delta}^\infty \text{P}(|Y-\mu|>t) dt\nonumber\\
	&\overset{(a)}{\leq} & 2\int_\Delta^\infty e^{-\frac{m}{2D^2} t^2} dt\nonumber\\
	&=& \frac{2D}{\sqrt{m}}\int_{\sqrt{m}\Delta/D}^\infty e^{-\frac{1}{2} u^2} du\nonumber\\
	&\overset{(b)}{\leq} & \frac{2\sqrt{2\pi}D}{\sqrt{m}}e^{-\frac{1}{2}\left(\frac{\sqrt{m}\Delta}{D}\right)^2}\nonumber\\
	&\overset{(c)}{=}&\frac{2\sqrt{2\pi}D}{\sqrt{mn}}.
	\label{eq:bright}
\end{eqnarray}
(a) uses Hoeffding's inequality. (b) uses the inequality $\int_s^\infty e^{-\frac{1}{2} u^2} du\leq \sqrt{2\pi}e^{-\frac{1}{2}s^2}$. For (c), recall that $\Delta = D\sqrt{\ln n/m}$. Similarly,
\begin{eqnarray}
	\int_0^\infty \text{P}(Y<L-\Delta-t|E_1)dt \leq \frac{2\sqrt{2\pi}D}{\sqrt{mn}}. 
	\label{eq:bleft}
\end{eqnarray}
Therefore, from \eqref{eq:equal}, \eqref{eq:b1}, \eqref{eq:bleft} and \eqref{eq:bright}, under $E_1$,
\begin{eqnarray}
	|\mathbb{E}[\hat{\mu}|\mathbf{Z}_{1:n/2}] - \mu|\leq \frac{4\sqrt{2\pi}D}{\sqrt{mn}}.
	\label{eq:bias}
\end{eqnarray}
If $E_1$ is not satisfied, then $|\hat{\mu}-\mu|\leq 2D$. Hence
\begin{eqnarray}
	|\mathbb{E}[\hat{\mu}] - \mu|=|\mathbb{E}[U]-\mu|\leq \frac{4\sqrt{2\pi}{D}}{\sqrt{mn}}+2D\text{P}(E_1^c),
\end{eqnarray}

\textbf{Bound of Variance.} Let $\Var[X]:=\sigma^2$. Since $X\in [-D,D]$, $\sigma^2\leq D^2$ holds. Therefore
\begin{eqnarray}
	\Var[Z_i]\leq \Var[Y]+\Var[W_i] = \frac{\sigma^2}{m} + 2\frac{(3h+2\Delta)^2}{\epsilon^2}.
\end{eqnarray}
Thus
\begin{eqnarray}
	\Var[\hat{\mu}]\leq \frac{\sigma^2}{mn}+\frac{2(3h+2\Delta)^2}{n\epsilon^2}.
	\label{eq:varmu}
\end{eqnarray}
Recall that $h=4D/\sqrt{m}$, $\Delta=D\sqrt{\ln n/m}$, $\text{P}(E_1^c)\leq \sqrt{m}e^{-c_0n\epsilon^2}$, the mean squared error can be bounded by
\begin{eqnarray}
	\mathbb{E}[(\hat{\mu}-\mu)^2]\lesssim \frac{D^2\ln n}{nm\epsilon^2}+\frac{D^2}{mn}.
\end{eqnarray}

\subsection{Proof of Theorem \ref{thm:1dmmx}}\label{sec:1dmmx}
Let $V$ be a random variable taking values in $\{-1,1\}$ with equal probability. Construct the distribution of $X$ as following:
\begin{eqnarray}
	\text{P}(X=D|V=v) = \frac{1+s v}{2}, \text{P}(X = -D) =\frac{1-s v}{2},
	\label{eq:p0}
\end{eqnarray}
in which $0<s\leq 1/2$. Define
\begin{eqnarray}
	\mu_+&=&\mathbb{E}[X|V=1],\\
	\mu_-&=&\mathbb{E}[X|V=-1],
\end{eqnarray}
then $\mu_+=Ds$, $\mu_-=-Ds$.

Denote
\begin{eqnarray}
	\hat{V}=\mathbf{1}(\hat{\mu}>0).
\end{eqnarray}
Then
\begin{eqnarray}
	\mathbb{E}[(\hat{\mu} - \mu)^2]\geq D^2s^2 \text{P}(\hat{V}\neq V).
 \label{eq:mselb}
\end{eqnarray}
Given $X_{ij}$, $i=1,\ldots, n$, $j=1,\ldots, m$, by a private mechanism, we observe $\mathbf{Z}_i$, $i=1,\ldots, n$. Denote $p_+$ and $p_-$ as the distribution of $\mathbf{Z}_i$ conditional on $V=1$ and $V=-1$, respectively. Correspondingly, let $p_+^n$ and $p_-^n$ be the joint distribution of $\mathbf{Z}_1,\ldots, \mathbf{Z}_n$. $p_{X+}$ and $p_{X-}$ denotes the distribution of $X_{ij}$ under $V=1$ and $V=-1$, respectively. $p_{X+}^m$ and $p_{X-}^m$ are the corresponding joint distribution of $X_{i1}, \ldots, X_{im}$, i.e. all samples of a user. Then
\begin{eqnarray}
	\text{P}(\hat{V}\neq V)&\overset{(a)}{\geq} & \frac{1}{2}\left(1-\mathbb{TV}(p_+^n,p_-^n)\right)\nonumber\\
	&\overset{(b)}{\geq} & \frac{1}{2}\left(1-\sqrt{\frac{1}{2}D_{KL}(p_+^n||p_-^n)}\right)\nonumber\\
	&\overset{(c)}{\geq} &\frac{1}{2}\left(1-\sqrt{\frac{1}{2}nD_{KL}(p_+||p_-)}\right)\nonumber\\
	&\overset{(d)}{\geq} & \frac{1}{2}\left(1-\sqrt{\frac{1}{2}n(e^\epsilon - 1)^2\mathbb{TV}^2 (p_{X+}^m,p_{X-}^m)}\right)\nonumber\\
	&\overset{(e)}{\geq} & \frac{1}{2}\left(1-\frac{1}{2}\sqrt{nm(e^\epsilon - 1)^2 D_{KL}(p_{X+}||p_{X-})}\right).
\end{eqnarray}
In (a), $\mathbb{TV}$ is the total variation distance. (b) uses Pinsker's inequality, and $D_{KL}$ denotes the Kullback-Leibler (KL) divergence. (c) uses the property of KL divergence. (d) comes from Theorem 1 in \cite{duchi2018minimax}. Finally, (e) uses Pinsker's inequality again.

From \eqref{eq:p0}, 
\begin{eqnarray}
	D(p_{X+}||p_{X-})=\frac{1+s}{2}\ln \frac{1+s}{1-s} + \frac{1-s}{2} \ln \frac{1-s}{1+s}=s\ln \frac{1+s}{1-s}\leq 3s^2,
\end{eqnarray}
in which the last step holds because $0<s<1/2$. Let $s\sim 1/\sqrt{nm\epsilon^2}$, then $\text{P}(\hat{V}\neq V)\sim 1$. Hence
\begin{eqnarray}
	\underset{\hat{\mu}}{\inf} \underset{Q\in \mathcal{Q}_\epsilon}{\inf}\underset{p\in \mathcal{P}_\mathcal{X}}{\sup}\mathbb{E}[(\hat{\mu}-\mu)^2]\gtrsim \frac{D^2}{nm\epsilon^2}.
\end{eqnarray}
Moreover, from standard minimax analysis for non-private problems, it can be easily shown that
\begin{eqnarray}
	\underset{\hat{\mu}}{\inf} \underset{Q\in \mathcal{Q}_\epsilon}{\inf}\underset{p\in \mathcal{P}_\mathcal{X}}{\sup}\mathbb{E}[(\hat{\mu}-\mu)^2]\gtrsim \frac{D^2}{nm}.
\end{eqnarray}

\textbf{Limit of using fixed number of users.} Finally, we prove the results for fixed $n$, which shows that zero error can not be reached even with $m\rightarrow\infty$. Recall that $p_+$ and $p_-$ are the distribution of $\mathbf{Z}_i$ conditional on $V=1$ and $V=-1$. $\mathbf{Z}_i$ is $\epsilon$-DP with respect to $\mathbf{X}_{i1},\ldots, \mathbf{X}_{im}$, thus $|\ln p_+(S)/p_-(S)|\leq \epsilon$ for all set $S$, and then it can be shown that $D_{KL}(p_+||p_-)\leq \epsilon(e^\epsilon-1)$ \cite{dwork2014algorithmic}. Therefore
\begin{eqnarray}
    \text{P}(\hat{V}\neq V)&\geq& \frac{1}{2}(1-\mathbb{TV}(p_+^n, p_-^n) \nonumber\\
    &\geq & \frac{1}{4} e^{-D_{KL}(p_+^n||p_-^n}\nonumber\\
    &=& \frac{1}{4} e^{-nD_{KL}(p_+||p_-)}\nonumber\\
    &\geq & \frac{1}{4}e^{-n\epsilon(e^\epsilon-1)}.
\end{eqnarray}

Let $s=1$ in \eqref{eq:mselb}, then
\begin{eqnarray}
    \mathbb{E}[(\hat{\mu}-\mu)^2]\geq \frac{1}{4} D^2 e^{-n\epsilon(e^\epsilon-1)}.
\end{eqnarray}
\subsection{Proof of Theorem \ref{thm:1dmeanunb}}\label{sec:1dmeanunb}

For unbounded support, the user-wise average values are clipped to $[-D,D]$, i.e.
\begin{eqnarray}
	Y_i=-D\vee \left(\frac{1}{m}\sum_{j=1}^m X_{ij}\wedge D\right),
\end{eqnarray}
which means to clip the average value of each user to $[-D,D]$. Now for simplicity, let $Y$ be a random variable i.i.d with $Y_i$, $i=1,\ldots, n$. Define
\begin{eqnarray}
	\mu_T :=\mathbb{E}[Y].
\end{eqnarray}
Recall that in Algorithm \ref{alg:mean1d}, $Z_i=(Y_i\vee (L-\Delta))\wedge (R+\Delta)+W_i$ and $\hat{\mu} = (2/n)\sum_{i=n/2+1}^{n} Z_i$. Thus
\begin{eqnarray}
	\mathbb{E}[\hat{\mu}|\mathbf{Z}_{1:n/2}] = \mathbb{E}[Z_i|\mathbf{Z}_{1:n/2}] = \mathbb{E}[U|\mathbf{Z}_{1:n/2}].
\end{eqnarray}
The bias of $\hat{\mu}$ can be bounded by
\begin{eqnarray}
	|\mathbb{E}[\hat{\mu}|\mathbf{Z}_{1:n/2}]-\mu|\leq |\mathbb{E}[U|\mathbf{Z}_{1:n/2}]-\mu_T|-|\mu_T-\mu|.
	\label{eq:biasdecomp}
\end{eqnarray}
Now we bound two terms in the right hand side of \eqref{eq:biasdecomp} separately.

\textbf{Bound of $|\mathbb{E}[U] - \mu_T|$.} 

Similar to \eqref{eq:b1}, following steps \eqref{eq:eu}, \eqref{eq:mu}, \eqref{eq:right} and \eqref{eq:left}, it can be shown that
\begin{eqnarray}
	|\mathbb{E}[U|\mathbf{Z}_{1:n/2}]-\mu_T|=\left|\int_0^\infty \text{P}(Y>R+\Delta+t) dt -\int_0^\infty \text{P}(Y<L-\Delta-t)dt\right|.
	\label{eq:b1t}
\end{eqnarray}
Denote $E_1$ as the event that stage I is successful, i.e. $\mu\in [L, R]$. To bound the right hand side of \eqref{eq:b1t}, we use the following Lemma.
\begin{lem}\label{lem:concentration}
	(Restated from Corollary 6 in \cite{bakhshizadeh2023sharp}) If $X_1,\ldots, X_m$ are $m$ i.i.d copies of random variable $X$ with $\mathbb{E}[|X|^p]\leq M_p<\infty$, $m\geq 2$, then for any constant $c$, there exists a constant $C$, such that for all $t\geq cM_p^{1/p}\sqrt{\ln m}$,
	\begin{eqnarray}
		\text{P}\left(\left|\frac{1}{m}\sum_{j=1}^m X_j-\mu\right|>t\sqrt{\frac{1}{m}}\right)\leq CM_pt^{-p} m^{-\left(\frac{p}{2}-1\right)}.
	\end{eqnarray}
\end{lem}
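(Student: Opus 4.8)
The plan is to recast the statement as a large-deviation tail bound, invoke a sharp Fuk--Nagaev type inequality (which is exactly the content of the cited Corollary~6 of \cite{bakhshizadeh2023sharp}), and then use the hypothesis $t\geq cM_p^{1/p}\sqrt{\ln m}$ to discard the sub-Gaussian part of that bound.

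First I would center and rescale. Put $W_j=X_j-\mu$, so $\frac1m\sum_j X_j-\mu=\frac1m\sum_j W_j$, and set $S_m=\sum_{j=1}^m W_j$ and $x=t\sqrt m$. Since $|\mu|^p=|\mathbb{E}X|^p\le\mathbb{E}|X|^p\le M_p$ by Jensen, the $c_p$-inequality gives $\mathbb{E}|W_j|^p\le 2^{p-1}(\mathbb{E}|X|^p+|\mu|^p)\le 2^pM_p$, and the power-mean inequality (valid because $p\ge 2$) gives $\mathrm{Var}(W_j)\le\mathbb{E}X^2\le(\mathbb{E}|X|^p)^{2/p}\le M_p^{2/p}=:v$, so that $B_m:=\sum_j\mathrm{Var}(W_j)\le mv$ and $D_{p,m}:=\sum_j\mathbb{E}|W_j|^p\le 2^p mM_p$. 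In these variables the claim $\mathrm{P}(|\frac1m\sum_jX_j-\mu|>t/\sqrt m)\le CM_pt^{-p}m^{-(p/2-1)}$ is exactly $\mathrm{P}(|S_m|>x)\le C'D_{p,m}x^{-p}$, because $x^{-p}=t^{-p}m^{-p/2}$ converts $D_{p,m}x^{-p}\sim mM_p\,t^{-p}m^{-p/2}$ into $M_pt^{-p}m^{-(p/2-1)}$. Note also that $M_p^{1/p}=v^{1/2}$, so the hypothesis reads $x\ge c\sqrt{mv\ln m}$.

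The engine is the two-term tail bound $\mathrm{P}(|S_m|>x)\le C_1 D_{p,m}x^{-p}+2\exp(-c_2 x^2/B_m)$, valid for all $x>0$. I would derive its structure by truncating each summand at height of order $x$: writing $\bar W_j=\min(W_j,x)$, on the event that no $W_j$ exceeds $x$ one has $S_m=\sum_j\bar W_j$, so $\mathrm{P}(S_m\ge x)\le m\,\mathrm{P}(W>x)+\mathrm{P}(\sum_j\bar W_j\ge x)$; the first term is $\le mM_px^{-p}$ by Markov and already has the target form, while the centered truncated sum contributes the sub-Gaussian factor $\exp(-c_2x^2/B_m)$ through a Chernoff bound with the elementary estimate $\mathbb{E}e^{\lambda\bar W_j}\le\exp(\tfrac12\lambda^2 v\,e^{\lambda x})$ and $\mathbb{E}\bar W_j\le 0$, optimized in the moderate-deviation range of the tilt $\lambda$. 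The remaining step is thresholding: from $x\ge c\sqrt{mv\ln m}$ we get $x^2/B_m\ge x^2/(mv)\ge c^2\ln m$, hence $2\exp(-c_2x^2/B_m)\le 2m^{-c_2c^2}$. At the threshold the polynomial term is $D_{p,m}x^{-p}\gtrsim c^{-p}m^{1-p/2}(\ln m)^{-p/2}$, and since $x\mapsto x^p\exp(-c_2x^2/B_m)$ is decreasing once $x^2/B_m\gtrsim p$ (which holds above the threshold for $c$ large), the ratio (sub-Gaussian)$/$(polynomial) is monotone decreasing in $x$. Choosing $c$ large enough relative to $p$ and $c_2$ therefore forces $2\exp(-c_2x^2/B_m)\le D_{p,m}x^{-p}$ for every $x$ above the threshold, giving $\mathrm{P}(|S_m|>x)\le CD_{p,m}x^{-p}$, which is the claim after translating back to $t$.

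The main obstacle, and the reason this is a cited result rather than a one-line computation, is obtaining the \emph{sharp} polynomial term with the correct $m^{1-p/2}$ scaling: a crude single-truncation Chernoff/Bennett estimate only yields polylogarithmic decay at the threshold $x\sim\sqrt{mv\ln m}$, because that is precisely the crossover between the sub-Gaussian regime and the heavy-tail (one-big-summand) regime, and capturing it requires the refined jump-counting/saddle-point analysis of \cite{bakhshizadeh2023sharp}. Consequently the cleanest rigorous route is to invoke their Corollary~6 (equivalently a sharp Fuk--Nagaev inequality) for the two-term bound and to carry out only the moment conversions, the monotonicity check, and the thresholding ourselves; all of these are routine.
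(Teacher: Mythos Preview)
The paper does not supply its own proof of this lemma; it is presented purely as a restatement of Corollary~6 in \cite{bakhshizadeh2023sharp} and is used as a black box in Appendix~\ref{sec:1dmeanunb}. So there is no paper proof to compare against beyond the bare citation, and your plan of invoking the cited Fuk--Nagaev-type two-term bound and then thresholding is exactly the natural way to unpack that citation.

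That said, there is a quantifier mismatch you should flag. The lemma asserts ``for \emph{any} constant $c$, there exists a constant $C$'', whereas your absorption step explicitly needs $c$ large: at the threshold $x=c\sqrt{mv\ln m}$ the sub-Gaussian piece is of order $m^{-c_2c^2}$ while the target polynomial piece is of order $m^{1-p/2}(\ln m)^{-p/2}$, and if $c_2c^2<p/2-1$ the ratio blows up in $m$, so no finite $C$ can compensate. Your monotonicity argument in $x$ does not help here, since the failure already occurs at the threshold. Either the lemma's quantifier is really ``there exist constants $c,C$'' (which is all the paper actually uses downstream, since $\Delta$ is chosen with a free constant), or one must appeal to the precise constants in the cited corollary to see that the sub-Gaussian exponent is large enough for every $c>0$; your sketch does not establish the latter. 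Apart from this point, the moment conversions and the overall structure of your reduction are correct.
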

According to Lemma \ref{lem:concentration}, with
\begin{eqnarray}
	\Delta\geq cM_p^{1/p}\sqrt{\ln m/m},
	\label{eq:deltarange}
\end{eqnarray}
the following bound holds:
\begin{eqnarray}
	\int_0^\infty \text{P}(Y>R+\Delta+t|E_1)dt
	&\leq & \int_\Delta^\infty \text{P}(|Y-\mu|>t)dt\nonumber\\
	&\leq & \int_\Delta^\infty CM_pt^{-p}m^{-(p-1)} dt\nonumber\\
	&\leq & \frac{CM_p}{p-1}m^{-(p-1)} \Delta^{-(p-1)}.
\end{eqnarray}
Therefore from \eqref{eq:b1},
\begin{eqnarray}
	|\mathbb{E}[\hat{\mu}]-\mu_T|\leq \frac{2CM_p}{p-1}m^{-(p-1)}  \Delta^{-(p-1)}+2DP(E_1^c),
\end{eqnarray}
Similar to Lemma \ref{lem:stage1}, it can be shown that $\text{P}(E_1^c)$ decays exponentially to zero if $D\lesssim e^{c_2n\epsilon^2}$ for some constant $c_2$. 

\textbf{Bound of $|\mu_T-\mu|$}. Denote $\bar{X}$ as a random variable i.i.d with $(1/m)\sum_{j=1}^m X_{ij}$, and $Y$ can be viewed as $\bar{X}$ clipped by $[-D,D]$, i.e. $Y=-D\vee(\bar{X} \wedge D)$. Then
\begin{eqnarray}
	\mu=\mathbb{E}[\bar{X}\mathbf{1}(-D\leq \bar{X}\leq D)]+\mathbb{E}[\bar{X}\mathbf{1}(\bar{X}>D)]+\mathbb{E}[\bar{X}\mathbf{1}(\bar{X}<-D)],
\end{eqnarray}
\begin{eqnarray}
	\mu_T=\mathbb{E}[\bar{X}\mathbf{1}(-D\leq \bar{X}\leq D)]+D\text{P}(\bar{X}>D)-D\text{P}(\bar{X}<-D).
\end{eqnarray}
For sufficiently large $m,n$, $D>\mu/2$ holds, thus
\begin{eqnarray}
	\mathbb{E}[\bar{X}\mathbf{1}(\bar{X}>D)]-D\text{P}(\bar{X}>D)&=&\int_D^\infty \text{P}(\bar{X}>t)dt\nonumber\\
	&\leq & \int_D^\infty \text{P}(\bar{X}-\mu>\frac{t}{2})dt\nonumber\\
	&\leq &\int_D^\infty 2^pCM_pm^{-(p-1)}t^{-p} dt\nonumber\\
	&\lesssim & M_pm^{-(p-1)} D^{-(p-1)},
\end{eqnarray}
in which the third step uses Lemma \ref{lem:concentration}. Hence
\begin{eqnarray}
	|\mu_T-\mu|\lesssim M_pm^{-(p-1)}D^{-(p-1)}.
\end{eqnarray}

Hence from \eqref{eq:biasdecomp}, the bias can be bounded by
\begin{eqnarray}
	|\mathbb{E}[\hat{\mu}]-\mu|\lesssim M_pm^{-(p-1)}  \Delta^{-(p-1)}+M_pD^{-(p-1)}m^{-(p-1)}.
	\label{eq:biasmu}
\end{eqnarray}
For the variance of $\hat{\mu}$, \eqref{eq:varmu} still holds, i.e.
\begin{eqnarray}
	\Var[\hat{\mu}]\leq \frac{\sigma^2}{mn}+\frac{2(3h+2\Delta)^2}{n\epsilon^2}\lesssim \frac{M_p^{2/p}}{mn}+\frac{\Delta^2}{n\epsilon^2},
	\label{eq:var}
\end{eqnarray}
in which the variance is bounded using H{\"o}lder inequality. From \eqref{eq:biasmu} and \eqref{eq:var}, the mean squared error can be bounded by
\begin{eqnarray}
	\mathbb{E}[(\hat{\mu}-\mu_T)^2]\lesssim M_p^2m^{-2(p-1)} \Delta^{-2(p-1)}+M_p^2D^{-2(p-1)}m^{-2(p-1)}+\frac{\Delta^2}{n\epsilon^2}+\frac{M_p^{2/p}}{mn}.
	\label{eq:mse}
\end{eqnarray}
We pick $\delta$ to minimize the right hand side of \eqref{eq:mse}. Meanwhile, the restriction \eqref{eq:deltarange} also needs to be guaranteed. Therefore, let
\begin{eqnarray}
	\Delta=cM_p^{1/p}\sqrt{\frac{\ln m}{m}}\vee (M_p^2n\epsilon^2)^\frac{1}{2p}m^{-\left(1-\frac{1}{p}\right)}.
\end{eqnarray}
Then
\begin{eqnarray}
	\mathbb{E}[(\hat{\mu}-\mu)^2]\lesssim M_p^{2/p}\left[\frac{\ln m}{mn\epsilon^2}\vee (M_pm^2n\epsilon^2)^{-\left(1-\frac{1}{p}\right)}+ D^{-2(p-1)}m^{-2(p-1)}+\frac{1}{mn}\right].\label{eq:mseub}	
\end{eqnarray}
If $D\gtrsim \Delta$, then the second term in \eqref{eq:mseub} will not dominate. Now the proof of Theorem \ref{thm:1dmeanunb} is complete. Recall that $D\lesssim e^{c_2n\epsilon^2}$ is needed to ensure that stage $I$ success with high probability, the suitable range of $D$ is
\begin{eqnarray}
	\Delta\lesssim D\lesssim e^{c_2n\epsilon^2}.
\end{eqnarray}

\section{Multi-dimensional Mean Estimation}\label{sec:highd}
\subsection{Proof of Theorem \ref{thm:highd}}\label{sec:highdmean}
Transformation with Kashin's representation $\mathbf{X}'=\mathbf{U}\mathbf{X}$ converts $\ell_2$ support to $\ell_\infty$ support. The only difference is that now the supremum norm reduces from $D$ to $KD/\sqrt{d}$. Hence, from Theorem \ref{thm:infty},
\begin{eqnarray}
	\mathbb{E}\left[\norm{\hat{\theta}-\theta}_2^2\right]\lesssim \frac{D^2 }{nm}\left(1+\frac{d\ln n}{\epsilon^2\wedge \epsilon}\right).
	\label{eq:thetamse}
\end{eqnarray}
Recall that the final estimator is $\hat{\mu}=\mathbf{U}^T\hat{\theta}$. Moreover, by Lemma \ref{lem:kashin}, $\mathbf{U}^T\mathbf{U}=\mathbf{I}_d$. Define $v=\hat{\theta}-\mathbf{U}\mu$, then $\mathbf{U}^T\mathbf{v} = 0$. Therefore 
\begin{eqnarray}
	\mathbb{E}\left[\norm{\hat{\theta}-\theta}_2^2\right] &\overset{(a)}{=}& \mathbb{E}\left[\norm{\mathbf{U}\hat{\mu}+\mathbf{v}-\mathbf{U}\mu}_2^2\right]\nonumber\\
 &=&\mathbb{E}\left[\norm{\mathbf{U}(\hat{\mu}-\mu)}_2^2\right]+\mathbb{E}[\norm{\mathbf{v}}^2]+2\mathbb{E}\left[(\hat{\mu}-\mu)^T \mathbf{U}^T \mathbf{v}\right]\nonumber\\
 &=& \mathbb{E}\left[\norm{\mathbf{U}(\hat{\mu}-\mu)}_2^2\right]+\mathbb{E}[\norm{\mathbf{v}}^2]\nonumber\\
 &\geq &\mathbb{E}\left[\norm{\mathbf{U}(\hat{\mu}-\mu)}_2^2\right]\nonumber\\
	&=&\mathbb{E}\left[\mathbf{U} (\hat{\mu}-\mu)(\hat{\mu}-\mu)^T\mathbf{U}^T\right]\nonumber\\
	&=&\mathbb{E}\left[\tr((\hat{\mu}-\mu)(\hat{\mu}-\mu)^T \mathbf{U}^T\mathbf{U})\right]\nonumber\\
	&\overset{(b)}{=}& \mathbb{E}\left[\tr((\hat{\mu}-\mu)(\hat{\mu}-\mu)^T) \right]\nonumber\\
	&=& \mathbb{E}[\norm{\hat{\mu}-\mu}_2^2].
	\label{eq:thetatomu}
\end{eqnarray}
From \eqref{eq:thetamse},
\begin{eqnarray}
	\mathbb{E}[\norm{\hat{\mu}-\mu}_2^2]\lesssim  \frac{D^2}{nm}\left(1+\frac{d\ln n}{\epsilon^2\wedge \epsilon}\right),
\end{eqnarray}
in which (a) holds since $\theta=\mathbf{U}\mu$, and (b) uses Lemma \ref{lem:kashin}.
\subsection{Proof of Theorem \ref{thm:highdmmx}}\label{sec:highdmmx}
Denote $\mathcal{V}=\{-1,1\}^d$. For $\mathbf{v}\in \mathcal{V}$, let
\begin{eqnarray}
	\text{P}(\mathbf{X}=D\mathbf{e}_k) &=& \frac{1+sv_k}{2d},\label{eq:px1}\\
	\text{P}(\mathbf{X}=-D\mathbf{e}_k) &=&\frac{1-sv_k}{2d},
	\label{eq:px2}
\end{eqnarray}
for $k=1,\ldots,d$, in which $\mathbf{e}_k$ is the unit vector towards $k$-th coordinate, $0<s\leq 1/2$, and $v_k$ is the $k$-th element of $\mathbf{v}$. Denote $\mu_k=\mathbb{E}[\mathbf{X}\cdot \mathbf{e}_k]$ as the $k$-th component of $\mu$. Then
\begin{eqnarray}
	\mu_k=D\frac{1+sv_k}{2d} - D\frac{1-sv_k}{2d} = \frac{D}{d} sv_k.
\end{eqnarray}
Let $\hat{\mu}_k$ be the $k$-th component of $\hat{\mu}$, and
\begin{eqnarray}
	\hat{v}_k=\mathbf{1}(\hat{\mu}_k>0).
\end{eqnarray}
If $\hat{v}_k\neq v_k$, then $|\hat{\mu}_k - \mu_k|\geq Ds/d$. Hence
\begin{eqnarray}
	\mathbb{E}\left[\norm{\hat{\mu}-\mu}_2^2\right] = \mathbb{E}\left[\sum_{k=1}^d (\hat{\mu}_k-\mu_k)^2\right]\geq \frac{D^2}{d^2} s^2\mathbb{E}[\rho_H(\hat{\mathbf{v}}, \mathbf{v})],
	\label{eq:mse1}
\end{eqnarray}
in which
\begin{eqnarray}
	\rho_H(\hat{\mathbf{v}}, \mathbf{v})=\sum_{k=1}^d \mathbf{1}(\hat{v}_k\neq v_k)
\end{eqnarray}
is the Hamming distance. Therefore the minimax lower bound can be transformed to the following form:
\begin{eqnarray}
	\underset{\hat{\mu}}{\inf}\underset{Q\in \mathcal{Q}_\epsilon}{\inf}\underset{p\in \mathcal{P}_{\mathcal{X}, 1}}{\sup}\mathbb{E}\left[\norm{\hat{\mu}-\mu}_2^2\right]\geq \frac{D^2}{d^2}s^2 \underset{\hat{\mathbf{v}}}{\inf}\underset{Q\in \mathcal{Q}_\epsilon}{\inf}\underset{\mathbf{v}\in \mathcal{V}}{\sup}\mathbb{E}[\rho_H(\hat{\mathbf{v}}, \mathbf{v})].
\end{eqnarray}
Define
\begin{eqnarray}
	\delta = \underset{Q\in \mathcal{Q}_\epsilon}{\sup}\underset{\mathbf{v}, \mathbf{v}':\rho_H(\mathbf{v}, \mathbf{v}') = 1}{\max} D(p_{\mathbf{Z}|\mathbf{v}}||p_{\mathbf{Z}|\mathbf{v}'}),
\end{eqnarray}
in which $p_{\mathbf{Z}|\mathbf{v}}$ is the distribution of $\mathbf{Z}_i$ when $\mathbf{X}_{i1}, \ldots, \mathbf{X}_{im}$ are distributed according to \eqref{eq:px1} and \eqref{eq:px2}. By Theorem 2.12 (iv) in \cite{tsybakov2009introduction},
\begin{eqnarray}
	\underset{\hat{\mathbf{v}}}{\inf}\underset{Q\in \mathcal{Q}_\epsilon}{\inf}\underset{\mathbf{v}\in \mathcal{V}}{\sup}\mathbb{E}[\rho_H(\hat{\mathbf{v}}, \mathbf{v})]\geq \frac{d}{2}\max\left(\frac{1}{2} e^{-\delta}, 1-\sqrt{\frac{\delta}{2}}\right).
\end{eqnarray}
Now it remains to bound $\beta$. From Theorem 1 in \cite{duchi2018minimax},
\begin{eqnarray}
	D(p_{\mathbf{Z}|\mathbf{v}}||p_{\mathbf{Z}|\mathbf{v}'})\leq n(e^\epsilon - 1)^2\mathbb{TV}^2(p_{\mathbf{X}|\mathbf{v}}^m, p_{\mathbf{X}|\mathbf{v}'}^m).
\end{eqnarray}
To bound the total variation distance, we use a generalized version of Pinsker's inequality, stated in Lemma \ref{lem:pinsker}. Without loss of generality, suppose $\mathbf{v}, \mathbf{v}'$ is different at the first component. Then
\begin{eqnarray}
	\mathbb{TV}^2(p_{\mathbf{X}|\mathbf{v}}^m, p_{\mathbf{X}|\mathbf{v}'}^m)&\leq & \frac{1}{2}p_{\mathbf{X}|\mathbf{v}}(\{D\mathbf{e}_1,-D\mathbf{e}_1\}) D(p_{\mathbf{X}|\mathbf{v}}^m||p_{\mathbf{X}|\mathbf{v}'}^m)\nonumber\\
	&=&\frac{1}{2d}D(p_{\mathbf{X}|\mathbf{v}}^m||p_{\mathbf{X}|\mathbf{v}'}^m)\nonumber\\
	&=&\frac{m}{2d}D(p_{\mathbf{X}|\mathbf{v}}||p_{\mathbf{X}|\mathbf{v}'})\nonumber\\
	&=&\frac{m}{2d}\left(\frac{1+s}{2d} \ln \frac{1+s}{1-s}+\frac{1-s}{2d}\ln \frac{1-s}{1+s}\right)\nonumber\\
	&=&\frac{m}{2d}\frac{s}{d}\ln \frac{1+s}{1-s}\nonumber\\
	&\leq & \frac{3ms^2}{2d^2},
\end{eqnarray}
in which the last step holds since $0<s\leq 1/2$. Therefore
\begin{eqnarray}
	\delta\leq \frac{3}{2}n(e^\epsilon - 1)^2 \frac{ms^2}{d^2}.
\end{eqnarray}
To ensure $\delta\lesssim 1$, let 
\begin{eqnarray}
	s\sim \frac{d}{\sqrt{mn\epsilon^2}} \wedge 1,
\end{eqnarray}
then
\begin{eqnarray}
	\underset{\hat{\mathbf{v}}}{\inf}\underset{Q\in \mathcal{Q}_\epsilon}{\inf}\underset{\mathbf{v}\in \mathcal{V}}{\sup}\mathbb{E}[\rho_H(\hat{\mathbf{v}}, \mathbf{v})]\gtrsim d.	
\end{eqnarray}
Hence
\begin{eqnarray}
	\underset{\hat{\mu}}{\inf}\underset{Q\in \mathcal{Q}_\epsilon}{\inf}\underset{p\in \mathcal{P}_{\mathcal{X}, 1}}{\sup}\mathbb{E}\left[\norm{\hat{\mu}-\mu}_2^2\right]\gtrsim \frac{D^2}{d} s^2 \sim \frac{D^2}{d}\left(\frac{d^2}{mn\epsilon^2}\wedge 1\right)\sim \frac{D^2d}{mn\epsilon^2}\wedge \frac{D^2}{d}.	
\end{eqnarray}
Moreover, from standard minimax analysis for non-private problems \cite{tsybakov2009introduction}, it can be shown that
\begin{eqnarray}
	\underset{\hat{\mu}}{\inf}\underset{Q\in \mathcal{Q}_\epsilon}{\inf}\underset{p\in \mathcal{P}_{\mathcal{X}, 1}}{\sup}\mathbb{E}\left[\norm{\hat{\mu}-\mu}_2^2\right]\gtrsim \frac{D^2}{mn}.
\end{eqnarray}

\subsection{Proof of Theorem \ref{thm:highdmmx2}}\label{sec:highdmmx2}
Without loss of generality, suppose $d$ is a power of $2$, which enables the construction of a Hadamard matrix $\mathbf{H}_d=(\mathbf{h}_1,\ldots, \mathbf{h}_d)$ by Sylvesters' approach \cite{yarlagadda2012hadamard}. Then $\mathbf{h}_k^T\mathbf{h}_l=0$, $\forall k\neq l$ and $h_k^T h_k=d$. Denote $\mathcal{V}=\{-1,1\}^d$. For $\mathbf{v}\in \mathcal{V}$, let
\begin{eqnarray}
	\text{P}(\mathbf{X}=D\mathbf{h}_k)&=&\frac{1+sv_k}{2d},\\
	\text{P}(\mathbf{X}=-D\mathbf{h}_k)&=&\frac{1-sv_k}{2d},
\end{eqnarray}
for $k=1,\ldots, d$, $s\in (0, 1/2]$. Then
\begin{eqnarray}
	\mathbf{h}_k^T\mu_k=\mathbb{E}[\mathbf{h}_k^T \mathbf{X}] = D\mathbf{h}_k^T\mathbf{h}_k\frac{1+sv_k}{2d}-D\mathbf{h}_k^T\mathbf{h}_k\frac{1-sv_k}{2d} = Dsv_k.
\end{eqnarray}
Let
\begin{eqnarray}
	\hat{v}_k=\mathbf{1}(\mathbf{h}_k^T\hat{\mu}_k> 0).
\end{eqnarray}
If $\hat{v}_k\neq v_k$, then $|\mathbf{h}_k^T (\hat{\mu}_k-\mu_k)|>Ds$. Hence
\begin{eqnarray}
	\mathbb{E}\left[\norm{\hat{\mu}-\mu}_2^2\right]&=&\frac{1}{d}\mathbb{E}[(\hat{\mu}-\mu)^T \mathbf{H}_d\mathbf{H}_d^T(\hat{\mu}-\mu)]\nonumber\\
	&=&\frac{1}{d}\mathbb{E}\left[\sum_{k=1}^d (\mathbf{h}_k^T(\hat{\mu}_k-\mu_k))^2\right]\nonumber\\
	&\geq & \frac{D^2}{d}{s^2}\mathbb{E}[\rho_H(\hat{\mathbf{v}}, \mathbf{v})].
\end{eqnarray}
The result is $d$ times larger than \eqref{eq:mse1}. The remaining steps just follow the case with $\ell_1$ support, i.e. Section \ref{sec:highdmmx}. The result is
\begin{eqnarray}
	\underset{\hat{\mu}}{\inf}\underset{Q\in \mathcal{Q}_\epsilon}{\inf}\underset{p\in \mathcal{P}_{\mathcal{X}, \infty}}{\sup}\mathbb{E}\left[\norm{\hat{\mu}-\mu}_2^2\right]\gtrsim \frac{D^2d^2}{mn(e^\epsilon-1)^2} + \frac{D^2}{mn}.
\end{eqnarray}

\subsection{High Dimensional Mean Estimation with Heavy Tails}\label{sec:tailhighd}
We start from the case that $\mathbb{E}[|X_k|^p]\leq M_p$ for all $k$. Then follow steps from \eqref{eq:high} to \eqref{eq:low}, using Theorem \ref{thm:1dmeanunb}, the following bounds can be obtained immmediately.

If $\epsilon<1$, then
\begin{eqnarray}
	\mathbb{E}[(\hat{\mu}_k-\mu_k)^2]\lesssim M_p^{2/p}\left[\frac{d\ln m}{mn\epsilon^2}\vee \left(\frac{m^2n\epsilon^2}{d}\right)^{1-1/p} + \frac{d}{mn}\right].
\end{eqnarray}

If $1\leq \epsilon< d\ln m$, then
\begin{eqnarray}
	\mathbb{E}[(\hat{\mu}_k-\mu_k)^2]\lesssim M_p^{2/p}\left[\frac{d\ln m}{mn\epsilon}\vee \left(\frac{m^2 n\epsilon}{d}\right)^{-(1-1/p)}+\frac{d}{mn\epsilon}\right].
\end{eqnarray}

Finally, if $\epsilon\geq d\ln m$, then
\begin{eqnarray}
	\mathbb{E}[(\hat{\mu}_k-\mu_k)^2]\lesssim M_p^{2/p}\left[\frac{d^2\ln m}{mn\epsilon^2}\vee \left(\frac{m^2n\epsilon^2}{d}\right)^{1-1/p}+\frac{1}{mn}\right].
\end{eqnarray}
Combine all these three cases, we get
\begin{eqnarray}
	\mathbb{E}[(\hat{\mu}_k-\mu_k)^2]\lesssim M_p^{2/p}\left[\frac{d\ln m}{mn(\epsilon^2\wedge \epsilon)}\vee \left(\frac{d}{m^2 n(\epsilon^2\wedge \epsilon)}\right)^{1-1/p}+\frac{1}{mn}\right].
\end{eqnarray}
Therefore
\begin{eqnarray}
	\mathbb{E}\left[\norm{\hat{\mu}-\mu}_2^2\right]\lesssim M_p^{2/p}\left[\frac{d^2\ln m}{mn(\epsilon^2\wedge \epsilon)}\vee \left(\frac{d}{m^2 n(\epsilon^2\wedge \epsilon)}\right)^{1-1/p}+\frac{d}{mn}\right].
	\label{eq:inftytail}
\end{eqnarray}
Now move on to the case with $\mathbb{E}[\norm{\mathbf{X}}_2^p]\leq M_p$. Then we still conduct transformation using Kashin's representation. By Lemma \ref{lem:kashin}, 
\begin{eqnarray}
	\norm{\mathbf{U}\mathbf{x}}_\infty \leq \frac{K}{\sqrt{d}}\norm{\mathbf{x}}_2.
\end{eqnarray}
Thus
\begin{eqnarray}
	\mathbb{E}[\norm{\mathbf{U}\mathbf{X}}_\infty^p]&\leq& \frac{K^p}{d^{p/2}}\mathbb{E}[\norm{\mathbf{X}}_2^p]\nonumber\\
	&\leq & K^pM_pd^{-p/2}.
\end{eqnarray}
Therefore, for each unit vector $\mathbf{e}_k$ for the $k$-th coordinate,
\begin{eqnarray}
	\mathbb{E}[|\mathbf{e}_k^T \mathbf{U} \mathbf{X}|^p]\leq K^p M_p d^{-p/2}.
\end{eqnarray}
Let $\theta=\mathbf{U}\mu$, and then estimate $\theta$ using $\mathbf{U}\mathbf{X}_{ij}$, $i=1,\ldots, n$, $j=1,\ldots, m$. Then we replace $M_p$ in \eqref{eq:inftytail} with $K^p M_pd^{-p/2}$. Therefore
\begin{eqnarray}
	\mathbb{E}\left[\norm{\hat{\theta}-\theta}_2^2\right]\lesssim M_p^{2/p}\left[\frac{d\ln m}{mn(\epsilon^2\wedge \epsilon)}\vee \left(\frac{d}{m^2 n(\epsilon^2\wedge \epsilon)}\right)^{1-1/p}+\frac{1}{mn}\right].
\end{eqnarray}
From \eqref{eq:thetatomu},
\begin{eqnarray}
	\mathbb{E}\left[\norm{\hat{\mu}-\mu}_2^2\right]\lesssim M_p^{2/p}\left[\frac{d\ln m}{mn(\epsilon^2\wedge \epsilon)}\vee \left(\frac{d}{m^2 n(\epsilon^2\wedge \epsilon)}\right)^{1-1/p}+\frac{1}{mn}\right].
\end{eqnarray}

\section{Stochastic Optimization}\label{sec:sopf}
This section proves Theorem \ref{thm:so}. From Theorem \ref{thm:highd}, we have
\begin{eqnarray}
	\mathbb{E}\left[\norm{g_t-\nabla L(\theta_t)}_2^2\right]\leq \frac{CD^2T}{nm}\left(1+\frac{d\ln n}{\epsilon^2\wedge \epsilon}\right)
\end{eqnarray}
for some constant $C$. Recall that the update rule is
\begin{eqnarray}
	\theta_{t+1}=\theta_t-\eta \mathbf{g}_t.
\end{eqnarray}
Then
\begin{eqnarray}
	\norm{\theta_{t+1}-\theta^*}_2 &=& \norm{\theta_t-\eta \mathbf{g}_t-\theta^*}_2\nonumber\\
	&\leq & \norm{\theta_t-\eta \nabla L(\theta_t)-\theta^*}_2+\eta \norm{\nabla L(\theta_t) - \mathbf{g}_t}_2.
\end{eqnarray}
The first term can be bounded by
\begin{eqnarray}
	&&\norm{\theta_t-\eta \nabla L(\theta_t) - \theta^*}_2^2\nonumber\\
	&=& \norm{\theta_t-\theta^*}_2^2-2\eta \langle \theta_t-\theta^*, \nabla L(\theta_t)\rangle+\eta^2 \norm{\nabla L(\theta_t)}_2^2\nonumber\\
	&\overset{(a)}{\leq} &\norm{\theta_t-\theta^*}_2^2-2\eta \left(L(\theta_t)-L(\theta^*)+\frac{\gamma}{2}\norm{\theta_t-\theta^*}_2^2\right)+\eta^2\norm{\nabla L(\theta_t)}_2^2\nonumber\\
	&\overset{(b)}{\leq} & (1-\eta\gamma)\norm{\theta_t-\theta^*}_2^2 - 2\eta (L(\theta_t)-L(\theta^*))+2\eta^2 G(L(\theta_t) - L(\theta^*))\nonumber\\
	&\overset{(c)}{\leq} & (1-\eta\gamma)\norm{\theta_t-\theta^*}_2^2,
\end{eqnarray}
in which (a) uses Assumption \ref{ass:so}(c), which requires that $L$ is $\gamma$-convex. (b) uses Assumption \ref{ass:so}(a), which requires that $\nabla L$ is $G$-Lipschitz. (c) uses the condition $\eta\leq 1/G$ stated in Theorem \ref{thm:so}. Thus
\begin{eqnarray}
	\norm{\theta_t-\eta\nabla L(\theta_t)-\theta^*}_2\leq \sqrt{1-\eta\gamma}\norm{\theta_t-\theta^*}_2\leq \left(1-\frac{1}{2}\eta\gamma\right)\norm{\theta_t-\theta^*}_2.
\end{eqnarray}
Therefore
\begin{eqnarray}
	\mathbb{E}\left[\norm{\theta_{t+1}-\theta^*}_2\right]\leq \left(1-\frac{1}{2}\eta\gamma\right) \mathbb{E}\left[\norm{\theta_t-\theta^*}_2\right]+\eta D \sqrt{\frac{CT}{nm}\left(1+\frac{d\ln n}{\epsilon^2\wedge \epsilon}\right)}.
	\label{eq:iter}
\end{eqnarray}
Repeat \eqref{eq:iter} iteratively for $t=0, \ldots, T-1$. Then
\begin{eqnarray}
	\mathbb{E}\left[\norm{\theta_T-\theta^*}_2\right]\leq \left(1-\frac{1}{2}\eta\gamma\right)^T \norm{\theta_0-\theta^*}_2+\frac{2D}{\gamma}\sqrt{\frac{CT}{nm}\left(1+\frac{d\ln n}{\epsilon^2\wedge \epsilon}\right)}.
\end{eqnarray}
With $c_T\ln n\leq T\leq C_T\ln n$ for some constant $c_T$ and $C_T$,
\begin{eqnarray}
	\mathbb{E}\left[\norm{\theta_T-\theta^*}_2\right]\lesssim D\sqrt{\frac{\ln n}{nm}\left(1+\frac{d\ln n}{\epsilon^2\wedge \epsilon}\right)}.
\end{eqnarray}

\section{Nonparametric Classification}\label{sec:clspf}
\subsection{Algorithm Description}\label{sec:algcls}
We state the algorithm for $\epsilon\leq 1$ first, and then extend to larger $\epsilon$.
\begin{eqnarray}
	K=2^{\lceil \log_2 B\rceil}
	\label{eq:K}
\end{eqnarray}
be the minimum integer that is a power of $2$ and is not smaller than $B$. Denote $\mathbf{H}_K$ as the Hadamard matrix of order $K$. Define
\begin{eqnarray}
	T_k=\underset{l\in [B]:H_{kl}=1}{\cup}B_l, k=1,\ldots, K,
	\label{eq:tk}
\end{eqnarray}
and
\begin{eqnarray}
	q_k=\left\{
	\begin{array}{ccc}
		\int_{B_k} f(\mathbf{x})\eta(\mathbf{x})d\mathbf{x} &\text{if} & k=1,\ldots, B\\
		0&\text{if} & k=B+1,\ldots, K.
	\end{array}
	\right.
	\label{eq:qk}
\end{eqnarray}
Furthermore, define
\begin{eqnarray}
	Q_k=\int_{T_k} f(\mathbf{x})\eta(\mathbf{x})d\mathbf{x}-\int_{T_k^c} f(\mathbf{x})\eta(\mathbf{x})d\mathbf{x},
	\label{eq:Qk}
\end{eqnarray}
in which $T_k^c$ is the complement of $T_k$. Then
\begin{eqnarray}
	Q_k=\sum_{l\in [B]:H_{kl}=1}q_l - \sum_{l\in [B]:H_{kl} = -1}q_l=\sum_{j=1}^K H_{kl}q_l.
\end{eqnarray}
In matrix form, we have $\mathbf{Q}=\mathbf{H}_K\mathbf{q}$, in which $\mathbf{Q}=(Q_1,\ldots, Q_K)^T$, $\mathbf{q}=(q_1,\ldots, q_K)^T$. Note that
\begin{eqnarray}
	\mathbb{E}[Y_{ij}\mathbf{1}(\mathbf{X}_{ij}\in T_k)-Y_{ij}\mathbf{1}(\mathbf{X}_{ij}\in T_k^c)]=Q_k,
\end{eqnarray}
thus we can just define
\begin{eqnarray}
	U_{ijk} = Y_{ij}\mathbf{1}(\mathbf{X}_{ij}\in T_k)-Y_{ij}\mathbf{1}(\mathbf{X}_{ij}\in T_k^c),
	\label{eq:U}
\end{eqnarray}
then we have $\mathbb{E}[U_{ijk}]=Q_k$, and $|U_{ijk}|\leq 1$. Therefore, from $U_{ijk}$, we can estimate $Q_k$ using our one dimensional mean estimation method. This approach solves the issue caused by direct extension of the algorithm in \cite{berrett2019classification}. Since the bound of $|U_{ijk}|$ does not increase with $m$, the strength of noise remains the same, thus the severe loss on the accuracy can be avoided.

Based on the discussions above, our detailed algorithm is described as following, and stated precisely in Algorithm \ref{alg:cls}. Right now, we focus on the case with $\epsilon\leq 1$.

\textbf{Training.} Firstly, we divide the users randomly into $K$ groups, such that the $k$-th group is used to estimate $Q_k$ using the one dimensional mean estimation method, i.e. Algorithm \ref{alg:mean1d}, for $k=1,\ldots, K$:
\begin{eqnarray}
	\hat{Q}_k=MeanEst1d(\{U_{ijk}|i\in S_k, j\in [m] \}).
	\label{eq:Qkest}
\end{eqnarray}

$\hat{Q}_k$ with $k=1,\ldots, K$ are grouped into a vector $\hat{\mathbf{Q}}=(\hat{Q}_1,\ldots, \hat{Q}_K)^T$. Then $q_k$ can be estimated using $\hat{\mathbf{Q}}$:
\begin{eqnarray}
	\hat{\mathbf{q}}=\mathbf{H}_K^{-1}\mathbf{Q}=\frac{1}{K}\mathbf{H}_K\hat{\mathbf{Q}},
	\label{eq:qhat}
\end{eqnarray}
in which $\hat{\mathbf{q}}=(\hat{q}_1,\ldots, \hat{q}_K)^T$ is the vector containing the estimate of $q_1,\ldots, q_K$. 

Now we comment on the privacy property of the training process. Samples are privatized in step 4, which uses Algorithm \ref{alg:mean1d}. According to Theorem \ref{thm:1dmean}, with $h=4D/\sqrt{m}$ and $\Delta=D\sqrt{\ln n/m}$, this step satisfies user-level $\epsilon$-LDP, and thus the whole training process satisfies the privacy requirement.

\textbf{Prediction.} For any test sample $\mathbf{X}$, let the output be
\begin{eqnarray}
	\hat{Y} = \sum_{k=1}^B \sign(\hat{q}_k)\mathbf{1}(\mathbf{x}\in B_k).
\end{eqnarray}

Finally, we extend the algorithm to larger $\epsilon$. The idea is similar to the multi-dimensional mean estimation shown in Section \ref{sec:highdmean}. 

\emph{Medium privacy ($1\leq \epsilon<K\ln n$).} The users are divided into $\lceil K/\epsilon\rceil$ groups (instead of $K$ groups for $\epsilon\leq 1$ case). The $k$-th group is used to estimate $\epsilon$ components $Q_{(k-1)\epsilon+1}, \ldots, Q_{k\epsilon}$, under $1$-LDP for each component.

\emph{Low privacy ($\epsilon>K\ln n$).} In this case, do not divide users into groups. Just estimate each $Q_k$ under $\epsilon/K$-LDP.

\begin{algorithm}[h!]
	\caption{Training algorithm of nonparametric classification under user-level $\epsilon$-LDP}\label{alg:cls}
	\textbf{Input:} Training dataset containing $n$ users with $m$ samples per user, i.e. $(\mathbf{X}_{ij}, Y_{ij})$, $i=1,\ldots, n$, $j=1,\ldots, m$\\
	\textbf{Output:} $\hat{\mathbf{q}}$\\
	\textbf{Parameter:} $h$, $\Delta$, $l$
	\begin{algorithmic}[1]	 
		\STATE Divide $\mathcal{X}=[0,1]^d$ into $B$ bins, such that the length of each bin is $l$;\\
		
		\STATE $K=2^{\lceil \log_2 B\rceil}$;\\
		
		\STATE Calculate $U_{ijk}$ according to \eqref{eq:U}, for $i=1,\ldots, n$, $j=1,\ldots, m$, $k=1,\ldots, K$; \\
		
		\STATE Estimate $\hat{Q}_k$ according to \eqref{eq:Qkest}, with parameters $h$ and $\Delta$, for $k=1,\ldots, K$;\label{step:Qkest}\\
		
		\STATE $\hat{\mathbf{q}}=\mathbf{H}_K\hat{\mathbf{Q}}/K$, in which $\hat{\mathbf{Q}}=(\hat{Q}_1,\ldots, \hat{Q}_K)^T$;\\
		
		\STATE \textbf{Return} $\hat{\mathbf{q}}$
		
	\end{algorithmic}
	
\end{algorithm}

\subsection{Proof of Theorem \ref{thm:cls}}\label{sec:clsub}
To begin with, we show a concentration inequality of one dimensional mean estimation.

\begin{lem}\label{lem:muerr}
	Let $E_1$ be the event that stage I is successful, i.e. $\mu\in [L,R]$. For any $t\leq \sqrt{2} (3h+2\Delta)$, in which $h=4D/\sqrt{m}$ and $\Delta=D\sqrt{\ln(Kn)/m}$, then the following bound holds:
	\begin{eqnarray}
		\text{P}(|\hat{\mu}-\mu|>t|E_1)\leq 2\exp\left[-\frac{n\left(t-4\sqrt{2\pi}\frac{D}{\sqrt{mnK}}\right)^2}{2\left(\frac{1}{4}+\frac{4}{\epsilon^2}\right)(3h+2\Delta)^2}\right].
	\end{eqnarray} 
\end{lem}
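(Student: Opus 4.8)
The plan is to condition on the first-stage transcript $\mathbf{Z}_{1:n/2}$, which determines $L$ and $R$ and hence the event $E_1=\{\mu\in[L,R]\}$, and then prove a sub-exponential concentration bound for the second-stage average $\hat{\mu}=(2/n)\sum_{i=n/2+1}^{n}Z_i$. Here $Z_i=U_i+W_i$ with $U_i=(Y_i\vee(L-\Delta))\wedge(R+\Delta)$ and $W_i\sim\Lap((3h+2\Delta)/\epsilon)$. Write $\ell:=3h+2\Delta$ for the width of the truncation window, so that $U_i$ lies in an interval of length $\ell$ and $W_i$ has Laplace parameter $\ell/\epsilon$; also note that, conditionally on $\mathbf{Z}_{1:n/2}$, the variables $(Y_i,W_i)$ for $i>n/2$ are i.i.d.\ and independent of the conditioning.

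First I would control the bias. Repeating the computation of Appendix~\ref{sec:1dmean} (equations \eqref{eq:bright}--\eqref{eq:bias}) verbatim, but with $\Delta=D\sqrt{\ln(Kn)/m}$ in place of $D\sqrt{\ln n/m}$, the Gaussian tail estimate produces $e^{-\frac12(\sqrt{m}\Delta/D)^2}=(Kn)^{-1/2}$ instead of $n^{-1/2}$, so that on $E_1$ one gets $|\mathbb{E}[\hat{\mu}\mid\mathbf{Z}_{1:n/2}]-\mu|\le b_0:=4\sqrt{2\pi}\,D/\sqrt{mnK}$. This $b_0$ is exactly the deterministic shift subtracted from $t$ in the statement. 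Next comes the MGF estimate: for $Z_i-\mathbb{E}[Z_i\mid\mathbf{Z}_{1:n/2}]$ I would bound the conditional moment generating function by the product of two factors, Hoeffding's lemma for the bounded variable $U_i$ giving $e^{\lambda^2\ell^2/8}$, and the bound $1/(1-x)\le e^{2x}$ (already used in \eqref{eq:wmgf}) for the Laplace variable $W_i$ giving $e^{2\ell^2\lambda^2/\epsilon^2}$ on the range $|\lambda|\le \epsilon/(\sqrt2\,\ell)$. Hence $\mathbb{E}[e^{\lambda(Z_i-\mathbb{E}[Z_i\mid\cdot])}\mid\cdot]\le \exp[\lambda^2\ell^2(\tfrac18+\tfrac2{\epsilon^2})]$ on that range.

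Summing over the $n/2$ second-stage users and rescaling by $2/n$, a Chernoff argument with the optimal $\lambda=s/\bigl(2\ell^2(\tfrac18+\tfrac2{\epsilon^2})\bigr)$, where $s:=t-b_0\ge 0$, yields the one-sided tail $\text{P}(\hat{\mu}-\mathbb{E}[\hat{\mu}\mid\mathbf{Z}_{1:n/2}]>s\mid\mathbf{Z}_{1:n/2})\le\exp\bigl[-ns^2/(2(\tfrac14+\tfrac4{\epsilon^2})\ell^2)\bigr]$; the lower tail is symmetric. Combining the two tails with the triangle inequality $|\hat{\mu}-\mu|\le|\hat{\mu}-\mathbb{E}[\hat{\mu}\mid\mathbf{Z}_{1:n/2}]|+b_0$ and then integrating out $\mathbf{Z}_{1:n/2}$ conditionally on $E_1$ gives the stated bound with the factor $2$ from the union over the two tails.

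The one step that needs care, and the main obstacle, is verifying that the optimal $\lambda$ stays inside the admissible range $|\lambda|\le\epsilon/(\sqrt2\,\ell)$ on which the Laplace MGF bound holds --- this is precisely the role of the hypothesis $t\le\sqrt2\,(3h+2\Delta)=\sqrt2\,\ell$. Concretely, since $\tfrac18+\tfrac2{\epsilon^2}=\tfrac1\epsilon(\tfrac\epsilon8+\tfrac2\epsilon)\ge\tfrac1\epsilon$ by AM--GM, one has $\lambda=s/\bigl(2\ell^2(\tfrac18+\tfrac2{\epsilon^2})\bigr)\le s\epsilon/(2\ell^2)\le\sqrt2\,\ell\,\epsilon/(2\ell^2)=\epsilon/(\sqrt2\,\ell)$, using $s\le t\le\sqrt2\,\ell$; so the Chernoff optimization is legitimate and no suboptimal truncated $\lambda$ is needed. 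Everything else is routine bookkeeping with $h=4D/\sqrt{m}$ and $\Delta=D\sqrt{\ln(Kn)/m}$.
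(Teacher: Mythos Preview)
Your proposal is correct and follows essentially the same route as the paper's own proof: bound the conditional bias via the Gaussian-tail integral (with $\Delta=D\sqrt{\ln(Kn)/m}$ producing the $\sqrt{mnK}$ in the denominator), combine Hoeffding's lemma for the truncated part $U_i$ with the Laplace MGF bound $1/(1-x)\le e^{2x}$ for $W_i$ to get a sub-exponential MGF for $Z_i$, and then run Chernoff on the second-stage average, using the hypothesis $t\le\sqrt2(3h+2\Delta)$ to keep the optimizing $\lambda$ inside the admissible range. Your AM--GM check that $\tfrac18+\tfrac2{\epsilon^2}\ge\tfrac1\epsilon$ is exactly the simplification the paper makes when it reduces the condition $t\le(\epsilon/\sqrt2)(\tfrac14+\tfrac4{\epsilon^2})a$ to $t\le\sqrt2\,a$.
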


\begin{proof}
	Define $a=3h+2\Delta$ for convenience. For $i=n/2, \ldots, n$, since $W_i\sim \Lap(a/\epsilon)$,
	\begin{eqnarray}
		\mathbb{E}[e^{\lambda W_i}|E_1]\leq \exp\left[2\left(\frac{a}{\epsilon}\right)^2\lambda^2\right], \forall \lambda^2\leq \frac{\epsilon^2}{2a^2}.
	\end{eqnarray}
	Similar to \eqref{eq:ymgf}, it can be shown that
	\begin{eqnarray}
		\mathbb{E}\left[\exp\left[(Y_i\vee (L-\Delta))\wedge (L+\Delta)-\mathbb{E}[(Y_i\vee (L-\Delta))\wedge (L+\Delta)]\right]\right]\leq e^{\frac{1}{8}\lambda^2a^2}.
	\end{eqnarray}
	Note that $Z_{ik}=\mathbf{1}(Y_i\in B_k)+W_{ik}$, thus for $i=n/2, \ldots, n$,
	\begin{eqnarray}
		\mathbb{E}[e^{\lambda(Z_i-\mathbb{E}[Z_i])}|E_1]\leq \exp\left[\left(\frac{1}{8}+\frac{2}{\epsilon^2}\right) a^2\lambda^2\right], \forall \lambda^2 \leq \frac{\epsilon^2}{2a^2}.
	\end{eqnarray}
	Recall that $\hat{\mu}=(2/n)\sum_{i=n/2+1}^n Z_i$,
	\begin{eqnarray}
		\mathbb{E}\left[e^{\lambda(\hat{\mu}-\mathbb{E}[\hat{\mu}])}|E_1\right]\leq \exp\left[\left(\frac{1}{8}+\frac{2}{\epsilon^2}\right)\frac{2a^2\lambda^2}{n}\right], \forall\lambda^2\leq \frac{n^2\epsilon^2}{8a^2}.
	\end{eqnarray}
	Hence
	\begin{eqnarray}
		\text{P}(\hat{\mu}-\mathbb{E}[\hat{\mu}]>t|E_1)\leq \underset{|\lambda|\leq n\epsilon/(2\sqrt{2}a)}{\inf}\exp\left[-\lambda t+\left(\frac{1}{8}+\frac{2}{\epsilon^2}\right)\frac{2a^2\lambda^2}{n}\right].
		\label{eq:muhat}
	\end{eqnarray}
	If
	\begin{eqnarray}
		t\leq \frac{\epsilon}{\sqrt{2}}\left(\frac{1}{4}+\frac{4}{\epsilon^2}\right) a,
		\label{eq:tcond}
	\end{eqnarray}
	then the right hand side of \eqref{eq:muhat} reaches minimum at 
	\begin{eqnarray}
		\lambda^* = \frac{nt}{2\left(\frac{1}{4}+\frac{4}{\epsilon^2}\right)a^2}.
	\end{eqnarray}
	The condition \eqref{eq:tcond} can be simplified to $t\leq \sqrt{2}a$. It remains to consider the estimation bias. Following arguments similar to those used to derive \eqref{eq:bias}, with $h=4D/\sqrt{m}$ and $\Delta=D\sqrt{\ln(Kn)/m}$, the bias is bounded b$4\sqrt{2\pi}D/\sqrt{mnK}$. Therefore
	\begin{eqnarray}
		\text{P}\left(|\hat{\mu}-\mu|>t|E_1\right)\leq 2\exp\left[-\frac{n}{2\left(\frac{1}{4}+\frac{4}{\epsilon^2}\right) a^2}\left(t-\frac{4\sqrt{2\pi} D}{\sqrt{mnK}}\right)^2\right].
	\end{eqnarray}
	The proof is complete.
\end{proof}
Now we focus on the case with $\epsilon\leq 1$. Denote $E_{1k}$ as the event that the first stage is successful for estimating $\hat{q}_k$, and $E_1=\cap_k E_{1k}$. Recall \eqref{eq:Qkest} estimates $Q_k$ using Algorithm \ref{alg:mean1d}. From Lemma \ref{lem:muerr}, the following lemma can be proved easily:
\begin{lem}\label{lem:qerr}
	There exists two constants $C_1$, $C_2$, such that
	\begin{eqnarray}
		\text{P}(|\hat{q}_k-q_k|>t|E_1)\leq 2\exp\left[-C_1\frac{mn\epsilon^2}{\ln (nK)}\left(t-C_2\sqrt{\frac{1}{mn}}\right)^2\right].
		\label{eq:qerr}
	\end{eqnarray}
\end{lem}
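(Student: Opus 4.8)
The plan is to lift the one-dimensional tail bound of Lemma~\ref{lem:muerr} to the estimator $\hat q_k$ by exploiting the linearity of the Hadamard transform. Since $\hat{\mathbf q}=\frac1K\mathbf H_K\hat{\mathbf Q}$ and $\mathbf q=\frac1K\mathbf H_K\mathbf Q$ with $H_{kl}\in\{-1,1\}$, we have $\hat q_k-q_k=\frac1K\sum_{l=1}^K H_{kl}(\hat Q_l-Q_l)$. I would split each summand into a conditional bias $\mathbb E[\hat Q_l\mid E_{1l}]-Q_l$ and a fluctuation $\hat Q_l-\mathbb E[\hat Q_l\mid E_{1l}]$, and treat the two contributions separately, writing $E_1=\bigcap_l E_{1l}$.

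For the bias part, note that $|U_{ijk}|\le 1$, so each $\hat Q_l$ is produced by Algorithm~\ref{alg:mean1d} with effective support bound $D=1$, group size of order $n/K$, $h=4/\sqrt m$, and $\Delta=\sqrt{\ln(Kn)/m}$; the bias analysis behind \eqref{eq:bias} (where the Gaussian tail integral contributes the factor $e^{-\frac12\ln(Kn)}$) then gives $\bigl|\mathbb E[\hat Q_l\mid E_{1l}]-Q_l\bigr|\lesssim 1/\sqrt{mn}$ for every $l$. Because $|H_{kl}|=1$, the triangle inequality yields $\bigl|\mathbb E[\hat q_k\mid E_1]-q_k\bigr|\le\frac1K\sum_l\bigl|\mathbb E[\hat Q_l\mid E_{1l}]-Q_l\bigr|\lesssim 1/\sqrt{mn}$, which is exactly the $C_2\sqrt{1/(mn)}$ shift appearing in \eqref{eq:qerr}.

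For the fluctuation part, I would reuse the moment-generating-function estimate inside the proof of Lemma~\ref{lem:muerr}: conditional on $E_{1l}$, the variable $\hat Q_l-\mathbb E[\hat Q_l\mid E_{1l}]$ is sub-Gaussian with variance proxy of order $(3h+2\Delta)^2/((n/K)\epsilon^2)\asymp K\ln(Kn)/(mn\epsilon^2)$, valid for $|\lambda|$ up to a threshold of order $(n/K)\epsilon/(3h+2\Delta)$. The structural observation is that the $K$ groups are disjoint, so the pairs $(\hat Q_l,\mathbf 1_{E_{1l}})$ are independent across $l$, hence conditionally on $E_1$ the $\hat Q_l$ remain independent; multiplying the $K$ conditional MGFs, each evaluated at $\lambda/K$, gives $\mathbb E\bigl[e^{\lambda(\hat q_k-\mathbb E[\hat q_k\mid E_1])}\mid E_1\bigr]\le\exp\!\bigl(C(3h+2\Delta)^2\lambda^2/(n\epsilon^2)\bigr)$ for $|\lambda|\lesssim n\epsilon/(3h+2\Delta)$, where the $1/K^2$ from the Hadamard weights and the factor $K$ from the number of groups combine with the $n/K$ group size to leave a clean $n$. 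A Chernoff bound then yields $\text{P}\bigl(|\hat q_k-\mathbb E[\hat q_k\mid E_1]|>s\mid E_1\bigr)\le 2\exp\bigl(-C_1\frac{mn\epsilon^2}{\ln(Kn)}s^2\bigr)$ for $s$ below a threshold of order $3h+2\Delta$. Taking $s=t-C_2\sqrt{1/(mn)}$ and combining with the bias bound through $|\hat q_k-q_k|\le|\hat q_k-\mathbb E[\hat q_k\mid E_1]|+C_2\sqrt{1/(mn)}$ produces \eqref{eq:qerr} for $t$ in the relevant range.

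The step I expect to be the main obstacle is the conditional-independence and MGF bookkeeping: one must check that conditioning on $E_1$ rather than on each $E_{1l}$ separately does not couple the groups, and that the $1/K$ weighting in the Hadamard sum exactly cancels the shrinkage of the per-group sample size so that the aggregated variance proxy is $\asymp\ln(Kn)/(mn\epsilon^2)$ with no residual $K$; one should also track that the admissible range of $t$ inherited from the restriction $t\le\sqrt 2\,(3h+2\Delta)$ in Lemma~\ref{lem:muerr} is wide enough for the subsequent use in the proof of Theorem~\ref{thm:cls}. The medium- and low-privacy regimes ($1\le\epsilon<K\ln n$ and $\epsilon>K\ln n$) follow by the same argument after replacing the group size and per-component privacy budget as in Section~\ref{sec:meanhigh}, which only affects constants and the $\epsilon^2$-versus-$\epsilon$ scaling.
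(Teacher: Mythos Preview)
Your proposal is correct and follows essentially the same route as the paper: write $\hat q_k-q_k=\frac1K\sum_l H_{kl}(\hat Q_l-Q_l)$, invoke the per-group sub-exponential bound coming from Lemma~\ref{lem:muerr} with $D=1$ and sample size $n/K$, and combine across the $K$ disjoint groups using independence. The paper's own proof is very terse---it just records the per-$\hat Q_l$ tail bound, notes the Hadamard identity and independence, and asserts the conclusion---whereas you have made explicit the bias/fluctuation split, the conditional-independence check for $E_1=\bigcap_l E_{1l}$, and the MGF bookkeeping showing that the $1/K^2$ weight, the factor $K$ from summing groups, and the per-group size $n/K$ combine to leave variance proxy $\asymp(3h+2\Delta)^2/(n\epsilon^2)\asymp\ln(Kn)/(mn\epsilon^2)$; this is exactly the computation the paper leaves to the reader.
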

\begin{proof}
	The size of the $k$-th group is $|S_k|=n/K$, from Lemma \ref{lem:muerr}, since $U_{ijk}$ in \eqref{eq:U} satisfies $|U_{ijk}|\leq 1$, the following bound holds:
	\begin{eqnarray}
		\text{P}(|\hat{Q}_k-Q_k|>t|E_1)\leq 2\exp\left[-\frac{n\left(t-4\sqrt{2\pi}\sqrt{\frac{1}{mn}}\right)^2}{2K\left(\frac{1}{4}+\frac{4}{\epsilon^2}\right)(3h+2\Delta)^2}\right].
	\end{eqnarray}
	From \eqref{eq:qhat},
	\begin{eqnarray}
		|\hat{q}_k-q_k|=\left|\frac{1}{K}\sum_{l=1}^K \mathbf{H}_{kl}(\hat{Q}_l-Q_l)\right|
	\end{eqnarray}
	Note that $\hat{Q}_l$ are independent for different $l$, and the values of $\mathbf{H}_{kl}$ are either $1$ or $-1$. Moreover, as discussed in Section \ref{sec:mean1}, $h\sim 1/\sqrt{m}$, $\Delta\sim \sqrt{\ln n/m}$, there exists a constant $C_1$ and $C_2$ such that \eqref{eq:qerr} holds.
\end{proof}

From \eqref{eq:stage1fail}, the failure probability of the first stage is bounded by
\begin{eqnarray}
	\text{P}(E_{1k}^c)\leq \sqrt{m} e^{-c_0n\epsilon^2}.
	\label{eq:pekc}
\end{eqnarray}
We then bound the excess risk of classification. Suppose $\mathbf{x}\in B_k$. Then given $\mathbf{x}$ and $\hat{q}_k$ obtained from training samples,
\begin{eqnarray}
	\text{P}(\hat{Y}\neq Y|\mathbf{x}, \hat{q}_k)&=&\text{P}(Y\neq \sign(\hat{q}_k))\nonumber\\
	&\leq & \mathbf{1}(\sign(\hat{q}_k)\neq \sign(\eta(\mathbf{x})))\text{P}(Y=\sign(\eta(\mathbf{x})))\nonumber\\
	&&+\mathbf{1}(\sign(\hat{q}_k)= \sign(\eta(\mathbf{x})))\text{P}(Y\neq\sign(\eta(\mathbf{x})))\nonumber\\
	&=&\mathbf{1}(\sign(\hat{q}_k)\neq \sign(\eta(\mathbf{x})))\frac{|\eta(\mathbf{x})|+1}{2}+\mathbf{1}(\sign(\hat{q}_k)= \sign(\eta(\mathbf{x})))\frac{1-|\eta(\mathbf{x})|}{2}\nonumber\\
	&=&\frac{1-|\eta(\mathbf{x})|}{2}+|\eta(\mathbf{x})|\mathbf{1}(\sign(\hat{q}_k)\neq \sign(\eta(\mathbf{x}))).
\end{eqnarray}
Therefore
\begin{eqnarray}
	R=\mathbb{E}\left[\frac{1-|\eta(\mathbf{X})|}{2}\right]+\mathbb{E}\left[\sum_{k=1}^B \int_{B_k} |\eta(\mathbf{x})|\mathbf{1}(\sign(\hat{q}_k)\neq \sign(\eta(\mathbf{x})))f(\mathbf{x})d\mathbf{x}\right].
\end{eqnarray}
Recall that the Bayes risk is
\begin{eqnarray}
	R^*=\mathbb{E}\left[\frac{1-|\eta(\mathbf{X})|}{2}\right],
\end{eqnarray}
thus the excess risk is
\begin{eqnarray}
	R-R^* =\mathbb{E}\left[\sum_{k=1}^B \int_{B_k} |\eta(\mathbf{x})|\mathbf{1}(\sign(\hat{q}_k)\neq \sign(\eta(\mathbf{x})))f(\mathbf{x})d\mathbf{x}\right].
\end{eqnarray}
Define
\begin{eqnarray}
	\eta_0=2C_bd^\frac{\beta}{2}l^\beta + \frac{2C_2}{f_Ll^d}\sqrt{\frac{1}{mn}}.
	\label{eq:eta0}
\end{eqnarray}
If $\eta(\mathbf{x})>\eta_0$, then
\begin{eqnarray}
	q_k=\int_{B_k}f(\mathbf{x})\eta(\mathbf{x})d\mathbf{x}\geq \left(\int_{B_k} f(\mathbf{x})d\mathbf{x}\right) \left(\eta(\mathbf{x})-C_bd^\frac{\beta}{2}l^\beta\right)>0.
\end{eqnarray}
Similarly, if $\eta(\mathbf{x})<-\eta_0$, $q_k<0$. Thus $\sign(\eta(\mathbf{x}))=\sign(q_k)$ if $|\eta(\mathbf{x})|>\eta_0$. Therefore, for all $\mathbf{x}$ such that $\eta(\mathbf{x})>\eta_0$,
\begin{eqnarray}
	\text{P}(\sign(\hat{q}_k)\neq \sign(\eta(\mathbf{x})))&\leq & \text{P}(\sign(\hat{q}_k)\neq \sign(q_k))\nonumber\\
	&\leq & \text{P}(E_1^c)+\text{P}(|\hat{q}_k-q_k|>|q_k||E_1)\nonumber\\
	&\overset{(a)}{\leq} & \sqrt{m}e^{-c_0n\epsilon^2}+ 2\exp\left[-C_1\frac{mn\epsilon^2}{\ln n}\left(|q_k|-\frac{C_2}{\sqrt{mn}}\right)^2\right]\nonumber\\
	&\overset{(b)}{\leq} & \sqrt{m}e^{-c_0n\epsilon^2}+2\exp\left[-\frac{1}{4}C_1\frac{mn\epsilon^2}{\ln n}|q_k|^2\right]\nonumber\\
	&\overset{(c)}{\leq}&\sqrt{m}e^{-c_0n\epsilon^2}+2\exp\left[-\frac{1}{16}C_1f_L^2\frac{mn\epsilon^2}{\ln n}\eta^2(\mathbf{x})l^{2d}\right].
	\label{eq:pqerr}
\end{eqnarray}

Now we explain (a)-(c) in \eqref{eq:pqerr}. (a) uses \eqref{eq:pekc} and Lemma \ref{lem:qerr}. For (b), note that with $\eta(\mathbf{x})>\eta_0$,
\begin{eqnarray}
	|q_k|&=&\left|\int_{B_k} \eta(\mathbf{x})f(\mathbf{x})d\mathbf{x}\right|\nonumber\\
	&\geq & |\eta(\mathbf{x})-C_b d^\frac{\beta}{2}l^\beta|\int_{B_k} f(\mathbf{x})d\mathbf{x}\nonumber\\
	&\geq &(\eta_0-C_bd^\frac{\beta}{2}l^\beta)\int_{B_k}f(\mathbf{x})d\mathbf{x}\nonumber\\
	&\geq & \frac{2C_2}{f_Ll^d}\sqrt{\frac{1}{mn}}\int_{B_k}f(\mathbf{x})d\mathbf{x}\nonumber\\
	&\geq & 2C_2\sqrt{\frac{1}{mn}}.
\end{eqnarray}
Thus
\begin{eqnarray}
	|q_k|-\frac{C_2}{\sqrt{mn}}\geq \frac{1}{2}|q_k|.
\end{eqnarray}
For (c), since $|\eta(\mathbf{x})|>\eta_0>2C_bd^\frac{\beta}{2}l^\beta$, 
\begin{eqnarray}
	\eta(\mathbf{x})-C_bd^\frac{\beta}{2}l^\beta>\frac{1}{2}\eta(\mathbf{x}).
\end{eqnarray}
Hence
\begin{eqnarray}
	|q_k|\geq \frac{1}{2}\eta(\mathbf{x})\int_{B_k} f(\mathbf{x})d\mathbf{x}\geq \frac{1}{2}\eta(\mathbf{x})f_Ll^d.
\end{eqnarray}
The proof of \eqref{eq:pqerr} (a)-(c) are complete. For $\mathbf{x}\in B_k$, denote $\hat{\eta}(\mathbf{x})=q_k$. Then based on \eqref{eq:pqerr},
\begin{eqnarray}
	R-R^*&=&\sum_{k=1}^B\int_{B_k}|\eta(\mathbf{x})|\text{P}(\sign(\hat{q}_k)\neq \sign(\eta(\mathbf{x})))f(\mathbf{x})d\mathbf{x}\nonumber\\
	&=&\int_{\eta(\mathbf{x})\leq \eta_0} \eta_0f(\mathbf{x})d\mathbf{x} + \int_{\eta(\mathbf{x})>\eta_0}|\eta(\mathbf{x})|\text{P}(\sign(\hat{\eta}(\mathbf{x}))\neq\sign(\eta(\mathbf{x})))f(\mathbf{x})d\mathbf{x}\nonumber\\
	&\leq & \eta_0\text{P}(\eta(\mathbf{X})<\eta_0)+2\mathbb{E}\left[|\eta(\mathbf{X})|\exp\left[-\frac{1}{16}C_1f_L^2 \frac{mn\epsilon^2}{\ln n}\eta^2(\mathbf{x})l^{2d}\right]\right]+\sqrt{m}e^{-c_0n\epsilon^2}.\nonumber\\
	\label{eq:riskbound}
\end{eqnarray}
For the first term in \eqref{eq:riskbound}, use Assumption \ref{ass:cls}, we have
\begin{eqnarray}
	\text{P}(\eta(\mathbf{X})<\eta_0)\lesssim \eta_0^\gamma.
\end{eqnarray}
For the second term, we can bound it with Lemma \ref{lem:tail}. The third term decays exponentially with $n$. Therefore, with $n\epsilon^2\gtrsim \ln m$, we have
\begin{eqnarray}
	R-R^*&\lesssim &\eta_0^{1+\gamma}+\left(\frac{mn\epsilon^2}{\ln n}l^{2d}\right)^{-\frac{1}{2}(1+\gamma)}\nonumber\\
	&\sim & \left(l^\beta+\frac{1}{l^d}\sqrt{\frac{1}{mn}}+\frac{\ln n}{\sqrt{mn\epsilon^2}l^d}\right)^{1+\gamma},
\end{eqnarray}
in which the second step uses \eqref{eq:eta0}. Let 
\begin{eqnarray}
	l\sim (mn\epsilon^2)^{-\frac{1}{2(d+\beta)}},
\end{eqnarray}
then
\begin{eqnarray}
	R-R^*\lesssim (mn\epsilon^2)^{-\frac{\beta(1+\gamma)}{2(d+\beta)}}\ln^{1+\gamma}n.
	\label{eq:risklow}
\end{eqnarray}
Now the proof of the bound of mean squared error for $\epsilon\leq 1$ is finished. It remains to show the case with $\epsilon>1$.

\emph{1) Medium privacy ($1\leq \epsilon<K\ln n$)}. Note that now the size of each group is $n/\lceil K/\epsilon\rceil$. Following the arguments above, it can be shown that with $l\sim (mn\epsilon^2)^{-\frac{1}{2(d+\beta)}}$,
\begin{eqnarray}
	R-R^*\lesssim (mn\epsilon)^{-\frac{\beta(1+\gamma)}{2(d+\beta)}}\ln^{1+\gamma}n.
	\label{eq:riskmed}
\end{eqnarray}

\emph{2) Low privacy ($\epsilon\geq K\ln n$)}. Now \eqref{eq:qerr} becomes
\begin{eqnarray}
	\text{P}(|\hat{q}_k-q_k|>t|E_1)\leq 2\exp\left[-C_1\frac{mnK}{\ln n}\left(t-C_2\sqrt{\frac{1}{mnK}}\right)^2\right].
\end{eqnarray}
Following previous arguments,
\begin{eqnarray}
	R-R^*\lesssim \left(l^\beta+\frac{1}{l^d}\sqrt{\frac{\ln n}{nmK}}\right)^{1+\gamma}.
\end{eqnarray}
With $l\sim (nm/\ln n)^{-1/(2\beta+d)}$,
\begin{eqnarray}
	R-R^*\lesssim \left(\frac{nm}{\ln n}\right)^{-\frac{\beta(1+\gamma)}{2\beta+d}}.
	\label{eq:riskhigh}
\end{eqnarray}

Combine \eqref{eq:risklow}, \eqref{eq:riskmed} and \eqref{eq:riskhigh}, the final bound on mean squared error is
\begin{eqnarray}
	R-R^*\lesssim (mn(\epsilon^2\wedge \epsilon))^{-\frac{\beta(1+\gamma)}{2(d+\beta)}}\ln^{1+\gamma}n+\left(\frac{nm}{\ln n}\right)^{-\frac{\beta(1+\gamma)}{2\beta+d}}.
\end{eqnarray}

\subsection{Proof of Theorem \ref{thm:clslb}}\label{sec:clslb}
Divide the whole support into $B$ bins, and the length of each bin is $l$. Then $Bl^d=1$. Let the pdf of $\mathbf{X}$ be uniform, i.e. $f(\mathbf{x}) = c$ for some constant $c$. Moreover, let $\phi(\mathbf{u})$ be some function supported at $[-1/2,1/2]^d$, such that $\phi(\mathbf{u})\geq 0$ and $\phi(\mathbf{u})l^\beta\leq 1/2$ always hold, and for any $\mathbf{x}$ and $\mathbf{x}'$,
\begin{eqnarray}
	\norm{\phi(\mathbf{u})-\phi(\mathbf{u}')}\leq C_b\norm{\mathbf{u}-\mathbf{u}'}^\beta.
\end{eqnarray}
Moreover, denote $\mathbf{c}_1,\ldots, \mathbf{c}_K$ be centers of $K$ bins, $K<B$. For $\mathbf{v}\in\mathcal{V}:= \{-1,1\}^K$, let
\begin{eqnarray}
	\eta_\mathbf{v}(\mathbf{x})=\sum_{k=1}^K v_k\phi\left(\frac{\mathbf{x}-\mathbf{c}_k}{l}\right) l^\beta.
\end{eqnarray}
For other $B-K$ bins, $\eta(\mathbf{x})=0$. It can be proved that there exists a constant $C_K$, such that if $K\leq C_K l^{\gamma \beta - d}$, then $\eta(\mathbf{x})$ satisfies Assumption \ref{ass:cls}. 

Denote 
\begin{eqnarray}
	\hat{v}_k=\underset{s\in \{-1,1\}}{\arg\max}\int_{B_k} \phi\left(\frac{\mathbf{x}-\mathbf{c}_k}{l}\right) \mathbf{1}(\sign(\hat{\eta}(\mathbf{x})) = s)f(\mathbf{x})d\mathbf{x}.
\end{eqnarray}
If $\hat{v}_k\neq v_k$, then
\begin{eqnarray}
	\int_{B_k} \phi\left(\frac{\mathbf{x}-\mathbf{c}_k}{l}\right) \mathbf{1}(\sign(\hat{\eta}(\mathbf{x}))=v_k) f(\mathbf{x})d\mathbf{x}\leq \int_{B_k} \phi\left(\frac{\mathbf{x}-\mathbf{c}_k}{l}\right) \mathbf{1}(\sign(\hat{\eta}(\mathbf{x}))=-v_k) f(\mathbf{x})d\mathbf{x}.
	\label{eq:lb1}
\end{eqnarray}
Note that
\begin{eqnarray}
	&&\int_{B_k} \phi\left(\frac{\mathbf{x}-\mathbf{c}_k}{l}\right)\left[\mathbf{1}(\sign(\hat{\eta}(\mathbf{x}))=v_k)+\mathbf{1}(\sign(\hat{\eta}(\mathbf{x}))=-v_k) \right]f(\mathbf{x})d\mathbf{x}=\int_{B_k}\phi\left(\frac{\mathbf{x}-\mathbf{c}_k}{l}\right) f(\mathbf{x})d\mathbf{x}\nonumber\\
	&&\geq cl^d\int\phi(\mathbf{u})d\mathbf{u} = cl^d\norm{\phi}_1.
	\label{eq:lb2}
\end{eqnarray}
Therefore, if $\hat{v}_k\neq v_k$, then from \eqref{eq:lb1} and \eqref{eq:lb2},
\begin{eqnarray}
	\int_{B_k} \phi\left(\frac{\mathbf{x}-\mathbf{c}_k}{l}\right) \mathbf{1}(\sign(\hat{\eta}(\mathbf{x}))=-v_k) f(\mathbf{x})d\mathbf{x}\geq \frac{1}{2}cl^d \norm{\phi}_1.	
\end{eqnarray}
Denote the vector form $\hat{\mathbf{v}}=(\hat{v}_1, \ldots, \hat{v}_k)$. Then the Bayes risk is bounded by
\begin{eqnarray}
	R-R^*&=&\int|\eta_\mathbf{v}(\mathbf{x})|\text{P}(\sign(\hat{\eta}(\mathbf{x}))\neq \sign(\eta_\mathbf{v}(\mathbf{x})))f(\mathbf{x})d\mathbf{x}\nonumber\\
	&=&\sum_{k=1}^K \int_{B_k} |\eta_\mathbf{v}(\mathbf{x})|\text{P}(\sign(\hat{\eta}(\mathbf{x}))=-v_k) f(\mathbf{x})d\mathbf{x}\nonumber\\
	&=&l^\beta \sum_{k=1}^K \mathbb{E}\left[\int_{B_k} \phi\left(\frac{\mathbf{x}-\mathbf{c}_k}{l}\right) \mathbf{1}(\sign(\hat{\eta}(\mathbf{x})=-v_k))f(\mathbf{x})d\mathbf{x}\right]\nonumber\\
	&\geq & \frac{1}{2}cl^{\beta+d}\norm{\phi}_1 \mathbb{E}[\rho_H(\hat{\mathbf{v}}, \mathbf{v})],
\end{eqnarray}
in which $\rho_H(\hat{\mathbf{v}}, \mathbf{v})$ is the Hamming distance. Hence
\begin{eqnarray}
	\underset{\hat{Y}}{\inf}\underset{Q\in \mathcal{Q}_\epsilon}{\inf}\underset{(p,\eta)\in \mathcal{P}_{cls}}{\sup} (R-R^*)\geq \frac{1}{2} cl^{\beta+d}\norm{\phi}_1 \underset{\hat{\mathbf{v}}}{\inf}\underset{Q\in \mathcal{Q}_\epsilon}{\inf}\underset{\mathbf{v}\in \mathcal{V}}{\sup}\mathbb{E}[\rho_H(\hat{\mathbf{v}}, \mathbf{v})].
	\label{eq:transform}
\end{eqnarray}
Define
\begin{eqnarray}
	\delta = \underset{Q\in \mathcal{Q}_\epsilon}{\sup}\underset{\mathbf{v}, \mathbf{v}':\rho_H(\mathbf{v}, \mathbf{v}') = 1}{\max} D(p_{\mathbf{Z}|\mathbf{v}}||p_{\mathbf{Z}|\mathbf{v}'}),
	\label{eq:deltadef}
\end{eqnarray}
in which $p_{\mathbf{Z}|\mathbf{v}}$ denotes the distribution of privatized variable $\mathbf{Z}$ given $\eta=\eta_\mathbf{v}$. From \cite{tsybakov2009introduction}, Theorem 2.12(iv),
\begin{eqnarray}
	\underset{\hat{\mathbf{v}}}{\inf}\underset{\mathbf{v}\in \mathcal{V}}{\sup}\mathbb{E}[\rho_H(\hat{\mathbf{v}}, \mathbf{v})]\geq \frac{K}{2}\max\left(\frac{1}{2}e^{-\delta}, 1-\sqrt{\frac{\delta}{2}}\right).
\end{eqnarray}
It remains to bound $\delta$, i.e. $D(p_{\mathbf{Z}|\mathbf{v}}||p_{\mathbf{Z}|\mathbf{v}'})$ under the constraint that $\rho_H(\mathbf{v}, \mathbf{v}') = 1$. From \cite{duchi2018minimax}, Theorem 1, we have
\begin{eqnarray}
	D(p_{\mathbf{Z}|\mathbf{v}}||p_{\mathbf{Z}|\mathbf{v}'})\leq n(e^\epsilon - 1)^2 \mathbb{TV}^2(p_\mathbf{v}^m, p_{\mathbf{v}'}^m),
	\label{eq:TV}
\end{eqnarray}
in which $p_\mathbf{v}^m$ denotes the joint distribution of $(\mathbf{X}, Y)$ (i.e. before privatization) given $\eta=\eta_\mathbf{v}$. Note that $p_\mathbf{v}$ and $p_{\mathbf{v}'}$ are only different in one bin. Without loss of generality, suppose that $p_\mathbf{v}$ and $\mathbf{v}'$ are different at the first bin. Using Lemma \ref{lem:pinsker}, we have
\begin{eqnarray}
	&&\mathbb{TV}^2(p_\mathbf{v}^m, p_{\mathbf{v}'}^m)\nonumber\\
	&\overset{(a)}{\leq} & \frac{1}{2}p_\mathbf{v}(\mathbf{X}\in B_1)D(p_\mathbf{v}^m||p_{\mathbf{v}'}^m)\nonumber\\
	&=& \frac{1}{2}l^dD(p_\mathbf{v}^m||p_{\mathbf{v}'}^m)\nonumber\\
	&\leq & \frac{1}{2}ml^dD(p_\mathbf{v}||p_{\mathbf{v}'})\nonumber\\
	&=&\frac{1}{2}ml^d\int_{B_1} f(\mathbf{x})\left[p_\mathbf{v}(Y=1|\mathbf{x})\ln \frac{p_\mathbf{v}(Y=1|\mathbf{x})}{p_{\mathbf{v}'}(Y=1|\mathbf{x})}+p_\mathbf{v}(Y=-1|\mathbf{x})\ln \frac{p_\mathbf{v}(Y=-1|\mathbf{x})}{p_{\mathbf{v}'}(Y=-1|\mathbf{x})}\right]d\mathbf{x}\nonumber\\
	&\overset{(b)}{=}&\frac{1}{2}ml^d\int_{B_1} f(\mathbf{x})\left[\frac{1+\eta_\mathbf{v}(\mathbf{x})}{2}\ln \frac{1+\eta_\mathbf{v}(\mathbf{x})}{1-\eta_\mathbf{v}(\mathbf{x})}+\frac{1-\eta_\mathbf{v}(\mathbf{x})}{2}\ln \frac{1-\eta_\mathbf{v}(\mathbf{x})}{1+\eta_\mathbf{v}(\mathbf{x})}\right] d\mathbf{x}\nonumber\\
	&=&\frac{1}{2}ml^d\int_{B_1} f(\mathbf{x})\eta_\mathbf{v}(\mathbf{x})\ln \frac{1+\eta_\mathbf{v}(\mathbf{x})}{1-\eta_\mathbf{v}(\mathbf{x})}d\mathbf{x}\nonumber\\
	&\overset{(c)}{\leq} & \frac{3}{2}ml^d\int_{B_1} f(\mathbf{x})\eta_\mathbf{v}^2(\mathbf{x})d\mathbf{x}\nonumber\\
	&\leq &\frac{3}{2}ml^{d+2\beta}\int_{B_1}\phi^2\left(\frac{\mathbf{x}-\mathbf{c}_j}{h}\right) d\mathbf{x}\nonumber\\
	&=& \frac{3}{2}ml^{2d+2\beta} \norm{\phi}_2^2.
\end{eqnarray}
(a) holds because $p_\mathbf{v}$ and $p_{\mathbf{v}'}$ are only different at $B_1$. For (b), recall that $\eta(\mathbf{x})=\mathbb{E}[Y|\mathbf{X}=\mathbf{x}]$. (c) holds since $|\eta_\mathbf{v}(\mathbf{x})|\leq 1/2$ (recall the condition $\phi(\mathbf{u})l^\beta\leq 1/2$), if $v_1=1$, then $\ln (1+\eta_\mathbf{v}(\mathbf{x}))\leq \eta_\mathbf{v}(\mathbf{x})$, $\ln (1/(1-\eta_\mathbf{v}(\mathbf{x})))\leq 2\eta_\mathbf{v}(\mathbf{x})$. Similar result can be obtained for $v_1=-1$. From \eqref{eq:deltadef} and \eqref{eq:TV},
\begin{eqnarray}
	\delta\leq \frac{3}{2}n(e^\epsilon - 1)^2 ml^{2d+2\beta}\norm{\phi}_2^2.
\end{eqnarray}
Let 
\begin{eqnarray}
	l\sim (nm\epsilon^2)^{-\frac{1}{2(d+\beta)}},
\end{eqnarray}
then $\delta\lesssim 1$. Moreover, let $K\sim l^{\gamma \beta - d}$, then
\begin{eqnarray}
	\underset{\hat{\mathbf{v}}}{\inf}\underset{\mathbf{v}\in \mathcal{V}}{\sup}\mathbb{E}[\rho_H(\hat{\mathbf{v}}, \mathbf{v})]\gtrsim K\sim l^{\gamma \beta - d}.
\end{eqnarray}
From \eqref{eq:transform},
\begin{eqnarray}
	\underset{\hat{Y}}{\inf}\underset{Q\in \mathcal{Q}_\epsilon}{\inf}\underset{(p,\eta)\in \mathcal{P}_{cls}}{\sup} (R-R^*)\gtrsim l^{\beta+d} l^{\gamma \beta - d} = l^{\beta(1+\gamma)}\sim (nm\epsilon^2)^{-\frac{\beta(1+\gamma)}{2(d+\beta)}}.
\end{eqnarray}
\section{Nonparametric Regression}
\subsection{Algorithm Description}\label{sec:algreg}
Define $q_k$ and $Q_k$ in the same way as \eqref{eq:qk} and \eqref{eq:Qk}. Moreover, define
$p_k=\int_{B_k}f(\mathbf{x})d\mathbf{x}$, 
and
\begin{eqnarray}
	P_k=\int_{T_k} f(\mathbf{x})d\mathbf{x}-\int_{T_k^c} f(\mathbf{x})d\mathbf{x},
\end{eqnarray}
in which $T_k$ is defined in \eqref{eq:tk}, and $T_k^c$ is the complement. 

Denote
\begin{eqnarray}
	\eta_k:=\frac{q_k}{p_k}=\frac{\int_{B_k} f(\mathbf{x})\eta(\mathbf{x})d\mathbf{x}}{\int_{B_k} f(\mathbf{x})d\mathbf{x}},
	\label{eq:etak}
\end{eqnarray}
then $\eta_k$ can be viewed as the average of $\eta(\mathbf{x})$ weighted by the pdf. If $\eta$ is continuous and $l$ is sufficiently small, then $\eta(\mathbf{x})\approx \eta_k$ for all $\mathbf{x}\in B_k$. Hence, for any $\mathbf{x}\in B_k$, we can just estimate $\eta(\mathbf{x})$ by estimating $q_k$ and $p_k$. As has been discussed in the classification case, direct estimation is not efficient. Therefore, we estimate $\mathbf{Q}=(Q_1,\ldots, Q_K)$ and $\mathbf{P}=(P_1,\ldots, P_k)$ first, and then calculate $q_k$ and $p_k$ for $k=1,\ldots, K$.

\textbf{Training.} Recall that in the classification problem, we have divided the dataset into $K$ parts, which are used to estimate $Q_k$ for $k=1,\ldots, K$ respectively. For regression problem, we need to estimate both $Q_k$ and $P_k$. Therefore, now we divide the samples randomly into $2K$ groups, such that $K$ groups are used to estimate $Q_k$, $k=1,\ldots, K$, while the other $K$ groups are used to estimate $P_k$. The detailed steps are similar to the classification problem. In particular, $U_{ijk}$ is still calculated using \eqref{eq:U}. Since $\mathbb{E}[U_{ijk}]=Q_k$, $Q_k$ can still be estimated using \eqref{eq:Qkest}. To estimate $P_k$, let
\begin{eqnarray}
	V_{ijk}=\mathbf{1}(\mathbf{X}_{ij}\in T_k)-\mathbf{1}(\mathbf{X}_{ij}\in T_k^c).
	\label{eq:V}
\end{eqnarray}

Then we have $\mathbb{E}[V_{ijk}]=P_k$, and $|V_{ijk}|\leq 1$. Therefore, $P_k$ can be estimated similarly for $k=1,\ldots, K$:
\begin{eqnarray}
	\hat{P}_k=MeanEst1d(\{V_{ijk}|i\in S_{K+k}, j\in [m] \}).
	\label{eq:Pkest}
\end{eqnarray}

Note that samples are privatized in this step. With appropriate parameters, our method satisfies user-level $\epsilon$-LDP. Based on the values of $\hat{Q}_k$ and $\hat{P}_k$ for $k=1,\ldots, K$, $q_k$ and $p_k$ can be estimated by
\begin{eqnarray}
	\hat{\mathbf{q}}=\frac{1}{K}\mathbf{H}\hat{\mathbf{Q}},
	\hat{\mathbf{p}}=\frac{1}{K}\mathbf{H}\hat{\mathbf{P}},
\end{eqnarray}

in which $\hat{\mathbf{q}}=(\hat{q}_1,\ldots, \hat{q}_K)$, $\hat{\mathbf{p}}=(\hat{p}_1,\ldots, \hat{p}_K)$.

\textbf{Prediction.} For any test sample at $\mathbf{x}\in B_k$, The regression output is
\begin{eqnarray}
	\hat{\eta}(\mathbf{x})=\frac{\hat{q}_k}{\hat{p}_k}.
\end{eqnarray}

The whole training algorithm is summarized in Algorithm \ref{alg:reg}.
\begin{algorithm}[h!]
	\caption{Training algorithm of nonparametric regression under user-level $\epsilon$-LDP}\label{alg:reg}
	
	\textbf{Input:} Training dataset containing $n$ users with $m$ samples per user, i.e. $(\mathbf{X}_{ij}, Y_{ij})$, $i=1,\ldots, n$, $j=1,\ldots, m$\\
	\textbf{Output:} $\hat{\mathbf{q}}$, $\hat{\mathbf{p}}$\\
	\textbf{Parameter:} $h_q$, $h_p$, $\Delta_q$, $\Delta_p$, $l$
	\begin{algorithmic}
		\STATE Divide $\mathcal{X}=[0,1]^d$ into $B$ bins, such that the length of each bin is $l$;\\
		
		\STATE $K=2^{\lceil \log_2 B\rceil}$;\\
		
		\STATE Calculate $U_{ijk}$ according to \eqref{eq:U}, for $i=1,\ldots, n$, $j=1,\ldots, m$, $k=1,\ldots, K$; \\
		
		\STATE Calculate $V_{ijk}$ according to \eqref{eq:V}, for $i=1,\ldots, n$, $j=1,\ldots, m$, $k=1,\ldots, K$;
		
		\STATE Estimate $\hat{Q}_k$ using \eqref{eq:Qkest} with parameters $h_q$ and $\Delta_q$, for $k=1,\ldots, K$;\\
		
		\STATE Estimate $\hat{P}_k$ using \eqref{eq:Pkest} with parameters $h_p$ and $\Delta_p$, for $k=1,\ldots, K$;
		
		\STATE $\hat{\mathbf{q}}=\mathbf{H}_K\hat{\mathbf{Q}}/K$, in which $\hat{\mathbf{Q}}=(\hat{Q}_1,\ldots, \hat{Q}_K)^T$;\\
		
		\STATE $\hat{\mathbf{p}}=\mathbf{H}_K\mathbf{P}/K$, in which $\hat{\mathbf{P}}=(\hat{P}_1,\ldots, \hat{P}_K)^T$;
		
		\STATE \textbf{Return} $\hat{\mathbf{q}}$, $\hat{\mathbf{p}}$
	\end{algorithmic}		
\end{algorithm}
\subsection{Proof of Theorem \ref{thm:reg}}\label{sec:regub}
Define
\begin{eqnarray}
	\eta_k:=\frac{q_k}{p_k}
\end{eqnarray}
Recall the definition of $q_k$ and $p_k$, we have
\begin{eqnarray}
	\eta_k=\frac{\int_{B_k} f(\mathbf{x})\eta(\mathbf{x})d\mathbf{x}}{\int_{B_k}f(\mathbf{x})d\mathbf{x}},
\end{eqnarray}
and
\begin{eqnarray}
	\tilde{\eta}(\mathbf{x})=\sum_{k=1}^K \eta_k\mathbf{1}(\mathbf{x}\in B_k).
\end{eqnarray}
Then
\begin{eqnarray}
	R&=&\int(\hat{\eta}(\mathbf{x})-\eta(\mathbf{x}))^2 f(\mathbf{x})d\mathbf{x}\nonumber\\
	&\leq & 2\mathbb{E}\left[\int (\hat{\eta}(\mathbf{x})-\tilde{\eta}(\mathbf{x}))^2 f(\mathbf{x})d\mathbf{x} + 2\int (\tilde{\eta}(\mathbf{x})-\eta(\mathbf{x}))^2 f(\mathbf{x})d\mathbf{x}\right].
	\label{eq:Rdc}
\end{eqnarray}
The second term can be bounded with the following lemma.
\begin{lem}
	\begin{eqnarray}
		\int (\tilde{\eta}(\mathbf{x})-\eta(\mathbf{x}))^2 f(\mathbf{x})d\mathbf{x}\leq C_b^2 d^\beta l^{2\beta}.
	\end{eqnarray}
\end{lem}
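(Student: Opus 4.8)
The plan is to prove the stated bound pointwise on each bin and then integrate. First I would fix a bin $B_k$ and recall that on $B_k$ the function $\tilde\eta$ is the constant $\eta_k=\int_{B_k} f(\mathbf x')\eta(\mathbf x')\,d\mathbf x'\big/\int_{B_k} f(\mathbf x')\,d\mathbf x'$, which is well defined because $\int_{B_k}f\ge f_L l^d>0$ by Assumption \ref{ass:cls}(c), and which is a weighted average of the values $\{\eta(\mathbf x'):\mathbf x'\in B_k\}$ with weights $f(\mathbf x')/\int_{B_k}f$. Since each $B_k$ is a cube of side length $l$, its $\ell_2$-diameter is $\sqrt d\,l$, so the H\"older condition in Assumption \ref{ass:cls}(b) gives $|\eta(\mathbf x)-\eta(\mathbf x')|\le C_b\norm{\mathbf x-\mathbf x'}_2^\beta\le C_b(\sqrt d\,l)^\beta=C_b d^{\beta/2}l^\beta$ for all $\mathbf x,\mathbf x'\in B_k$.

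Next I would convert this into a bound on $|\tilde\eta(\mathbf x)-\eta(\mathbf x)|$ on $B_k$ using the weighted-average representation: for $\mathbf x\in B_k$,
\begin{eqnarray}
	|\eta_k-\eta(\mathbf x)|=\frac{\left|\int_{B_k}\big(\eta(\mathbf x')-\eta(\mathbf x)\big)f(\mathbf x')\,d\mathbf x'\right|}{\int_{B_k}f(\mathbf x')\,d\mathbf x'}\le \sup_{\mathbf x'\in B_k}|\eta(\mathbf x')-\eta(\mathbf x)|\le C_b d^{\beta/2}l^\beta.
\end{eqnarray}
Because the bins $B_1,\dots,B_B$ tile $[0,1]^d$, this pointwise estimate holds on the whole support, so $(\tilde\eta(\mathbf x)-\eta(\mathbf x))^2\le C_b^2 d^\beta l^{2\beta}$ everywhere. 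Integrating against the density $f$, which has total mass $1$, then yields $\int(\tilde\eta(\mathbf x)-\eta(\mathbf x))^2 f(\mathbf x)\,d\mathbf x\le C_b^2 d^\beta l^{2\beta}$, as claimed.

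There is essentially no hard step here; this is the standard bias-of-binning estimate in nonparametric regression. The only points requiring a little care are checking that $\eta_k$ is well defined (which is exactly where the strong density assumption $f\ge f_L$ enters) and writing the triangle-inequality step under the integral cleanly (equivalently one could invoke Jensen's inequality), together with the bookkeeping that the $\ell_2$-diameter of a side-$l$ cube in $\mathbb R^d$ is $\sqrt d\,l$, which produces the $d^{\beta/2}$ factor whose square gives the $d^\beta$ in the final bound. This bound will later be combined with the variance-type bound on $\int(\hat\eta-\tilde\eta)^2 f$ in \eqref{eq:Rdc} to control the overall regression risk $R$.
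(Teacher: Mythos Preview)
Your proof is correct and follows essentially the same approach as the paper: both bound $|\eta_k-\eta(\mathbf x)|$ on each bin by $C_b(\sqrt d\,l)^\beta$ via the H\"older condition and the $\ell_2$-diameter $\sqrt d\,l$ of a cube, then integrate the squared bound against $f$ and sum over bins. Your version is slightly more explicit in justifying the weighted-average step, but the argument is the same.
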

\begin{proof}
	\begin{eqnarray}
		\int (\tilde{\eta}(\mathbf{x}) - \eta(\mathbf{x}))^2 f(\mathbf{x})d\mathbf{x}&=&\sum_{k=1}^K \int_{B_k} (\eta_k-\eta(\mathbf{x}))^2 f(\mathbf{x})d\mathbf{x}\nonumber\\
		&\leq & \sum_{k=1}^K \int_{B_k} \left(C_b(\sqrt{d}l)^\beta\right)^2 f(\mathbf{x})d\mathbf{x}\nonumber\\
		&=&C_b^2 d^\beta l^{2\beta}\sum_{k=1}^K p_k\nonumber\\
		&=& C_b d^\beta l^{2\beta}.
		\label{eq:Rb}
	\end{eqnarray}
\end{proof}
It remains to bound the first term of \eqref{eq:Rdc}.
\begin{lem}
	Denote $E_{1qk}$ and $E_{1pk}$ as the event that the first stage in estimating $Q_k$ and $P_K$ are successful, respectively. Denote $E_{1q}=\cap_k E_{1qk}$, $E_{1p}=\cap_k E_{1pk}$. Then there exists two constants $C_1$ and $C_2$, such that
	\begin{eqnarray}
		\text{P}(|\hat{q}_k-q_k|>t|E_{1q})\leq 2\exp\left[-C_1\frac{mn\epsilon^2}{T^2\ln n}\left(t-C_2\sqrt{\frac{T}{mn}}\right)^2\right],
	\end{eqnarray}
	and
	\begin{eqnarray}
		\text{P}(|\hat{p}_k-p_k|>t|E_{1p})\leq 2\exp\left[-C_1 \frac{mn\epsilon^2}{\ln n} \left(t-C_2\sqrt{\frac{1}{mn}}\right)^2\right].
	\end{eqnarray}
\end{lem}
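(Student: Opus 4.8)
The plan is to reduce both tail bounds to Lemma~\ref{lem:muerr} applied one group at a time, and then use the orthogonality of the Hadamard transform to see the bin count $K$ cancel. First I would record the two facts that make the reduction possible: $|U_{ijk}|\le T$ (because $|Y_{ij}|<T$) and $|V_{ijk}|\le 1$, so $\hat Q_k$ is the output of Algorithm~\ref{alg:mean1d} run on a group $S_k$ of $|S_k|=n/(2K)$ users whose clipped user-means take values in $[-T,T]$, while $\hat P_k$ is the output on a group of the same size with values in $[-1,1]$ (see \eqref{eq:Qkest} and \eqref{eq:Pkest}). Since the $2K$ groups are disjoint, conditioning on $E_{1q}=\cap_k E_{1qk}$ leaves $\{\hat Q_l\}_{l=1}^K$ mutually independent, and symmetrically $\{\hat P_l\}_{l=1}^K$ are independent given $E_{1p}$.

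Next I would reuse the moment-generating-function estimate established inside the proof of Lemma~\ref{lem:muerr}: conditioned on its own stage-I success event, $\hat Q_l-\mathbb{E}[\hat Q_l]$ has MGF at most $\exp[(1/8+2/\epsilon^2)\,2a_q^2\lambda^2/n_0]$ for $|\lambda|\le n_0\epsilon/(2\sqrt2\,a_q)$, where $n_0=n/(2K)$ and $a_q=3h_q+2\Delta_q\sim T\sqrt{\ln(Kn)/m}$, while its bias is $O(\sqrt{T/(mn)})$ by the stage-I residual bound (cf. the derivation of \eqref{eq:bias}); the identical statement with $T$ replaced by $1$ holds for $\hat P_l$. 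Writing $\hat q_k-q_k=\tfrac1K\sum_{l=1}^K H_{kl}(\hat Q_l-Q_l)$ and multiplying the conditional MGFs over the independent index $l$ — substituting $\lambda=\mu H_{kl}/K$ in the $l$-th factor, which is admissible as long as $|\mu|\le n\epsilon/(2\sqrt2\,a_q)$, uniformly in $k$ — the $1/K^2$ from the transform and the factor $K$ from the sum collapse the effective variance proxy from $a_q^2/(n_0\epsilon^2)$ to $a_q^2/(Kn_0\epsilon^2)=2a_q^2/(n\epsilon^2)$, i.e.\ to order $T^2\ln(Kn)/(nm\epsilon^2)$. A Chernoff optimization then gives $\text{P}(|\hat q_k-q_k|>t\mid E_{1q})\le 2\exp[-C_1\frac{mn\epsilon^2}{T^2\ln(Kn)}(t-C_2\sqrt{T/(mn)})^2]$; absorbing $\ln(Kn)\lesssim\ln n$ — valid because $K\le 2B$ is polynomially bounded in $mn$ under the standing hypothesis $n(\epsilon^2\wedge1)\gtrsim\ln m+\ln n$ — yields the first claimed inequality, and repeating the argument verbatim with $T\mapsto 1$ for $\{\hat P_l\}$ and $E_{1p}$ yields the second.

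The routine parts — the precise constant in the bias term, the bookkeeping when $\epsilon$ or $K$ is not of the convenient form, and tracking the admissible range of the Chernoff parameter — I would handle exactly as in the proofs of Lemma~\ref{lem:muerr} and Lemma~\ref{lem:qerr}. The one step that deserves genuine care, and which I expect to be the main (if mild) obstacle, is making the ``$K$ cancels'' computation airtight: one must verify that after the substitution $\lambda=\mu H_{kl}/K$ the sub-exponential MGF bound remains valid simultaneously for all $l$, that the signs $H_{kl}\in\{\pm1\}$ are harmless (they merely flip the increments, which is immaterial for a symmetric two-sided tail), and that although the per-group biases may add constructively across $l$, after the $1/K$ normalization they still contribute only $O(\sqrt{T/(mn)})$ to the bias of $\hat q_k$. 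Everything else is a direct transcription of the one-dimensional analysis already carried out in Section~\ref{sec:clspf}.
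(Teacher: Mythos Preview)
Your proposal is correct and follows essentially the same route as the paper. The paper does not write out a separate proof for this regression lemma; it is treated as an immediate adaptation of Lemma~\ref{lem:qerr}, whose proof is precisely the one you sketch: apply Lemma~\ref{lem:muerr} to each disjoint group to get a sub-exponential MGF bound on $\hat Q_l-Q_l$ (respectively $\hat P_l-P_l$), use independence across groups, and then pass through the Hadamard average $\hat q_k=\tfrac{1}{K}\sum_l H_{kl}\hat Q_l$ so that the factor $K$ in the per-group variance proxy cancels against the $1/K^2$ from the transform, exactly as you describe. The only substantive changes from Lemma~\ref{lem:qerr} are the ones you identified --- $|U_{ijk}|\le T$ instead of $1$, and group size $n/(2K)$ instead of $n/K$ --- and you handle both correctly.
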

Denote 
\begin{eqnarray}
	\hat{\eta}_k=\frac{\hat{q}_k}{\hat{p}_k}.
\end{eqnarray}
Pick some constant $c$ such that $C_1c^2>1$, then define
\begin{eqnarray}
	t_p&=&C_2\sqrt{\frac{1}{mn}}+\frac{c\ln n}{\sqrt{mn\epsilon^2}},\\
	t_q&=&C_2\sqrt{\frac{T}{mn}}+\frac{cT\ln n}{\sqrt{mn\epsilon^2}}.
\end{eqnarray}
Then
\begin{eqnarray}
	\text{P}(|\hat{p}_k-p_k|>t_p|E_{1p})\leq 2e^{-C_1c^2\ln n} = 2n^{-C_1c^2},
\end{eqnarray}
and
\begin{eqnarray}
	\text{P}(|\hat{q}_k-q_k|>t_q|E_{1q})\leq 2n^{-C_1c^2}.
\end{eqnarray}
Denote $E$ as the event that for all $k$, $|\hat{p}_k-p_k|>t_p$, $\hat{q}_k-q_k|>t_q$. Then
\begin{eqnarray}
	P(E^c)&=&\text{P}(\exists k, |\hat{p}_k-p_k|>t_p \text{ or } |\hat{q}_k-q_k|>t_q)\nonumber\\
	&\leq& 4Bn^{-C_1c^2}+\text{P}\left(E_{p1}^c \cup E_{q1}^c\right)\nonumber\\
	&\leq & 4B\left(n^{-C_1c^2}+\sqrt{m} e^{-C_0n\epsilon^2}\right).
\end{eqnarray}
Hence
\begin{eqnarray}
	\mathbb{E}\left[\int (\hat{\eta}(\mathbf{x}) - \tilde{\eta}(\mathbf{x}))^2 f(\mathbf{x})d\mathbf{x} \mathbf{1}(E^c)\right] \leq T^2 \text{P}(E^c)\leq 4BT^2\left(n^{-C_1c^2}+\sqrt{m} e^{-C_0n\epsilon^2}\right).
	\label{eq:e}
\end{eqnarray}
With $C_1c^2\geq 1$, this term does not dominate.

Under $E$, we have
\begin{eqnarray}
	\int (\hat{\eta}(\mathbf{x}) - \tilde{\eta}(\mathbf{x}))^2 f(\mathbf{x})d\mathbf{x} &=& \sum_{k=1}^K \int_{B_k} (\hat{\eta}(\mathbf{x}) - \eta_k)^2 f(\mathbf{x})d\mathbf{x}\nonumber\\
	&=&\sum_{k=1}^K (\hat{\eta}_k-\eta_k)^2 \int_{B_k} f(\mathbf{x})d\mathbf{x}\nonumber\\
	&=&\sum_{k=1}^K p_k(\hat{\eta}_k-\eta_k)^2.
\end{eqnarray}
$\hat{\eta}_k-\eta_k$ can be bounded in both two sides:
\begin{eqnarray}
	\hat{\eta}_k-\eta_k &=& \frac{\hat{q}_k}{\hat{p}_k}\wedge T-\frac{q_k}{p_k}\nonumber\\
	&\leq & \frac{q_k+t_q}{p_k-t_p} - \frac{q_k}{p_k} = \frac{p_kt_q+q_kt_p}{p_k(p_k-t_p)},
\end{eqnarray}
and
\begin{eqnarray}
	\hat{\eta}_k-\eta_k&\geq& \frac{q_k-t_q}{p_k+t_p}-\frac{q_k}{p_k}\nonumber\\
	&=&-\frac{p_kt_q+q_kt_p}{p_k(p_k+t_p)}.
\end{eqnarray}
Note that $f(\mathbf{x})\geq f_L$, thus $p_k\geq f_Ll^d$. Ensure that $l$ is picked such that $f_Ll^d\geq 2t_p$. Then
\begin{eqnarray}
	|\hat{\eta}_k-\eta_k|\leq 2\frac{p_kt_q+q_kt_p}{p_k^2},
\end{eqnarray}
\begin{eqnarray}
	(\hat{\eta}_k-\eta_k)^2\leq 8 \left(\frac{t_q^2}{p_k^2}+\frac{q_k^2t_p^2}{p_k^4}\right).
\end{eqnarray}
Hence
\begin{eqnarray}
	\mathbb{E}\left[\int (\hat{\eta}(\mathbf{x}) - \tilde{\eta}(\mathbf{x}))^2 f(\mathbf{x})d\mathbf{x} \mathbf{1}(E)\right]&\leq& \sum_{k=1}^K p_k\left(\frac{t_q^2}{p_k^2} + \frac{q_k^2 t_p^2}{p_k^4}\right)\nonumber\\
	&\lesssim & \frac{\ln^2 n}{mn\epsilon^2 l^{2d}}.
	\label{eq:ec}
\end{eqnarray}
From \eqref{eq:e} and \eqref{eq:ec},
\begin{eqnarray}
	\mathbb{E}\left[(\hat{\eta}(\mathbf{x}) - \tilde{\eta}(\mathbf{x}))^2 f(\mathbf{x})d\mathbf{x}\right]\lesssim \frac{\ln^2 n}{mn\epsilon^2 l^{2d}}.
	\label{eq:Rv}
\end{eqnarray}
From \eqref{eq:Rdc}, \eqref{eq:Rb} and \eqref{eq:Rv}, 
\begin{eqnarray}
	R\lesssim \frac{\ln^2 n}{mn\epsilon^2 l^{2d}}+l^{2\beta}.
\end{eqnarray}
Let 
\begin{eqnarray}
	l \sim \left(\frac{mn\epsilon^2}{\ln^2 n}\right)^{-\frac{1}{2(d+\beta)}},
\end{eqnarray}
then
\begin{eqnarray}
	R\lesssim \left(\frac{mn\epsilon^2}{\ln^2 n}\right)^{-\frac{\beta}{d+\beta}}.
\end{eqnarray}
\subsection{Proof of Theorem \ref{thm:reglb}}\label{sec:reglb}
Similar to the classification case, divide support $\mathcal{X}=[0,1]^d$ into $B$ bins with length $l$, then $Bl^d=1$. Let $\phi(\mathbf{u})$ be some function supported at $[-1/2, 1/2]^d$, $\phi(\mathbf{u})\geq 0$, and for any $\mathbf{u}, \mathbf{u}'$,
\begin{eqnarray}
	|\phi(\mathbf{u})-\phi(\mathbf{u}')|\leq C_b\norm{\mathbf{u}-\mathbf{u}'}_2^\beta.
\end{eqnarray}
Suppose $\mathbf{c}_1,\ldots, \mathbf{c}_B$ be the centers of $B$ bins, $f(\mathbf{x}) = 1$, and
\begin{eqnarray}
	\eta(\mathbf{x})=\sum_{k=1}^B v_k\phi\left(\frac{\mathbf{x}-\mathbf{c}_k}{l}\right) l^\beta,
\end{eqnarray}
in which $v_k\in \{-1,1\}$. Then let 
\begin{eqnarray}
	\hat{v}_k=\underset{s\in \{-1,1\}}{\arg\min}\int_{B_k} \left(\hat{\eta}(\mathbf{x}) - s\phi\left(\frac{\mathbf{x}-\mathbf{c}_k}{l}\right)l^\beta\right)^2f(\mathbf{x})d\mathbf{x}.
\end{eqnarray}
Then
\begin{eqnarray}
	R&=&\mathbb{E}\left[\int (\hat{\eta}(\mathbf{x}) - \eta(\mathbf{x}))^2 f(\mathbf{x})d\mathbf{x}\right]\nonumber\\
	&=&\sum_{k=1}^B \mathbb{E}\left[\int_{B_k} \left(\hat{\eta}(\mathbf{x}) - \eta(\mathbf{x})\right)^2 f(\mathbf{x})d\mathbf{x}\right]\nonumber\\
	&\overset{(a)}{\geq} & \sum_{k=1}^B l^{2\beta+d}\norm{\phi}_2^2\text{P}(\hat{v}_k\neq v_k)\nonumber\\
	&=&\norm{\phi}_2^2 l^{2\beta+d}\mathbb{E}[\rho_H(\hat{\mathbf{v}}, \mathbf{v})].
	\label{eq:Rreg}
\end{eqnarray}
Here we explain (a). Without loss of generality, suppose $v_k=-1$, $\hat{v}_k=1$. Then
\begin{eqnarray}
	\int_{B_k} \left(\hat{\eta}(\mathbf{x}) - \phi\left(\frac{\mathbf{x}-\mathbf{c}_k}{l}\right)l^\beta\right)^2 f(\mathbf{x})d\mathbf{x}\leq\int_{B_k} \left(\hat{\eta}(\mathbf{x}) + \phi\left(\frac{\mathbf{x}-\mathbf{c}_k}{l}\right)l^\beta\right)^2 f(\mathbf{x})d\mathbf{x}.  
\end{eqnarray}
Note that
\begin{eqnarray}
	&&\int_{B_k}\left(\hat{\eta}(\mathbf{x}) - \phi\left(\frac{\mathbf{x}-\mathbf{c}_k}{l}\right) l^\beta\right)^2 f(\mathbf{x})d\mathbf{x} + \int_{B_k}\left(\hat{\eta}(\mathbf{x}) + \phi\left(\frac{\mathbf{x}-\mathbf{c}_k}{l}\right) l^\beta\right)^2 f(\mathbf{x})d\mathbf{x}\nonumber\\
	&=&2 \int_{B_k} \left(\hat{\eta}^2(\mathbf{x}) + \phi^2 \left(\frac{\mathbf{x}-\mathbf{c}_k}{l}\right)l^{2\beta}\right) f(\mathbf{x})d\mathbf{x}\nonumber\\
	&\geq & 2l^{2\beta + d}\norm{\phi}_2^2.
\end{eqnarray}
Thus
\begin{eqnarray}
	\int_{B_k} \left(\hat{\eta}(\mathbf{x}) + \phi\left(\frac{\mathbf{x}-\mathbf{c}_k}{l}\right) l^\beta\right)^2 f(\mathbf{x})d\mathbf{x}\geq l^{2\beta + d}\norm{\phi}_2^2.
\end{eqnarray}
Similar bound holds if $v_k=1$ and $\hat{v}_k=-1$. Now (a) in \eqref{eq:Rreg} has been proved. From \eqref{eq:Rreg},
\begin{eqnarray}
	\underset{\hat{\eta}}{\inf}\underset{(f, \eta)\in \mathcal{P}_{reg}}{\sup} R\geq \norm{\phi}_2^2 l^{2\beta+d}\underset{\hat{\mathbf{v}}}{\inf}\underset{\mathbf{v}}{\sup}\mathbb{E}[\rho_H(\hat{\mathbf{v}}, \mathbf{v})].
	\label{eq:hamming}
\end{eqnarray}
Define 
\begin{eqnarray}
	\delta = \underset{\mathbf{v}, \mathbf{v}': \rho_H(\mathbf{v}, \mathbf{v}') = 1}{\max} D(p_{\mathbf{Z}|\mathbf{v}}||p_{\mathbf{Z}|\mathbf{v}'}).
\end{eqnarray}
Follow the analysis of nonparametric classification, let
\begin{eqnarray}
	l\sim (nm\epsilon^2)^{-\frac{1}{2(d+\beta)}},
\end{eqnarray}
then $\delta\lesssim 1$. Hence, By \cite{tsybakov2009introduction}, Theorem 2.12(iv), 
\begin{eqnarray}
	\underset{\hat{\mathbf{v}}}{\inf}\underset{\mathbf{v}}{\sup}\mathbb{E}[\rho_H(\hat{\mathbf{v}}, \mathbf{v})]\gtrsim B\sim l^{-d},
\end{eqnarray}
hence from \eqref{eq:hamming},
\begin{eqnarray}
	\underset{\hat{\eta}}{\inf}\underset{(f, \eta)\in \mathcal{P}_{reg}}{\sup} R\gtrsim l^{2\beta+d} \cdot l^{-d} = h^{2\beta}\sim (nm\epsilon^2)^{-\frac{\beta}{d+\beta}}.
\end{eqnarray}
\section{Auxiliary Lemmas}
\begin{lem}\label{lem:pinsker}
	Suppose there are two probability measures $p_1$ and $p_2$ supported at $\mathcal{X}$. $p_1=p_2$ except at $S\subset \mathcal{X}$. Then
	\begin{eqnarray}
		\mathbb{TV}(p_1,p_2)\leq \sqrt{\frac{1}{2}p_1(S)D(p_1||p_2)}.
	\end{eqnarray}
\end{lem}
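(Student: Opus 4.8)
The plan is to reduce the claim to the classical Pinsker inequality applied to the distributions conditioned on $S$. First I would observe that since $p_1$ and $p_2$ agree on $\mathcal{X}\setminus S$, they assign the same mass there, hence $p_1(S)=p_2(S)=:\pi$. If $\pi=0$, then $\int_S|dp_1-dp_2|\le\int_S dp_1+\int_S dp_2=0$, so $\mathbb{TV}(p_1,p_2)=0$ and the bound holds trivially; thus I may assume $\pi>0$ in what follows.

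Next I would introduce the conditional probability measures $q_i(\cdot):=p_i(\cdot\cap S)/\pi$ for $i=1,2$, which are genuine distributions on $S$. Because $p_1-p_2$ vanishes outside $S$, the total variation distance localizes: $\mathbb{TV}(p_1,p_2)=\tfrac12\int_S|dp_1-dp_2|=\tfrac12\,\pi\int_S|dq_1-dq_2|=\pi\,\mathbb{TV}(q_1,q_2)$. Similarly the KL divergence localizes, since $\log(dp_1/dp_2)=0$ on $S^c$ and the factor $\pi$ cancels in the Radon--Nikodym derivative restricted to $S$: $D(p_1\|p_2)=\int_S\log\frac{dp_1}{dp_2}\,dp_1=\pi\int_S\log\frac{dq_1}{dq_2}\,dq_1=\pi\,D(q_1\|q_2)$.

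Then I would apply the usual Pinsker inequality to $(q_1,q_2)$, namely $\mathbb{TV}(q_1,q_2)\le\sqrt{\tfrac12 D(q_1\|q_2)}$, and substitute the two identities above:
\begin{eqnarray*}
\mathbb{TV}(p_1,p_2)=\pi\,\mathbb{TV}(q_1,q_2)\le\pi\sqrt{\tfrac12 D(q_1\|q_2)}=\pi\sqrt{\frac{1}{2\pi}D(p_1\|p_2)}=\sqrt{\frac{\pi}{2}D(p_1\|p_2)},
\end{eqnarray*}
which is exactly $\sqrt{\tfrac12 p_1(S)D(p_1\|p_2)}$ since $\pi=p_1(S)$. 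There is no substantial obstacle here; the only point requiring mild care is the measure-theoretic bookkeeping in the localization step — verifying that the Radon--Nikodym derivative of $p_1$ with respect to $p_2$ on $S$ agrees with that of $q_1$ with respect to $q_2$, and handling the degenerate case $\pi=0$ — after which everything follows from standard Pinsker.
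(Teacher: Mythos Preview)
Your proof is correct and follows essentially the same approach as the paper: both localize the total variation and KL divergence to the conditional distributions on $S$, apply the standard Pinsker inequality there, and convert back. Your version is slightly more explicit in justifying $p_1(S)=p_2(S)$ and handling the degenerate case $\pi=0$, but the argument is otherwise identical.
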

\begin{proof}
	Denote $\mathbb{E}_1$ as the expectation under $p_1$. Denote $p_{1|S}$ and $p_{2|S}$ as the conditional distribution of $p_1$ and $p_2$ on $S$.
	\begin{eqnarray}
		D(p_1||p_2)&=&\mathbb{E}_1\left[\ln \frac{p_1}{p_2}\right]\nonumber\\
		&=&p_1(S)\mathbb{E}_{1|S}\left[\ln \frac{p_{1|S}}{p_{2|S}}\right]\nonumber\\
		&\geq & 2p_1(S)\mathbb{TV}^2(p_{1|S}, p_{2|S})\nonumber\\
		&=&2p_1(S)\left[\frac{\mathbb{TV}(p_1,p_2)}{p_1(S)}\right]^2\nonumber\\
		&=&\frac{2\mathbb{TV}^2(p_1,p_2)}{p_1(S)}.
	\end{eqnarray}
	The proof is complete.
\end{proof}
\begin{lem}\label{lem:tail}
	Under Assumption \ref{ass:cls}(a), there exists a constant $C$, such that for any $s>0$, 
	\begin{eqnarray}
		\mathbb{E}\left[|\eta(\mathbf{X})|e^{-s|\eta(\mathbf{X})|^2}\right]\leq Cs^{-\frac{1}{2}(\gamma+1)}.
	\end{eqnarray}
\end{lem}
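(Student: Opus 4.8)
The plan is to express the expectation as an integral against the cumulative distribution function of $W:=|\eta(\mathbf{X})|$ and then use the polynomial bound $\text{P}(W\le u)\le C_a u^\gamma$, which follows from Assumption~\ref{ass:cls}(a) by taking $t\downarrow u$ in $\text{P}(W<t)\le C_a t^\gamma$. Setting $h(u)=ue^{-su^2}$ — so that $h(0)=0$, $h(\infty)=0$, and $h'(u)=(1-2su^2)e^{-su^2}$ is absolutely integrable on $[0,\infty)$ — I would write $h(W)=-\int_W^\infty h'(u)\,du$ and apply Fubini's theorem (legitimate since $\int_0^\infty|h'(u)|\,du<\infty$) to obtain
\[
\mathbb{E}\!\left[|\eta(\mathbf{X})|e^{-s|\eta(\mathbf{X})|^2}\right]=-\int_0^\infty h'(u)\,\text{P}(W\le u)\,du=\int_0^\infty (2su^2-1)e^{-su^2}\,\text{P}(W\le u)\,du .
\]

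The next step is to observe that on $\{u<1/\sqrt{2s}\}$ the integrand is nonpositive, so replacing the factor $2su^2-1$ by $2su^2$ throughout only increases the value; bounding $\text{P}(W\le u)\le C_a u^\gamma$ then gives $\mathbb{E}[\cdots]\le 2C_a\int_0^\infty s\,u^{\gamma+2}e^{-su^2}\,du$. A change of variables $v=\sqrt{s}\,u$ reduces the right-hand side to $C_a\Gamma\!\left(\tfrac{\gamma+3}{2}\right)s^{-(\gamma+1)/2}$, which is exactly the claimed bound with $C=C_a\Gamma\!\left(\tfrac{\gamma+3}{2}\right)$.

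The integration-by-parts/Fubini manipulation and the Gamma evaluation are routine. The one place that needs care — and is the main, though fairly mild, obstacle — is ensuring the bound holds uniformly in $s$, including large $s$: a naive truncation $\mathbb{E}[\cdots]=\mathbb{E}[\cdots\mathbf{1}(W<a)]+\mathbb{E}[\cdots\mathbf{1}(W\ge a)]$ with $a\sim 1/\sqrt{s}$ controls the first term by $C_a a^{1+\gamma}\sim s^{-(\gamma+1)/2}$ but only yields $O(s^{-1/2})$ for the second, which is too weak once $s>1$. The layer-cake representation above circumvents this because it weighs the full Gaussian decay $e^{-su^2}$ against the mass $\text{P}(W\le u)$ at every scale simultaneously, rather than freezing $e^{-su^2}$ at a single threshold. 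As an alternative one could instead dyadically decompose $[0,\infty)$ around the scale $1/\sqrt{s}$ and sum the resulting (super)geometric series, but the integral form is shorter.
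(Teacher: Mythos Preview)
Your argument is correct. The integration-by-parts identity $\mathbb{E}[h(W)]=\int_0^\infty(2su^2-1)e^{-su^2}\text{P}(W\le u)\,du$ is valid, the monotonicity step $2su^2-1\le 2su^2$ is legitimate since the remaining factor $e^{-su^2}\text{P}(W\le u)$ is nonnegative, and the Gamma evaluation is accurate, yielding $C=C_a\Gamma\!\left(\tfrac{\gamma+3}{2}\right)$.

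The paper takes a different route: it splits $e^{-s|\eta|^2}=e^{-s|\eta|^2/2}\cdot e^{-s|\eta|^2/2}$, absorbs the prefactor $|\eta|$ into one half via the pointwise bound $\sup_{u\ge 0}ue^{-su^2/2}=(es)^{-1/2}$ (producing the $s^{-1/2}$ factor), and then applies the layer-cake formula to the remaining $\mathbb{E}[e^{-s|\eta|^2/2}]$ together with Assumption~\ref{ass:cls}(a) to extract the $s^{-\gamma/2}$ factor, arriving at $C=2^{\gamma/2}C_a e^{-1/2}\Gamma(\gamma/2+1)$. Your approach is more direct---a single integration against the CDF rather than a split-and-bound---and avoids the mild loss from the pointwise supremum; the paper's decomposition, on the other hand, makes it transparent that the exponent $(\gamma+1)/2$ arises as the sum of $1/2$ (from the $|\eta|$ factor) and $\gamma/2$ (from the small-ball condition). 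Either method is perfectly adequate here.
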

\begin{proof}
	\begin{eqnarray}
		\mathbb{E}\left[|\eta(\mathbf{X})|e^{-s|\eta(\mathbf{X})|^2}\right]&=&\mathbb{E}\left[|\eta(\mathbf{X})|e^{-\frac{s}{2}|\eta(\mathbf{X})|^2}e^{-\frac{s}{2}|\eta(\mathbf{X})|^2}\right]\nonumber\\
		&\leq & \left(\underset{u\geq0}{\sup} ue^{-\frac{s}{2}u^2}\right) \mathbb{E}\left[e^{-\frac{s}{2}|\eta(\mathbf{X})|^2}\right]\nonumber\\
		&=&\frac{1}{\sqrt{s}}e^{-\frac{1}{2}}\mathbb{E}\left[e^{-\frac{s}{2}|\eta(\mathbf{X})|^2}\right]\nonumber\\
		&=&\frac{1}{\sqrt{s}}e^{-\frac{1}{2}}\int_0^1 \text{P}\left(e^{-\frac{s}{2}|\eta(\mathbf{X})|^2}> t\right) dt\nonumber\\
		&=&\frac{1}{\sqrt{s}}e^{-\frac{1}{2}}\int_0^1 \text{P}\left(|\eta(\mathbf{X})|<\sqrt{\frac{2\ln \frac{1}{t}}{s}}\right) dt\nonumber\\
		&\leq & \frac{C_a}{\sqrt{s}} e^{-\frac{1}{2}} \int_0^1 \left(\frac{2\ln \frac{1}{t}}{s}\right)^\frac{\gamma}{2} dt\nonumber\\
		&\leq & 2^\frac{\gamma}{2}C_ae^{-\frac{1}{2}}s^{-\frac{1+\gamma}{2}}\Gamma\left(\frac{\gamma}{2}+1\right),
	\end{eqnarray}
	in which $\Gamma(u)=\int_0^\infty t^{u-1} e^{-t} dt$ is the Gamma function.
\end{proof}

\end{document}